\theoremstyle{plain}
\newtheorem{theorem}{Theorem}[section]
\newtheorem{proposition}[theorem]{Proposition}
\newtheorem{lemma}[theorem]{Lemma}
\newtheorem{corollary}[theorem]{Corollary}
\theoremstyle{definition}
\newtheorem{definition}[theorem]{Definition}
\newtheorem{assumption}[theorem]{Assumption}
\theoremstyle{remark}
\newtheorem{remark}[theorem]{Remark}
\icmltitlerunning{Interventional Processes for Causal Uncertainty Quantification}
\providecommand{\ind}{{\perp\!\!\!\!\perp}}
\providecommand{\mc}{\mathcal}
\providecommand{\bb}{\mathbb}
\providecommand{\pa}{\textrm{pa}}
\providecommand{\argdot}{{\,\vcenter{\hbox{\tiny$\bullet$}}\,}}
\providecommand{\half}{^\frac{1}{2}}
\newcommand{\TV}{\mathrm{TV}}
\newcommand{\Var}{\mathrm{Var}}
\begin{document}

\twocolumn[
  \icmltitle{Interventional Processes For Causal Uncertainty Quantification}



  \icmlsetsymbol{equal}{*}

  \begin{icmlauthorlist}
    \icmlauthor{Hugh Dance}{g}
    \icmlauthor{Peter Orbanz}{g}
    \icmlauthor{Arthur Gretton}{g}
  \end{icmlauthorlist}

  \icmlaffiliation{g}{Gatsby Unit, University College London, London, United Kingdom}

  \icmlcorrespondingauthor{Hugh Dance}{uctphwd@ucl.ac.uk}

  \icmlkeywords{Machine Learning, ICML}

  \vskip 0.3in
]



\printAffiliationsAndNotice{}  

\begin{abstract}
Reliable uncertainty quantification for causal effects is crucial in
high-stakes applications, but remains challenging when the target is an entire function rather
than a scalar estimand. In this work, we introduce a GP-based approach for uncertainty quantification of interventional functions. The central idea is to build on recent work representing \emph{interventional} functions as an inner-product of \emph{observational} functions in a reproducing kernel Hilbert space (RKHS), by constructing appropriate GP priors for such functions and inferring posteriors from observational data. Our approach yields closed-form posterior
moments and tractable training and inference, while avoiding pathologies of previous GP prior constructions for RKHS functions. We further derive a practical procedure for posterior coverage calibration.
Across synthetic benchmarks, causal Bayesian optimization tasks, and a
large-scale real dataset, our method improves uncertainty
quantification while remaining competitive in causal effect estimation.
\end{abstract}

\section{Introduction}

Decisions in healthcare, economics, and public policy often hinge on estimating
the effects of interventions from observational data. In such high-stakes
settings, point estimates alone are often insufficient: decision-makers need reliable
uncertainty estimates to assess when causal conclusions may be unstable or
weakly supported.

For scalar average treatment effects with discrete treatments, doubly robust and
debiased estimators provide a principled route to asymptotically valid
confidence intervals
\cite{hines2022demystifying,kennedy2022semiparametric,chernozhukov2018double}.
For function-valued causal targets, such as conditional average treatment
effects and continuous-treatment dose-response functions, the situation is more
delicate. Pointwise evaluation of such functions is generally not a bounded
linear functional in the usual \(L^2\) sense. As a consequence, the Riesz
representers used in debiased inference need not exist without
additional structure
(see Sec 6 in \cite{singh2024kernel}). Existing frequentist approaches therefore use parametric, smoothness, or functional-form assumptions
\cite{kennedy2017non,oprescu2019orthogonal,semenova2021debiased,kennedy2023towards}. These approaches are also known to be
sensitive to weak overlap in finite samples. Taking a Bayesian approach is also difficult, since placing priors on the
conditional laws which identify interventional functions can make
posterior inference computationally intractable.

Recently, \citet{singh2024kernel} showed that many causal functions can be represented using functions in a reproducing kernel Hilbert space (RKHS). These RKHS functions can be estimated from observational data using non-parametric regression, avoiding full density estimation or propensity weighting while yielding estimators with strong theoretical guarantees. However, they do not address the problem of uncertainty quantification around such estimators. By reducing causal function estimation to nonparametric regression, this
approach suggests a natural route to functional uncertainty quantification via Gaussian processes.

Gaussian processes (GPs) are a standard tool for functional uncertainty
quantification in machine learning \cite{williams2006gaussian} and are closely
connected to RKHSs \cite{kanagawa2018gaussian}. A natural idea is therefore to place GP priors on the RKHS functions and infer posteriors on the interventional function using observational data. However, constructing GP priors for functions in a prescribed RKHS is known to be nontrivial \citep{lukic2001stochastic}. Previous work has attempted to do so using bespoke covariance constructions \cite{flaxman2016bayesian, chau2021bayesimp}. Unfortunately, the resulting kernels are only closed-form in special cases, and we later show they can induce \emph{underfitting} of causal functions (see \cref{fig:ablation}), and \emph{variance collapse} outside the training support (see \cref{fig:simulation}).

\paragraph{Our Contributions} 

In this work, we introduce a novel GP-based approach for quantifying uncertainty
over causal functions via RKHS representations. To avoid the pathologies of
earlier GP prior constructions for RKHS functions, we expand the relevant
function classes and use spectral representations of RKHS elements, allowing
standard GP priors to be used. Crucially, the posterior construction is chosen
so that its posterior mean recovers the original kernel causal estimator in \citet{singh2024kernel}, thereby
targeting uncertainty around this established point estimate. Our approach yields
closed-form posterior moments, tractable moment-matched credible regions, and
practical algorithms for hyperparameter training and posterior calibration.
Empirically, our method improves uncertainty quantification across
synthetic benchmarks, causal Bayesian optimization tasks, and a large-scale real
dataset, while remaining competitive in causal effect estimation.



\section{Background} \label{sec:background}

\subsection{Causal Effects as Interventional Functions}

We study the estimation of causal effects of a possibly continuous treatment
\(A \in \bb R\) on an outcome \(Y \in \bb R\), and
\emph{epistemic uncertainty} in these effects. We adopt the framework of causal
graphical models \citep{pearl2009causal}, where causal relations are represented
by a {\color{black}directed acyclic graph (DAG)} over observed variables \(O \in \mc O\) and
latent variables \(U \in \mc U\). Two examples are shown in
\cref{fig:causal_graphs_appendix}. In the observational distribution, each
variable is generated from its conditional distribution given its parents in the graph. A do-intervention, denoted \(\mathrm{do}(A=a)\),  simulates the effect of fixing $A=a$ by changing the conditional
distribution for \(A\) to a point mass at \(a\) (cutting all incoming
edges to \(A\)). The induced interventional distribution of outcomes is 
\(\bb P(Y \mid \mathrm{do}(A=a))\).

The effects we consider include average treatment effects
\(\mathrm{ATE}(a)=\bb E[Y\mid \mathrm{do}(A=a)]\), conditional average treatment
effects
\(\mathrm{CATE}(a,z)=\bb E[Y\mid \mathrm{do}(A=a),Z=z]\) (where $Z$ is a pre-treatment covariate), and effects on the
treated
\(\mathrm{ATT}(a,a')=\bb E[Y\mid \mathrm{do}(A=a),A=a']\). Under standard graphical identification criteria, such as the back-door and
front-door criteria reviewed in \cref{sec:app:cgm}, these quantities can be written in the form
\begin{align}
   \gamma(w,z)
   =
   \int_{\mc V}
   \bb E[Y\mid W=w,V=v]\,
   \bb P_{V\mid Z}(dv\mid z),
   \label{eq:causal_effect_of_interest}
\end{align}
where \(W,V,Z\) are (possibly empty) subsets of the observed variables $O$, that depend on the estimand and identifying formula.
For example, in the graph in
\cref{fig:causal_graphs_appendix} left, the conditional average effect of \(A\) on \(Y\) given \(C\) is given under the back-door criterion:
\[
    \mathrm{CATE}(a,c)
    =
    \int
    \bb E[Y| A=a,B=b,C=c]
    \bb P_{B| C}(db| c),
\]
in which case we set \(W=(A,C)\), \(V=B\), and \(Z=C\). Similarly, in the graph in \cref{fig:causal_graphs_appendix} right, the
average effect of setting \(A=a\) for units given \(A=a'\) is given by the front-door criterion:
\[
    \mathrm{ATT}(a,a')
    =
    \int
    \bb E[Y| A=a',M=m]\,
    \bb P_{M| A}(dm| a),
\]
in which case we set \(W=A\), \(V=M\), and \(Z=A\). 

We therefore take \(\gamma\) in
\eqref{eq:causal_effect_of_interest} as the central object of analysis, and refer to it as an \emph{interventional} or \emph{causal} function.
\begin{figure}
    \centering
   \begin{tikzpicture}[->,>=stealth',node distance=2cm,thick]
    \tikzstyle{every node}=[draw=black, circle, fill=white]
    \node (V) at (-0.5,1) {$B$};
    \node (Z) at (0.5,1) {$C$};
    \node (A) at (-1,0) {$A$};
    \node (Y) at (1,0) {$Y$};
    \draw [->] (V) -- (A);
    \draw [->] (V) -- (Y);
    \draw [->] (Z) -- (Y);
    \draw [->] (A) -- (Y);
    \draw [<-] (V) -- (Z);
    \draw [<->, dashed, blue, bend right=50] (V) to (A);
    \draw [<->, dashed, red, bend left=50] (Z) to (Y);
\end{tikzpicture}
\hspace{1cm}
\begin{tikzpicture}[->,>=stealth',node distance=2cm,thick]
    \tikzstyle{every node}=[draw=black, circle, fill=white]
    \node (A) at (0,0) {$A$};
    \node (Z) at (1.5,0) {$M$};
    \node (Y) at (3,0) {$Y$};
    \draw [->] (A) -- (Z);
    \draw [->] (Z) -- (Y);
    \draw [<->, dashed, blue, bend left=45] (A) to (Y);
\end{tikzpicture}
    \caption{Left: causal graph where $O = (A,B,C,Y)$ and $(B,C)$ satisfies the back-door criterion w.r.t. $(A,Y)$. Right: Causal graph where $O=(A,M,Y)$ and $M$ satisfies the front-door criterion w.r.t. $(A,Y)$. Definitions of these criteria are in \cref{sec:app:cgm}. Dashed edges = effect of unobserved confounders.}
    \label{fig:causal_graphs_appendix} \vspace{-10pt}
\end{figure}

\subsection{Kernel Methods for Interventional Functions}

Recently, \citet{singh2024kernel} reduced the problem of estimating 
\eqref{eq:causal_effect_of_interest} to non-parametric regression. The idea is 
to represent the function $(w,v) \mapsto \mathbb E[Y \mid W=w, V=v]$ using an element $f$ of the reproducing kernel Hilbert space (RKHS) 
$\mathcal H := \mathcal H_W \otimes \mathcal H_V$ \cite{steinwart2008support}:
\begin{align}
  \hspace{-5pt} \mathbb E[Y \mid W=w, V=v] 
  \;=\; \langle f, \psi_W(w)\otimes \psi_V(v)\rangle_{\mathcal H}\;.
   \label{eq:gretton1}
\end{align}
Here $\psi_W: \mc W \to \mc H_W$ and $\psi_V: \mc V \to \mc H_V$ are feature maps associated with the positive definite, bounded kernels $k_W: \mc W^2 \to \bb R$, $k_V: \mc V^2 \to \bb R$, via
$k_W(w,w') = \langle \psi_W(w), \psi_W(w') \rangle_{\mc H_W}$. Substituting \eqref{eq:gretton1} into  \eqref{eq:causal_effect_of_interest} gives the following representation of the causal function
\begin{align}
    \gamma(w,z) = \langle f, \psi_W(w) \otimes \mu(z)\rangle_{\mc H} \label{eq:causalKRR_1}
\end{align}
where 
\[ \mu : z \mapsto \bb E[\psi(V)|Z=z]\]
is an embedding of $\bb P_{V|Z}$ in $\mc H_V$, under the usual assumption that the kernels       are \emph{characteristic} \cite{park2020measure}. 

Given $n$ observations $\mc X^n := \{(Y_i,V_i,W_i,Z_i)\}_{i=1}^n$, one can estimate $f$ and $\mu$ using scalar-valued and vector-valued kernel ridge regressions respectively \cite{grunewalder2012conditional}, which results in a closed form estimator for $\gamma$,
\begin{align}\hat \gamma(w,z) = \bm \beta(z)^\top K_{V}\bm \alpha(w)\label{eq:krr}\end{align} 
Here $\bm \beta(z), \bm \alpha(w) \in \bb R^n$ are kernel regression weights and $K_V$ is the gram matrix of kernel evaluations with $(i,j)$ entry $k_V(V_i,V_j)$. This approach is fully non-parametric but avoids the need to estimate $\bb P_{V|Z}$ or use propensity weighting, therefore side-stepping the finite-sample instabilities of such methods \cite{li2023instability}. It also benefits from minimax-optimal rates for $f$ and $\mu$ \cite{fischer2020sobolev, li2024towards}, and has yielded state-of-the-art performance under continuous treatments \cite{singh2024kernel}.

\subsection{GPs for Uncertainty Quantification}
 A {Gaussian process} (GP) is a collection of random variables $(Y(x))_{x \in \mc X}$ such that any finite subcollection is jointly Gaussian \cite{kallenberg1997foundations}. We call $m: x \mapsto \mathbb EY(x)$ the mean function and $k: (x,x') \mapsto \bb Cov (Y(x), Y(x'))$ the covariance function of the GP and write $Y \sim \mc {GP}(m,k)$. GPs are a popular tool for modeling functional uncertainty in machine learning, offering good generalization properties with closed-form training and inference in many settings \cite{williams2006gaussian}. 
 
 Since, under the model \eqref{eq:causalKRR_1}, estimating the causal function $\gamma$ can be reduced to estimating the regression {functions}
\[
(w,v) \mapsto \langle f, \psi_W(w)\otimes \psi_V(v)\rangle 
\quad \text{and} \quad 
z \mapsto \mu(z),
\]
corresponding to the observational conditional expectations $\bb E[Y|W,V]$ and $\bb E[\psi_V(V)|Z]$, a natural idea to derive uncertainty estimates around the estimator \eqref{eq:krr} is to model these functions using GPs. However, constructing such priors is non-trivial. The sample paths of a GP with kernel $k_W \otimes k_V$ are almost surely too rough to admit an RKHS representation of the form $\langle f, \psi_W(\argdot)\otimes \psi_V(\argdot)\rangle$, despite being defined from the same kernels \cite{kanagawa2018gaussian}. Moreover, $\mu$ is an infinite-dimensional \emph{RKHS-valued} function rather than a scalar function, so standard GP priors do not apply.


\section{From Interventional Functions to Interventional Processes} \label{sec:method}

We now turn the interventional function \(\gamma\) into an
\emph{interventional process}: a stochastic process over intervention and
conditioning values induced by priors on the regression objects \(f\) and
\(\mu\). Rather than enforcing the original RKHS constraints on these objects
directly, we enlarge the relevant function classes and use spectral
representations of RKHS elements. This allows standard GP priors to be used while preserving
the causal representation of \(\gamma\). As shown in \cref{sec:posterior}, this
construction also precisely ensures that the posterior mean later recovers the kernel estimator in
\eqref{eq:krr}.

\subsection{Relaxing the RKHS Constraint for the $f$ Prior}

The model for 
$\bb E[Y|W,V]$ in \eqref{eq:gretton1} can be expressed as a linear functional 
$f:\mc H_W\otimes \mc H_V \to \bb R$ of the features of $(W,V)$: \vspace{-5pt}
\[
\bb E[Y|W,V] = f(\psi_W(W)\otimes \psi_V(V)).
\] \vspace{-20pt}

The RKHS constraint in \eqref{eq:gretton1} enforces that $f$ is a \emph{bounded} functional. 
Rather than constructing a bespoke GP prior to enforce this constraint, we instead place a simple linear-kernel GP prior directly on $f$: \vspace{-5pt}
\begin{align}
   \hspace{-7pt} f \sim \mc{GP}(0,\mc K),\quad 
   \mc K(h,h') = \langle h,h' \rangle_{\mc H_W\otimes \mc H_V}. \label{eq:isogp}
\end{align} \vspace{-20pt}

Here $h,h'$ are generic elements of $\mc H_W \otimes \mc H_V$. Sample paths of such a GP are almost surely \emph{not} bounded functionals \citep{dudley2010sample}, meaning the prior effectively enlarges the function class. 
A key benefit of this relaxation is it enables us to derive closed-form posteriors on $f$ with standard GP machinery. In particular, the induced prior on the observational conditional expectation is simply \vspace{-5pt}
\[
\bb E[Y \mid W=\argdot, V=\argdot] \;\sim\; \mc{GP}(0,\,k_W \otimes k_V),
\] \vspace{-20pt}

where $k_W$ and $k_V$ are the kernels associated with $\psi_W, \psi_V$. 
When coupled with a Gaussian noise model for $Y$ (see Section~\ref{sec:posterior}), the posterior for $f$ is also a GP, and its mean will coincide exactly with the kernel regression estimator for $f$ in \citet{singh2024kernel}. This reflects the general equivalence between GP posterior means and kernel estimators \citep{kanagawa2018gaussian}.

\subsection{Using the Spectral Representation for the $\mu$ Prior}
We next construct GP priors for $\mu=\bb E[\psi_V(V)\mid Z=\argdot]$. 
Since this function takes values in an infinite-dimensional RKHS, our key idea is to use a \emph{spectral representation} of the feature map. This reduces $\mu$ to a sequence of scalar-valued coordinates, on which we can place standard GP priors without constraint.

In particular, by Mercer’s theorem \citep{sun2005mercer}, if the kernel $k_V$ is continuous and bounded and $\mc V$ is $\sigma$-compact\footnote{e.g., if $\mc V$ is compact, countably discrete, or Euclidean.}, the feature map $\psi_V$ admits the expansion \vspace{-5pt}
\[
\psi_V(v) \;=\; k_V(\argdot,v) \;=\; \sum_{i=1}^\infty \lambda_{V,i}\,\phi_{V,i}(v)\,\phi_{V,i}(\argdot).
\] \vspace{-17pt}

with convergence absolute and uniform on compact sets. Here $(\lambda_{V,i},\phi_{V,i})$ are eigenpairs of the operator \vspace{-5pt}
\[
(T_{k_V,\nu} g)(v) \;=\; {\small{\text{$\int$}}} k_V(v,v') g(v')\,d\nu(v'),
\] \vspace{-18pt}

and $\nu$ is a full-support probability measure on $\mc V$, which we call the \emph{spectral} measure. Note that the operator and eigensystem depend on $\nu$. This expansion lets us reduce $\mu$ to a collection of \emph{scalar-valued} conditional expectations: \vspace{-5pt}
\begin{align}
\underbrace{\bb E[\psi_V(V)|Z]}_{\mu(Z)} = \sum_{i=1}^\infty \lambda_{V,i}\,\underbrace{\bb E[\phi_{V,i}(V)| Z]}_{:= \mu_i(Z)}\,\phi_{V,i}(\argdot), \label{eq:spectral_decomp}
\end{align} \vspace{-15pt}

Using this representation, we can simply place standard GP priors on these scalar-valued functions, \vspace{-5pt}
\begin{align}
\mu_i \sim \mc{GP}(0,k_Z), \quad \forall i \in \bb N. \label{eq:spectral_prior}
\end{align} \vspace{-20pt}

Note this prior ensures that $\mu(z)\in\mc H_V$ almost surely\footnote{By the reproducing property,  $\|\mu(z)\|^2_{\mc H_{V}} = \sum\nolimits_{i=1}^\infty \lambda_{V,i} \mu_i(z)^2$. Since $\bb E[\mu_i(z)^2] = 1$ and $\sum_i \lambda_{V,i} < \infty$, the series has finite mean by Fubini-Tonelli, so $\|\mu(z)\|^2_{\mc H_{V}} < \infty$ a.s.}. In \cref{app:krr_equivalence}, we show this prior also induces a posterior mean for $\mu(z)$ that agrees exactly with the vector-valued kernel ridge regression estimator for $\mu(z)$ used in \citet{singh2024kernel}, under the likelihood we later use for tractability. 

Although the eigensystem is typically unknown, specifying the prior this way will not impede tractable training and inference, as in practice almost all terms will reduce to kernel evaluations. Specifying the prior on spectral co-ordinates also enables us to derive a novel closed-form objective for hyperparameter optimization in \cref{sec:training}.

\paragraph{\textsc{IMPspec}: The Spectral Interventional Mean Process}  
We now combine the priors on \(f\) and \(\mu\) to obtain the induced prior over
the causal function \(\gamma\). Although these prior constructions enlarge the
original RKHS-constrained function classes, they preserve the linear structure
of the identifying formula. Indeed, substituting the outcome-regression
representation into \eqref{eq:causal_effect_of_interest} gives
\begin{align}
    \gamma(w,z) 
    &= \bb E_V\left[f\big(\psi_W(w)\otimes \psi_V(V)\big)\,\middle|\,Z=z\right]\,,
    \label{eq:expected_gp}
\end{align}
where the expectation is only over $V$. Passing the expectation over the linear functional\footnote{As exchanging the expectation with an \emph{unbounded} linear functional $f$ is not immediate, in \cref{appendix:proofs}, we show the constructions \eqref{eq:expected_gp} and \eqref{eq:impspec} converge in $L^2(\bb P_f)$.} $f$ gives the (spectral) interventional mean process, (\textbf{\textsc{IMPspec}} for short).
\begin{align}
    \gamma(w,z) 
    &=  f\big(\psi_W(w)\otimes \mu(z)\big)\,. \quad 
    \label{eq:impspec}
\end{align}
To illustrate how the \textsc{IMPspec} prior differs from a standard GP prior,
we draw (approximate) prior samples of \(\gamma(w,\bm z)\) at
\(\bm z=(z_1,\ldots,z_n)\) for fixed \(w\) using a finite spectral
truncation of the feature space\footnote{This truncation is used only to visualize prior samples.}, via the following steps:
\vspace{-5pt}
\begin{enumerate}[(i)]
    \item draw the first $m$ coordinates $\mu_i(\bm z) \sim\mc N(\bm 0, K_{Z})$, where $[K_{Z}]_{i,j} = k_Z(z_i,z_j)$; 
    \item form the $n \times n$ kernel matrix $K_{\mu}$, where 
\(
[K_{\mu}]_{j,k} \;=\; \sum_{i=1}^m \lambda_{V,i}\,\mu_i(z_j)\,\mu_i(z_k)
\);
\item  draw
\(
\gamma(w,\bm z) \mid \mu \sim \mc{N}(\bm 0,\,k_W(w,w)K_\mu)
\)
\end{enumerate} \vspace{-5pt} 
\cref{fig:IMPspec_samples} compares these samples, using Gaussian kernels
\(k_Z,k_W\) and eigenvalues \(\lambda_{V,i}\propto  e^{-\alpha i}\), with samples from
a standard GP with a Gaussian kernel. Since \(\gamma\) is an
`inner product' of GPs, it produces heavier-tailed sample paths.

\begin{figure}[t]
    \centering
    \includegraphics[width=\linewidth]{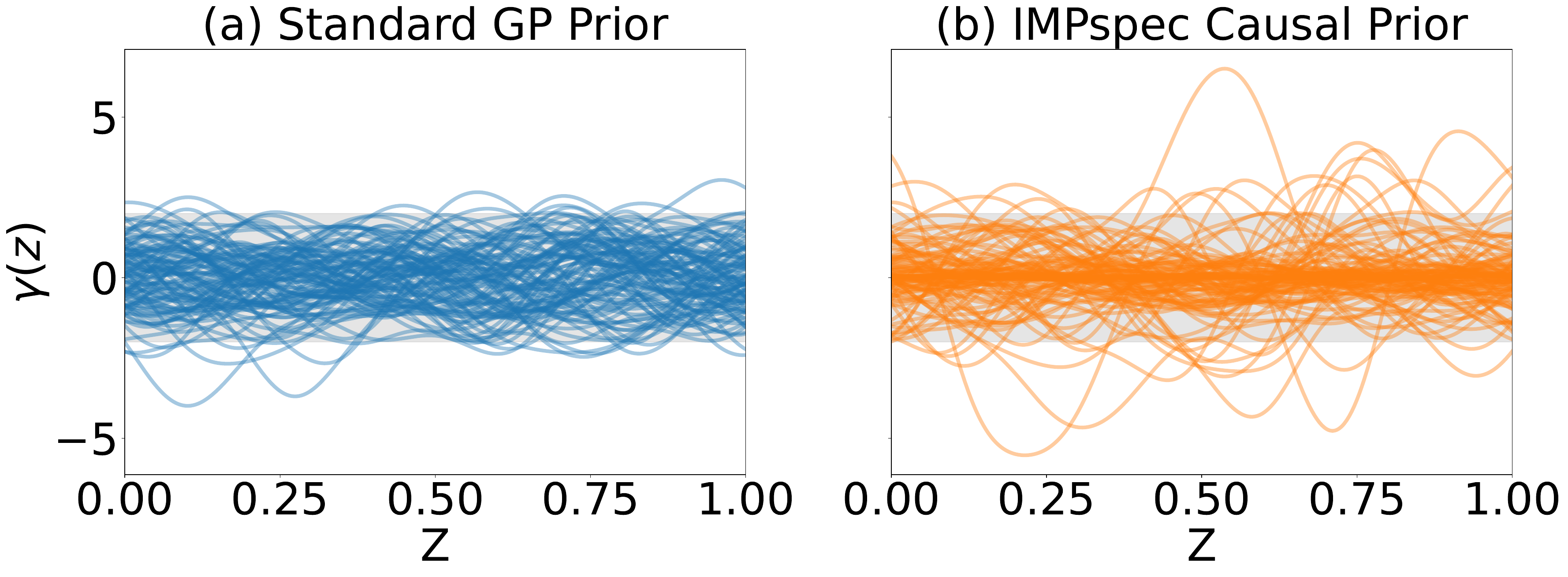}
    \vspace{-10pt}
    \caption{Samples from a standard GP prior (left) and \textsc{IMPspec} prior on $\gamma$ (right) for fixed $W=w$.}
    \label{fig:IMPspec_samples} \vspace{-10pt}
\end{figure}

\section{Posterior Inference for IMPspec} \label{sec:posterior}

We now derive our posterior inference scheme for \(\gamma\). To ensure that (i) posterior moments are exactly tractable, and (ii) the posterior mean we derive precisely coincides with the kernel estimator of
\citet{singh2024kernel}, we use the following likelihood models:
\begin{align}
     &Y  = f(\psi_W(W)\! \otimes \!\psi_V(V)) + U,
     \quad U \sim \mc N(0,\sigma^2), \label{eq:Pmodel}\\
     &\phi_{V,i}(V) = \mu_i(Z)+\xi_i,
     \quad \xi_i \sim \mc N(0,\eta^2), \quad i\in\bb N .
     \label{eq:Pmodel2}
\end{align}
These models should be interpreted as probabilistic surrogates for posterior
computation, rather than structural assumptions on the causal data-generating
process. \eqref{eq:Pmodel} is the standard additive Gaussian
observation model used in GP regression. \eqref{eq:Pmodel2} is less standard, but is essentially a coordinate-wise additive Gaussian model for
the spectral features of \(V\). It is exact under a
linear kernel when \(V=h(Z)+\xi\) with isotropic Gaussian noise, and is a
reasonable local approximation when the spectral coordinates \(\phi_{V,i}\) are
smooth and the conditional noise in \(V\) is small and approximately Gaussian.
In \cref{app:details:post_approx_error}, we analyze the sensitivity of the derived posterior moments to mis-specification in this likelihood. Our posterior calibration
procedure in \cref{sec:posterior:calibration} is designed to minimize any resulting coverage
error in practice.

\subsection{Posterior Moments for the Interventional Process}

The priors in \cref{sec:method} and likelihoods
\eqref{eq:Pmodel}--\eqref{eq:Pmodel2} make posterior inference tractable by
factorizing the problem into standard GP regression updates. In particular,
\eqref{eq:Pmodel} updates the posterior for the outcome-regression functional \(f\) from the data
\((Y,W,V)\), while \eqref{eq:Pmodel2} updates the posterior on each spectral coordinate
\(\mu_i\) from observations of \((\phi_{V,i}(V),Z)\). Thus, rather than
performing inference directly over the interventional function \(\gamma\), we
perform ordinary scalar GP updates for the components that determine it.

The interventional process is then obtained by composing these posterior
objects through the representation \eqref{eq:impspec}. In particular, its posterior mean and covariance can be derived by combining the Gaussian posterior updates for
\(f\) and the \(\mu_i\)'s, using the laws of total expectation and covariance to
integrate over the posterior uncertainty in \(\mu(z)\). The resulting
expressions are closed form and stated below; the full covariance formula and
derivation are given in \cref{appendix:proofs}.

\begin{restatable}{theorem}{posteriormoments}\label{thm:moment_derivation}
Under \eqref{eq:isogp}–\eqref{eq:Pmodel2}, the posterior mean and variance of $\gamma(w,z)\mid \mc X^n$ are given by
\begin{align}
    \mathbb E[\gamma(w,z)|\mc X^n] & =  \bm \beta(z)^\top K_{V}\bm\alpha(w)\label{eq:exp} \\
     \mathbb Var[\gamma(w,z)|\mc X^n] & = S_1 + S_2+S_3 \label{eq:var}
\end{align}
where 
\begin{align}
    & S_1  =  \bm \beta(z)^\top K_{V}(k(w,w)I - A(w)K_{V})\bm \beta(z) \label{eq:S1}  \\
    & S_2  = \hat k(z,z) Tr[\tilde K_{V}(\bm \alpha(w)\bm \alpha(w)^\top  - A(w))] \label{eq:S2} \\
    & S_3  = \tau (k(z,z) - \bm k(z)^\top \bm \beta(z))k(w,w)\label{eq:S3} \\
    & \bm \alpha(w)  = D(w)(K_{W}\odot K_{V} + \sigma^2I)^{-1}\bm Y \label{eq:alpha}\\
    & A(w)  = D(w) (K_{W} \odot K_{V} + \sigma^2I)^{-1}D(w) \label{eq:A}\\
    & \bm \beta(z)  = (K_{Z} + \eta^2I)^{-1}\bm k(z) \label{eq:beta}
\end{align}
and, for $x \in \{w,v,z\}$ we use the definitions
\begin{align*}
    & D(w) := \mathrm{diag}(\bm k(w)) \quad  
     \bm k(x) := [k_X(x,X_i)]_{i=1}^n , \\
    & \tilde K_{V} := \textstyle\int \bm k(v)\bm k(v)^\top d \nu(v) \quad 
     \tau := \textstyle\sum\nolimits_{i \in \bb N}\lambda_i ,
\end{align*}
\end{restatable}
\vspace{-5pt}

Note that the only terms in \cref{thm:moment_derivation} which may not have closed-form are the eigenvalue sum $\tau$ in $S_3$ and the matrix $\tilde K_{V}$ in $S_2$. The former is equal to $k_V(0,0)$ for any stationary $k_V$. The latter is only computed once at inference time, and can be approximated to arbitrary precision using a very large number of samples from the spectral measure $\nu$ (e.g., $m \gg 10^6$) at a cost of $\mc O(m)$ and error of $\mc O_p(m^{-1/2})$---or computed in closed form if available. In \cref{appendix:proofs} we also present equivalent formulae for incremental effects and averages of such effects. 

We also note the posterior mean \eqref{eq:exp} matches the estimator for $\gamma$ in \cite{singh2024kernel}, \eqref{eq:krr}, with the noise variances playing the role of the regularization hyperparameters. As such, our method can be viewed as characterizing uncertainty around their estimator. This would not be possible without the function class expansion used to specify our priors. Had we instead constructed GP priors enforcing the original RKHS constraints, the posterior mean may be over-smoothed; see Sec 4.  \citet{kanagawa2018gaussian}).

\subsection{Posterior Credible Interval Construction} \label{sec:posterior:interval}
To meaningfully measure uncertainty over $\gamma(w,z)$, we require posterior credible intervals (i.e., posterior intervals of a given probability mass), not just posterior variances. Since these cannot be derived in closed form, we approximate them by constructing a posterior GP for $\hat \gamma$ with mean and covariance given by the true moments.
\begin{align*} \hat \gamma & \sim \mc {GP} (\hat m, \hat \kappa ) \\
\hat m(w,z) & = \bb E[\gamma(w,z)\mid \mc X^n], \\
\hat \kappa((w,z), (w',z')) & = \bb Cov[\gamma(w,z),\gamma(w',z')\mid \mc X^n] \,. \end{align*}
Equivalently, \(\hat\gamma\) is the forward-KL projection of the true posterior
process onto the family of Gaussian processes on \(\mc W\times\mc Z\):
\begin{align}
\hat \gamma
\in
\arg\min_{\tilde\gamma\sim \mc{GP}_{\mc W\times\mc Z}}
\mathrm{KL}\!\left(
P_{\gamma\mid \mc X^n}\,\middle\|\,\bb P_{\tilde\gamma}
\right).
\label{eq:kl_projection}
\end{align}
Indeed, for each finite set of evaluation points, the Gaussian distribution
minimizing
\(\mathrm{KL}(\bb P_{\gamma\mid \mc X^n}\|\bb Q)\) over Gaussian \(\bb Q\) is the one with
matching mean and covariance. Thus, our credible intervals are obtained from a
moment-matched Gaussian approximation to the posterior, analogous to
variational inference but using the forward KL direction.

Algorithm \ref{alg:post} outlines how to construct credible intervals containing the central $\alpha\%$ posterior mass of $\gamma(w,z)$ at a grid of test points. The main cost is the usual $\mc O(n^3)$ from GP matrix inversions, but these can be amortized outside the loop over test points.

\begin{algorithm}[t]
\caption{$\alpha$-Credible Interval for $\gamma$}
\label{alg:post}
\begin{algorithmic}[1]
\REQUIRE
Data $\mc D_n=\{(w_i,v_i,z_i,y_i)\}_{i=1}^n$; kernels $k_W, k_V, k_Z$; noise variances
$\sigma^2, \eta^2$; test points $\{(w_m,z_m)\}_{m=1}^M$; probability level $\alpha$;
spectral measure $\nu$; number of Monte Carlo samples $S$
\ENSURE
Posterior means $\{\hat\mu_m\}$ and credible intervals $\{\mathrm{CI}_m\}$

\STATE Compute $n \times n$ kernel matrices $K_V, K_W, K_Z$
\STATE Sample $(v_s)_{s=1}^S \sim \nu$
\STATE Compute $n \times S$ matrix $K_{VV_s}$ with entries
$[K_{VV_s}]_{i,s} = k_V(v_i, v_s)$
\STATE $\tilde K_V \gets \frac{1}{S} K_{VV_s} K_{VV_s}^\top$, \quad
$\tau \gets k_V(0,0)$

\FOR{$m = 1$ \TO $M$}
  \STATE Compute $\bm{\alpha}(w_m)$ \eqref{eq:alpha},
  $A(w_m)$ \eqref{eq:A}, $\bm{\beta}(z_m)$ \eqref{eq:beta}
  \STATE Compute $S_1$ \eqref{eq:S1}, $S_2$ \eqref{eq:S2}, and $S_3$ \eqref{eq:S3}
  \STATE $\hat\mu_m \gets$ posterior mean at $(w_m, z_m)$ via \eqref{eq:exp}
  \STATE $\hat\sigma_m^2 \gets$ posterior variance at $(w_m, z_m)$ via \eqref{eq:var}
  \STATE $c \gets \Phi^{-1}\!\left(\tfrac{1+\alpha}{2}\right)$
  \STATE $\mathrm{CI}_m \gets [\hat\mu_m \pm c\,\hat\sigma_m]$
\ENDFOR

\end{algorithmic}
\end{algorithm}

\section{Hyperparameter Training and Calibration} \label{sec:training}

In this section, we derive hyperparameter training and posterior calibration algorithms for \textsc{IMPspec}.

 \subsection{Hyperparameter Tuning via Marginal Likelihood} \label{sec:posterior:hyperparameters}
Kernel hyperparameters and noise variances strongly determine both the
generalization and uncertainty quantification performance of a GP, and are commonly selected by
maximizing the marginal likelihood. For the outcome-regression model
\eqref{eq:Pmodel}, this is standard: the model in \eqref{eq:Pmodel} and GP prior on $f$ induces the usual GP regression (GPR) of $Y$ on $W$, $V$. The hyperparameters $(\theta_W, \theta_V)$ of $(k_W$, $k_V)$ and noise variance $\sigma^2$ can therefore be optimized via the standard marginal log-likelihood (MLL):
\begin{align}
 \log p(\bm Y|\bm W,\!\bm V)= \log \mc N(\bm Y|\bm 0,K_{W} \odot K_{V}+\sigma^2I) \label{ML_obj}
\end{align}
Tuning the hyperparameters \(\theta_Z\) of \(k_Z\) and the feature-noise
variance \(\eta^2\) in \eqref{eq:Pmodel2} is less straight forward. Directly maximizing the
MLL for each coordinate \(\phi_{V,i}\) is infeasible because the
coordinates are generally not available in closed form. However, for any stationary
\(k_V\), the spectral decomposition lets us aggregate these coordinate
likelihoods into a single tractable weighted objective. To see this, note that for a fixed coordinate,
\eqref{eq:Pmodel2} gives
\[
\log p(\bm \phi_{V,i}\mid \bm Z)
=
c-\frac{1}{2}\log |\bar K_Z|
-\frac{1}{2}
\mathrm{Tr}\!\left[
\bar K_Z^{-1}\bm\phi_{V,i}\bm\phi_{V,i}^{\top}
\right],
\]
where 
\(\bm\phi_{V,i}=(\phi_{V,i}(V_1),\ldots,\phi_{V,i}(V_n))\),
\(\bar K_Z=K_Z+\eta^2 I\), and \(c=-n\log(2\pi)/2\). Scaling by
\(\lambda_{V,i}\) and summing over coordinates gives the weighted log-likelihood: \vspace{-5pt}
\begin{align}
\mathrm{WLL}_{\bm\phi}(\theta_Z,\eta)
&:=
\sum_{i=1}^{\infty}
\lambda_{V,i}\log p(\bm \phi_{V,i}\mid \bm Z) \nonumber\\
=
\tau c &
-\frac{1}{2}\tau \log|\bar K_Z|
-\frac{1}{2}
\mathrm{Tr}\!\left[
\bar K_Z^{-1}K_V
\right],
\label{weighted_ML_obj}
\end{align}
Note, the last line used the fact that
\(K_V=\sum_{i=1}^{\infty}\lambda_{V,i}
\bm\phi_{V,i}\bm\phi_{V,i}^{\top}\) and
\(\tau:=k_V(0,0)=\sum_i\lambda_{V,i}\).

Both objectives \eqref{ML_obj} and \eqref{weighted_ML_obj} have a closed form and so can be optimized using any gradient descent algorithm. As they have a complexity of $\mc O(n^3)$, for large datasets (e.g. $n\geq 10^4$) we estimate stochastic gradients on data minibatches, which has been successfully used in previous work using GPs and kernels \citep{chen2020stochastic,jankowiak2021scalable, dance2022fast, dance2024counterfactual}.


\subsection{Calibration via Spectral Representation Learning} \label{sec:posterior:calibration}

Since our posterior intervals rely on surrogate likelihoods and a moment-matched
Gaussian approximation, their nominal credible level need not match their
frequentist coverage. We therefore calibrate\footnote{A posterior \(\alpha\)-credible interval
\(C_\alpha(\mc X^n)\) for parameter \(\theta\) is \emph{calibrated} if it has
the asserted coverage in repeated experiments, i.e.
\(\bb P_{\mc X_n}(\theta \in C_\alpha(\mc X^n)) = \alpha\)
\citep{rubin1984bayesianly}.} the posterior directly by tuning the spectral measure \(\nu\), which remains a free parameter of
\textsc{IMPspec} and controls posterior variance through \(\widetilde K_V\).

Let $\text{CI}_{w,z,\alpha}(\nu)$ be the $\alpha$-credible interval for $\gamma(w,z)$ under spectral measure $\nu$, and $M = \{\nu_\beta: \beta \in \mc B\}$ be a parametric model for $\nu$. Our goal is to choose $\beta^*$ to minimise the \emph{average posterior calibration error} across a grid of test values $T = (w_m,z_m)_{m=1}^M$ and coverage levels $I = (\alpha_l)_{l=1}^L$, \vspace{-12.5pt}
\begin{align}
   \sum_{(w,z) \in T} \sum_{\alpha \in I} \left|\bb P_{\mc X^n}(\gamma(w,z) \in \text{CI}_{w,z,\alpha}(\nu_\beta)) - \alpha\right| \label{eq:cal}
\end{align}
\vspace{-17.5pt}

For instance, if purely interested in 95\% credible regions one could just fix $I = \{0.95\}$ and vary $w,z$ discretely and uniformly over a range of interest. Our experiments focus on the case where $V$ is continuous and so we use the parameterisation $\nu_\beta = \mc N(\bar V, \beta^2I)$, where $\bar V = \tfrac 1n \sum_{i=1}^n V_i$. For discrete variables, one could specify $\beta \in \triangle_k$ for the finite case and Poiss($\beta$) for the countably infinite case.

\begin{figure*}[!t]
    \centering
    \includegraphics[width=\linewidth]{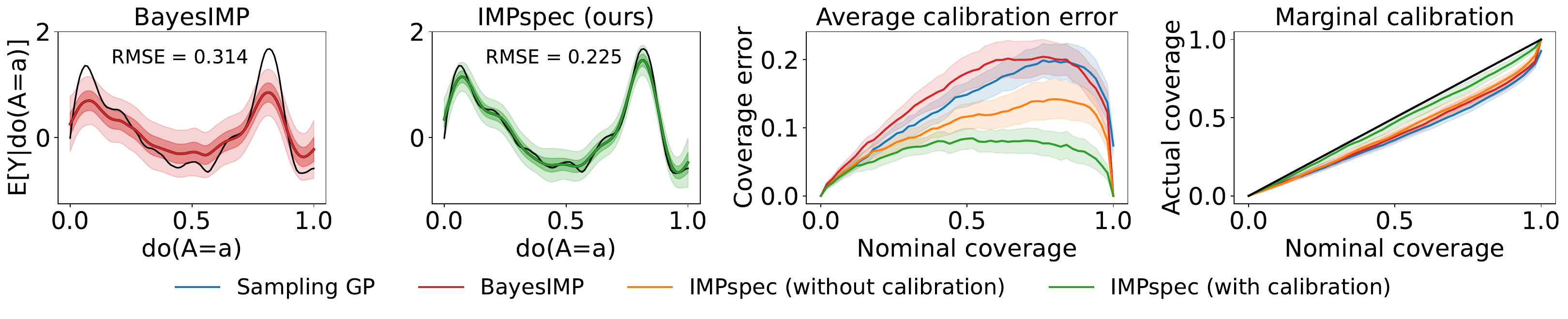}
    \vspace{-15pt}
    \caption{{\textbf{Toy Example}}. Left and Middle Left: Estimated dose-response curve $\bb E[Y|\mathrm{do}(A=a)]$ and average RMSE, using BayesIMP \cite{chau2021bayesimp} (red) and our method \textsc{IMPspec} (green). True dose-response curve = black line, posterior mean = colored lines, light shading = 90    \% credible intervals, dark shading 
 = interquartile range. All quantities are averaged over 50 trials. BayesIMP underfits the causal function, plausibly due to the non-stationarity of the kernel used (see \cref{sec:background:related_work} and \cref{appendix:details}).  Right and Middle Right: Calibration plots of different GP-based methods over 50 trials, with 95\% confidence intervals (shaded regions) estimated using 100 bootstrap replications. \textsc{IMPspec} is best calibrated and is significantly improved by optimizing the spectral representation.}
\label{fig:ablation}
\vspace{-5pt}
\end{figure*}
 
\paragraph{Estimating the Calibration Loss} Following the framework of \citet{syring2019calibrating} for posterior calibration, we estimate \eqref{eq:cal} using the empirical bootstrap estimator for ${\bb P}_{\mc X^n}$ and a plug-in estimator for $\gamma(w,z)$. For the latter we use the estimator in \citet{singh2024kernel}, as it is equivalent to the posterior mean of \textsc{IMPspec}. We then minimize this estimator for  \eqref{eq:cal} via grid-search over $\beta$. Algorithm \ref{alg:calibration} summarizes the steps.

In \cref{appendix:proofs}, we prove that under smoothness conditions and a
sample-splitting criterion, our calibration loss estimator is consistent
whenever the bootstrap and plug-in estimators are individually consistent.
Under standard argmin-consistency conditions for M-estimation, this implies that
grid search asymptotically selects the best-calibrated \(\nu_\beta\) within the
candidate family.

\begin{algorithm}[t]
\caption{Calibration of the Spectral Measure}
\label{alg:calibration}
\begin{algorithmic}[1]
\REQUIRE
Data $\mc D_n$; spectral parameter grid $\mc B$; test-point grid $T$;
interval grid $I$; number of bootstrap replications $B$
\ENSURE
Calibrated spectral parameter $\beta^*$

\STATE Compute fixed plug-in estimates
$\hat\gamma_{w,z}$ for all $(w,z)\in T$

\FOR{$\beta \in \mc B$}
  \FOR{$(w,z,\alpha) \in T \times I$}
    \FOR{$b = 1$ \TO $B$}
      \STATE Draw bootstrap sample $\mc D_n^{(b)}$ from $\mc D_n$
      \STATE Compute
      $\mathrm{CI}^{(b)}_{w,z,\alpha}(\nu_\beta)$ using Alg~\ref{alg:post}
    \ENDFOR
    \STATE
    $\widehat\Delta_{w,z,\alpha, \beta} \!\gets\!
    \Big|
    \frac{1}{B}\!\sum_{b=1}^{B}\!\!
    \mathbbm{1}\!\left\{
      \hat\gamma_{w,z}
      \!\in\!
      \mathrm{CI}^{(b)}_{w,z,\alpha}(\nu_\beta)
    \right\}
    \!-\! \alpha
    \Big|$
  \ENDFOR
  \STATE
  $\widehat L(\beta) \gets
  \sum_{(w,z,\alpha)\in\Theta}
  \widehat\Delta_{w,z,\alpha,\beta}$
\ENDFOR

\STATE $\beta^* \gets \arg\min_{\beta\in\mc B}\, \widehat L(\beta)$
\end{algorithmic}
\end{algorithm}

\section{Related Work} \label{sec:background:related_work}

\paragraph{Bayesian Methods for Uncertainty Quantification}
Our work is situated in the wider literature on Bayesian methods for causal
uncertainty quantification
\citep{alaa2017bayesian,hahn2020bayesian,witty2020causal,oganisian2021practical,chau2021bayesimp}.
Several works use GPs for causal uncertainty: \citet{alaa2017bayesian} use
multi-task GPs for individualized treatment-effect uncertainty, while
\citet{witty2020causal} use GP priors in a structural causal model with latent
confounding. These works target \emph{individual-level} counterfactual uncertainty,
whereas we quantify uncertainty over identified interventional functions describing \emph{conditional average} effects. 

The closest method to ours is \citet{chau2021bayesimp} (``BayesIMP''), who also
use GPs to quantify uncertainty around causal functions in RKHSs. BayesIMP
constructs bespoke covariance kernels to make the GP prior compatible with the
original RKHS. While elegant, this introduces practical
difficulties: the modified kernels are closed-form only in special cases, can
become pathologically nonstationary for radial base kernels  leading to
underfitting, and require
finite-dimensional variance approximations that can collapse outside the
training support. We analyze these effects in \cref{app:details:bayesimp} and
observe them empirically in \cref{fig:ablation,fig:simulation}. Our method
instead expands the relevant function classes and uses spectral
representations, enabling standard GP priors and closed-form posterior moments.

\paragraph{Frequentist Methods for Causal Effect Functions}
Several frequentist methods have been developed for uncertainty quantification
of causal effect functions, such as CATEs and dose-response curves
\citep{kennedy2017non,oprescu2019orthogonal,semenova2021debiased,kennedy2023towards}.
These approaches typically introduce smoothness, parametric, or functional-form
assumptions to obtain valid inference despite the lack of standard Riesz
representers. Our method instead uses spectral GP priors to build posterior
uncertainty around RKHS estimators of such functions. \vspace{-5pt}

\paragraph{Causal Bayesian Optimization}
Our work is also related to causal Bayesian optimization, where observational
data and causal structure guide intervention search
\citep{aglietti2020causal,chau2021bayesimp}. In this setting,
\textsc{IMPspec}'s posterior GP from observational data can be used as an informative
surrogate initialization for intervention search.

\section{Experiments} \label{sec:experiments}
We now evaluate \textsc{IMPspec} in several synthetic experiments and a real-world dataset.\footnote{{\color{black}Code:
\url{https://github.com/HWDance/impspec}.}} Implementation details for all methods and experiments are in \cref{appendix:experiments}.

\subsection{A Toy Example}

\paragraph{Experiment} We first compare \textsc{IMPspec} against related GP-based methods (BayesIMP \cite{chau2021bayesimp} and the sampling-based GP of \citet{witty2020causal}) in a toy simulation. We generate two separate datasets, $\mc D_1=\{M^{(1)}_i,Y^{(1)}_i\}_{i=1}^{100}$ and $\mc D_2=\{A^{(2)}_j,M^{(2)}_j\}_{j=1}^{100}$, from a structural model $M=f(A)+\xi$, $Y=g(M)+U$ with independent noise $\xi \sim \mc N(0,\sigma^2_\xi)$, $U\sim\mc N(0,\sigma^2_U)$ and aim to estimate the posterior over $\mathbb E[Y\mid \mathrm{do}(A=a)]$ for different values of $a$. Although in this model $\mathbb E[Y\mid \mathrm{do}(A=a)]=\mathbb E[Y\mid A=a]$, since we only observe $(M,Y)$ and $(A,M)$ separately each method must use the two-step formula
\(
\mathbb E[Y \mid \mathrm{do}(A=a)] = \int \mathbb E[Y\mid M=m]\,dp(m\mid a)
\).
Thus, for \textsc{IMPspec} we set $(W,V,Z):=(\emptyset,M,A)$. \vspace{-5pt}

\begin{figure*}[t]
    \centering
    \includegraphics[width = \textwidth]{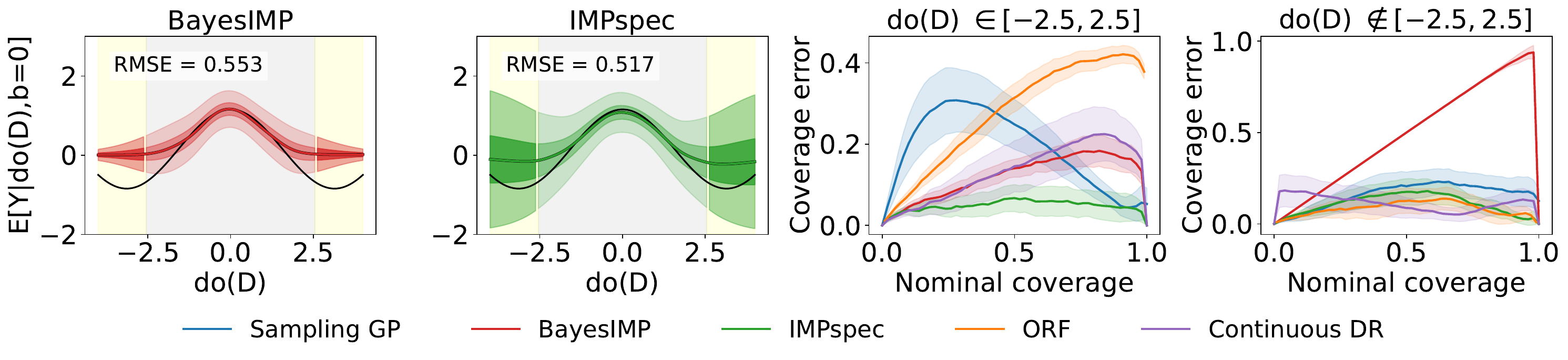}
    \vspace{-15pt}
    \caption{\textbf{Synthetic Benchmark}. Left + Middle-Left = trial-averaged posterior mean and credible intervals (50\% and 95\%) for conditional dose-response curve $\bb E[Y|\mathrm{do}(D=d), b=0]$ (black) using BayesIMP (red) and our \textsc{IMPspec} (green). Grey region = support of $\bb P_D$. Right + Middle-Right = calibration error of different methods in- and out-of-distribution. BayesIMP's  posterior variance collapses out-of-distribution due to the finite-dimensional approximations used. \textsc{IMPspec} is best calibrated overall.}    \label{fig:simulation} \vspace{-10pt}
\end{figure*}

\begin{table}[t]
    \centering
\resizebox{0.5\textwidth}{!}{
\begin{tabular}{cccc}
\toprule
     Method      & RMSE & Cal. error & IS95 \\
\midrule
   Sampling GP
   & \(0.28 \pm 0.06\)
   & \(0.13 \pm 0.01\)
   & \(2.20 \pm 0.16\) \\
    BayesIMP
   & \(0.31 \pm 0.08\)
   & \(0.14 \pm 0.01\)
   & \(2.50 \pm 0.36\) \\
   IMPspec-nocal
   & \(\mathbf{0.22} \pm 0.05\)
   & \(0.10 \pm 0.01\)
   & \(1.36 \pm 0.07\) \\
   IMPspec
   & \(\mathbf{0.22} \pm 0.05\)
   & \(\mathbf{0.07} \pm 0.01\)
   & \(\mathbf{1.16} \pm 0.05\) \\
\bottomrule
\end{tabular}
}
    \caption{(Toy Example) Mean \(\pm\) std. dev. RMSE, calibration error, and 95\% interval score for ATE estimation from 50 trials. Bold = best performing method.}
    \label{tab:ablation}
    \vspace{-25pt}
\end{table}

\paragraph{Results} We evaluate methods by (i) root mean squared error
({\text{RMSE}}) of posterior means relative to the true
\(\mathbb E[Y \mid \mathrm{do}(A=a)]\), (ii) calibration error of
\(\alpha\)-credible intervals as defined by \eqref{eq:cal}, and (iii) the 95\% interval score (defined in \cref{app:metrics}), which penalizes
both interval width and miscoverage. All metrics are averaged over  a grid of intervention
levels \((a_i)_{i=1}^m \subset [0,1]\). The mean and standard deviations over 50
trials are reported in \cref{fig:ablation,tab:ablation}.\footnote{The mean and
standard deviation of the calibration error and IS95 are computed by bootstrapping over the
distribution of trials.}
\textsc{IMPspec} obtains the best RMSE, calibration error, and IS95, with
spectral calibration substantially improving both coverage and interval score.
By contrast, BayesIMP systematically underfits, which we attribute to the kernel
non-stationarity discussed in
\cref{sec:background:related_work,appendix:details}. The sampling GP performs
similarly to BayesIMP. In \cref{appendix:experiments}, we further show that
\textsc{IMPspec} remains robust under heavy-tailed outcome noise and limited
treatment support, and across calibration and kernel choices.

\subsection{Synthetic Benchmark Causal Graph}

\paragraph{Experiment} We next implement the same methods, along with the frequentist doubly-robust estimator of \citet{kennedy2017non} for continuous treatments, and orthogonal random forests (ORF) \cite{oprescu2019orthogonal}, on a well-known synthetic benchmark DAG studied in \citet{aglietti2020causal,chau2021bayesimp} (see \cref{fig:causal_graphs}, middle in \cref{appendix:experiments} for graph). This design contains observed variables $A,B,C,D,E$, an outcome $Y$ and unobserved confounders $U_1,U_2$. We aim to estimate the conditional average treatment effect,
\(\mathbb E[Y|\mathrm{do}(D=d),B=b]= \int_{\mc C}  \mathbb E[Y|D=d, B=b, C=c] \bb P_{C|B}(dc|b)\), 
where the formula is derived using the back-door criterion. Thus, the variable mapping used for \textsc{IMPspec} is $(W,V,Z) = ((D,B), C, B))$. Implementation details for each method are in  \cref{sec:app:experiments:synthetic}. \vspace{-10pt}

\begin{table}[t]
    \centering
\resizebox{0.5\textwidth}{!}{
\begin{tabular}{cccc}
\toprule
   Method    & RMSE & Cal. error & IS95 \\
\midrule
 ORF
 & \(0.84 \pm 0.16\)
 & \(0.16 \pm 0.07\)
 & \(10.34 \pm 0.95\) \\
 Continuous DR
 & \(0.61 \pm 0.32\)
 & \(0.13 \pm 0.02\)
 & \(8.37 \pm 13.53\) \\
 Sampling GP
 & \(0.40 \pm 0.15\)
 & \(0.19 \pm 0.03\)
 & \(6.67 \pm 5.90\) \\
 BayesIMP
 & \(0.38 \pm 0.13\)
 & \(0.11 \pm 0.01\)
 & \(2.99 \pm 2.38\) \\
 IMPspec (ours)
 & \(\mathbf{0.36} \pm 0.16\)
 & \(\mathbf{0.05} \pm 0.02\)
 & \(\mathbf{2.27} \pm 2.03\) \\
\bottomrule
\end{tabular}
}
\caption{(Synthetic Benchmark) Mean \(\pm\) std. dev. RMSE, calibration error, and 95\% interval score for CATE estimation from 50 trials for in-sample range $\mathrm{do}(D) \in [-2.5,2.5]$. Bold is best.}
\label{tab:simulation}
\vspace{-15pt}
\end{table}

\paragraph{Results} 
\cref{tab:simulation} reports the average RMSE, calibration error, and 95\%
interval score, defined as in the toy example, over 50 trials with \(n=100\)
samples. We fix \(B=0\) and vary the intervention value \(d\) across a grid
spanning the support of \(\bb P_D\), i.e., \([-2.5,2.5]\). \textsc{IMPspec}
achieves the lowest RMSE, calibration error, and IS95, indicating improved
effect estimation and uncertainty quantification. ORF performs poorly in this
benchmark, as expected, since the treatment-effect profile is nonlinear whereas
ORF uses a locally linear treatment-effect model.
\cref{fig:simulation} also shows trial-averaged posterior means and credible
intervals for BayesIMP and \textsc{IMPspec} (left panels), and calibration
curves in- and out-of-support (middle-right/right), where out-of-support
corresponds to \(d\in[-4,-2.5]\cup[2.5,4]\). BayesIMP's
finite-dimensional approximation causes posterior variances to collapse outside
the support of \(\bb P_D\) (see \cref{app:details:bayesimp} for details), leading to severe out-of-support miscalibration. The other methods achieve similar out-of-support calibration error to
\textsc{IMPspec}, but worse in-support calibration error.

\subsection{Causal Bayesian Optimization Tasks}

\paragraph{Experiment} We now showcase the usefulness of \textsc{IMPspec} in improving causal Bayesian optimization (CBO) \cite{aglietti2020causal}. CBO aims to identify the intervention $a^*$ that either maximizes or minimizes a causal effect such as $\mathbb E[Y\mid \mathrm{do}(A=a)]$, in as few interventional queries as possible, by maintaining a GP surrogate for the causal effect. At each iteration, a BO algorithm selects a treatment level $a_i$, which in practice corresponds to running a large-scale experiment to estimate $\mathbb E[Y\mid \mathrm{do}(A=a_i)]$, updates the GP
posterior using the new interventional observation $(a_i, \hat {\mathbb E}[Y\mid \mathrm{do}(A=a_i)])$, and uses it to determine where to search next via an acquisition function. Unlike standard BO, CBO uses observational data to construct the interventional GP prior. We therefore use \textsc{IMPspec}'s posterior GP $\hat \gamma$ to define the BO prior, adding an additional prior covariance kernel $k$ as is standard \citep{aglietti2020causal}:
\(\hat \gamma \sim \mc{GP}\left(\hat m,\ \hat \kappa + k\right)
\).
For comparison, we implement BayesIMP (implemented analogously to \textsc{IMPspec}), classical CBO \cite{aglietti2020causal} and standard BO which ignores observational data. 
We evaluate all methods on three tasks: (i) minimizing the CATE $\mathbb E[Y\mid \mathrm{do}(D=d),B=0]$ in the synthetic benchmark, (ii) maximizing the ATT $\mathbb E[Y\mid \mathrm{do}(B=b),B=0]$ in the synthetic benchmark, and (iii) minimizing the average effect of \text{Statin} dosage on Cancer volume $\mathbb E[\text{Vol}\mid \mathrm{do}(\text{Statin}=s)]$ using a simulator based on real healthcare data and a known causal graph (see \cref{fig:causal_graphs}, right, in \cref{sec:app:experiments:CBO}). \cref{sec:app:experiments:CBO} contains the designs and implementation details.  

\begin{figure*}
    \centering
    \includegraphics[width=\linewidth]{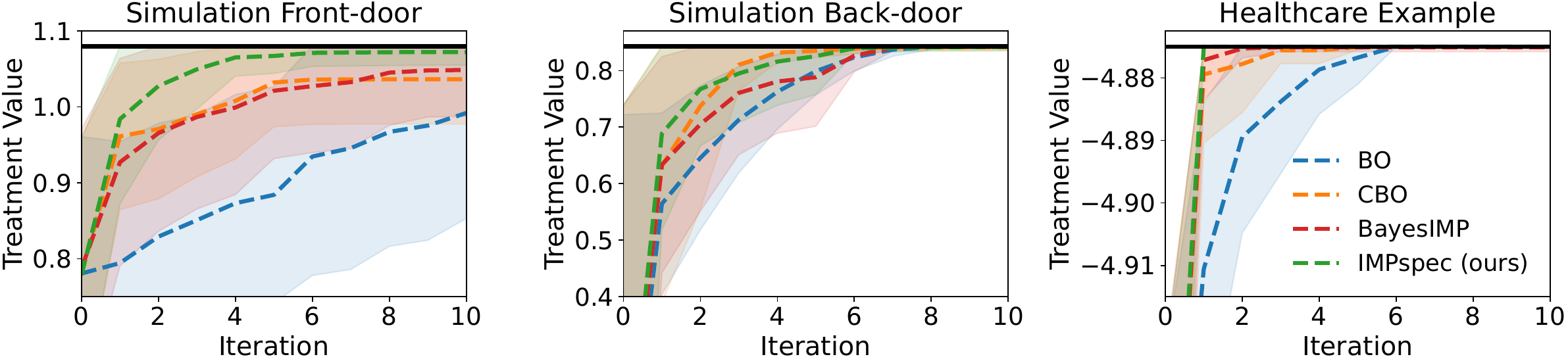}
    \caption{\textbf{CBO Experiments:} Best values per iteration (50 trials). Left: Synthetic benchmark targeting $\max_b\mathbb E[Y|\mathrm{do}(b),B=0]$ via front-door criterion, Middle: Synthetic benchmark targeting $\min_d\mathbb E[Y|\mathrm{do}(d),B=0]$ via back-door criterion, Right: Healthcare example from \citet{aglietti2020causal} targeting $\min_s\mathbb E[\text{Vol}|\mathrm{do}(\text{Statin}=s)]$ via back-door criterion. Dashed lines = mean, shading =  standard deviation, black line = optimal treatment value. Our method (\textsc{IMPspec}) converged fastest on average in all three experiments.}
    \label{fig:BO}
\end{figure*}

\begin{table}[t]
    \centering
   \resizebox{0.5\textwidth}{!}{
\begin{tabular}{cccc}
\toprule
  Method  &       Front-door       &       Back-door        &       Healthcare       \\
\midrule
    BO    &     1.753 ± 1.326      &     0.760 ± 0.519      &     0.064 ± 0.029      \\
   CBO    &     0.656 ± 0.604      &     0.383 ± 0.427      &     0.008 ± 0.018      \\
 BayesIMP &     0.698 ± 0.825      &     0.570 ± 0.555      &     0.003 ± 0.010      \\
 IMPspec (ours) & \textbf{0.247} ± 0.284 & \textbf{0.335} ± 0.420 & \textbf{0.000} ± 0.000 \\
\bottomrule
\end{tabular}
}
    \caption{(CBO) mean $\pm$ std. dev cumulative regret for $\mathbb E[Y|\mathrm{do}(B),b=0]$ (front-door), $\mathbb E[Y|\mathrm{do}(D),b=0]$ (back-door)  on synthetic benchmark, and $\mathbb E[\text{Vol}|\mathrm{do}(\text{Statin})]$ on healthcare application from \cite{aglietti2020causal} (50 trials). Bold is best.}
    \label{tab:BO}
    \vspace{-15pt}
\end{table}

\paragraph{Results} \cref{tab:BO} displays cumulative regret scores (i.e., the sum of differences between the observed vs. optimal estimand quantity) after 10 iterations for two different causal effects on the synthetic benchmark (middle causal graph of \cref{fig:causal_graphs}) and for the average causal effect of \verb|statin| on \verb|cancer_vol| in the healthcare example. \cref{fig:BO} in \cref{sec:app:experiments:CBO} displays convergence profiles. \textsc{IMPspec} converged fastest and obtained the lowest cumulative regret across all three tasks. In one case, only \textsc{IMPspec} could find the approximate optimum value within the maximum number of iterations, demonstrating the usefulness of its uncertainty estimates in improving optimal decision making.

\begin{table}[t]
\centering
\resizebox{\columnwidth}{!}{%
\begin{tabular}{lccc}
\toprule
Stage & Batch size & Time (s) & Peak GPU Memory (GB) \\
\midrule
Training & 512 & 8.83 & 0.0 \\
Calibration & 512/9915$^\dagger$ & 54.50 & 14.1 \\
Inference & 9915 & 5.03 & 11.1 \\
\bottomrule
\end{tabular}%
}
\caption{Runtime and GPU memory for \textsc{IMPspec} on 401(k). Training
uses minibatch SGD; inference uses full-dataset computations.
\(^\dagger\)Calibration includes retraining and posterior inference, and
therefore uses both batch sizes.}
\vspace{-20pt}
\label{tab:401k-runtime}
\end{table}

\subsection{{\color{black}401k Dataset}: Estimating CATE Uncertainty}

We use the well-known 401(k) dataset, previously studied in
\citet{chernozhukov2004effects} and commonly used as a benchmark
for treatment-effect estimation. We take 401(k) pension plan eligibility as
the treatment \(A\), net financial assets as the outcome \(Y\), and income \(I\) as
the conditioning variable. The remaining demographic and financial
covariates \(X_{-I}\) are used for adjustment, including  age, marital status, two-earner status, IRA participation, and home ownership. Thus, in the notation of
\eqref{eq:causal_effect_of_interest}, we set \((W,V,Z)=((A,I),X_{-I},I)\) and estimate the income-dependent CATE
\(z \mapsto \gamma((1,z),z) - \gamma((0,z),z)\). Since there is no available ground-truth causal effect, we use this primarily
as a practical case study for assessing whether \textsc{IMPspec} produces
plausible uncertainty-aware effect estimates at a realistic scale. We also
report runtime and memory requirements to assess computational feasibility.

The estimated CATE increases with income, and the 95\% credible band lies above
zero for incomes above roughly \(\$30{,}000\). This suggests a positive effect
of 401(k) eligibility on net financial assets, with stronger evidence and
larger effects for higher-income individuals, consistent with previous
studies of this dataset \citep{chernozhukov2004effects}.
By using minibatch gradients for hyperparameter
training, the method runs in around a minute on a NVIDIA RTX 4500 GPU, with peak allocated GPU memory below 16GB, demonstrating feasibility at this dataset
scale (see \cref{tab:401k-runtime}).

\begin{figure}
    \centering
    \includegraphics[width=\linewidth]{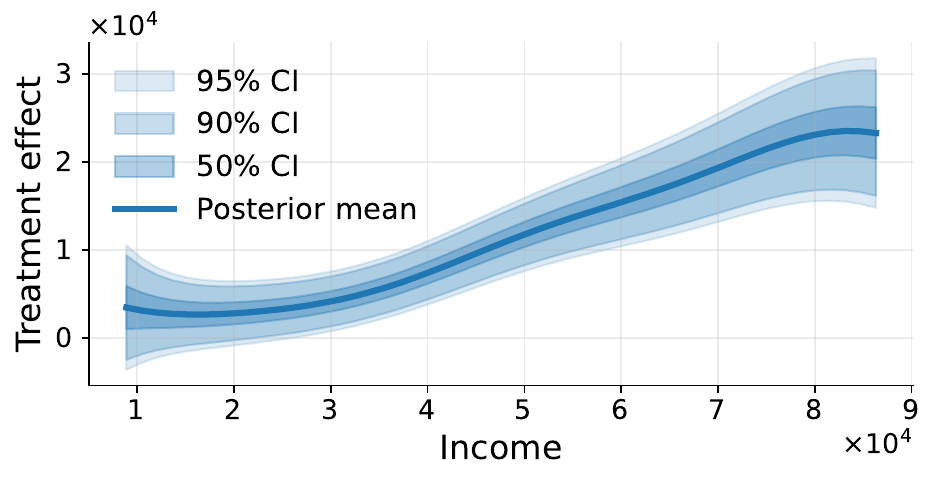} \vspace{-15pt}
\caption{\textsc{IMPspec} posterior mean and central 50\%, 90\%, and 95\%
credible intervals for the CATE of 401(k) eligibility on net financial assets
as a function of income.}   \label{fig:401k} \vspace{-15pt}
\end{figure}

\section{{Discussion}}

\paragraph{Conclusion}
We introduced \textsc{IMPspec}, a GP-based framework for uncertainty
quantification of causal functions admitting RKHS inner-product
representations. \textsc{IMPspec} provides tractable uncertainty estimates for
function-valued causal targets and a calibration algorithm targeting
frequentist coverage of posterior credible intervals. Across experiments,
\textsc{IMPspec} improved calibration and interval scores while remaining
competitive in effect estimation. \vspace{-10pt}

\paragraph{Limitations and Future Work}
\textsc{IMPspec} uses surrogate Gaussian likelihoods and a moment-matched GP
posterior approximation to achieve tractability and recover the RKHS estimator
of \citet{singh2024kernel} as the posterior mean. While calibration adjusts
uncertainty scales within a chosen spectral family, richer posterior
approximations or spectral families may further improve coverage when the true
uncertainty is strongly non-Gaussian. Future work could also incorporate sparse kernel methods for much larger datasets.

\section*{Impact Statement}
This work aims to improve uncertainty quantification for causal effect estimation. Reliable uncertainty estimates can help practitioners assess the risks of candidate interventions and guide safer decision-making in applications such as healthcare, policy evaluation, and intervention search. Its reliability depends on the appropriateness of the underlying identification assumptions, modeling choices, and calibration procedure. In high-stakes applications, inaccurate uncertainty estimates or invalid causal assumptions could lead to misleading confidence in harmful interventions, so the method should be used alongside domain expertise, sensitivity analysis, and appropriate validation.

\section*{Acknowledgements}
The authors declare the support of the Gatsby Charitable Foundation.

\bibliography{references}
\bibliographystyle{icml2026}

\newpage

\appendix

\onecolumn


\newpage

\section*{Appendix Contents}
\vspace{12pt}
\begingroup
\small
\renewcommand{\baselinestretch}{1.08}\selectfont
\setlength{\parindent}{0pt}
\setlength{\parskip}{3.5pt}

\newcommand{\appcontentsline}[2]{%
  \noindent\hyperref[#1]{#2}\dotfill\pageref*{#1}\par\vspace{1pt}
}
\newcommand{\appsubcontentsline}[2]{%
  \noindent\hspace*{1.5em}\hyperref[#1]{#2}\dotfill\pageref*{#1}\par
}

\appcontentsline{appendix:details}{\textbf{A. Additional Details}}
\appsubcontentsline{sec:app:assumptions}{A.1 Mathematical Assumptions}
\appsubcontentsline{sec:app:cgm}{A.2 Causal Graphical Models and Identification Criteria}
\appsubcontentsline{sec:background:causaldatafusion}{A.3 IMPspec Modifications for Causal Data Fusion Setting}
\appsubcontentsline{app:details:post_approx_error}{A.4 IMPspec Posterior Approximation Error Induced by Surrogate Likelihood}
\appsubcontentsline{app:krr_equivalence}{A.5 IMPspec Equivalence of Posterior Mean Functions and Kernel Ridge Regression}
\appsubcontentsline{app:details:bayesimp}{A.6 BayesIMP Limitations}

\vspace{8pt}
\appcontentsline{appendix:experiments}{\textbf{B. Experiments}}
\appsubcontentsline{app:metrics}{B.1 Evaluation Metrics}
\appsubcontentsline{sec:app:experiments:toy}{B.2 Toy Example}
\appsubcontentsline{sec:app:experiments:synthetic}{B.3 Synthetic Benchmark}
\appsubcontentsline{sec:app:experiments:CBO}{B.4 Causal Bayesian Optimization Tasks}
\appsubcontentsline{sec:app:experiments:401k}{B.5 401(k) Dataset}

\vspace{8pt}
\appcontentsline{appendix:proofs}{\textbf{C. Mathematical Results and Proofs}}
\appsubcontentsline{appendix:aux}{C.1 Auxiliary Results}
\appsubcontentsline{app:main_proofs}{C.2 Proofs of Main Results and Omitted Results}

\endgroup
\vspace{-4pt}
\clearpage

\section{Additional Details} \label{appendix:details}

\subsection{Mathematical Assumptions} \label{sec:app:assumptions}
Here we list the full set of assumptions on the variable spaces and kernels used in the main text.

\begin{assumption}[Topological assumptions]\label{ass:top}
Let $Y \in \mc Y$, $V \in \mc V$, $W \in \mc W$, $Z \in \mc Z$. We assume \vspace{-5pt}
\begin{enumerate}
    \item $\mc X$ is a standard Borel space with Borel $\sigma$-algebra $\mc B(\mc X)$ for $\mc X \in \{\mc Y, \mc W, \mc V, \mc Z\}$.
    \item $\mc V$ is $\sigma$-compact (i.e., $\mc V = \cup_{j \in I}\mc V_j$ with $\mc V_j$ compact and $I$ countable).
    \item $\mc Y = \bb R$.
\end{enumerate}
\end{assumption}
\begin{remark}
In practice, at least one of $V,W,Z$ correspond to the treatment variable $(A)$ in different causal set-ups. Two examples are given in the main text (in both of those cases $Z:=A$). Although our method is motivated by, and designed for, the continuous treatment case (i.e., $\mc A = \bb R$), it can in principle be applied to the scenario where $A$ is binary or discrete. Thus, we do not formally list this as a topological assumption. 
\end{remark}
\begin{assumption}[Kernel assumptions] \label{ass:kern}
Let $k_V: \mc V^2 \to \bb R$,$k_W: \mc W^2 \to \bb R$ and $k_Z: \mc Z^2 \to \bb R$ be positive definite kernels. We assume\vspace{-5pt}
\begin{enumerate}
    \item $k_V,k_W,k_Z$ are Borel measurable
    \item $k_V,k_W,k_Z$ are continuous, bounded and characteristic
\end{enumerate}
\end{assumption}

\begin{assumption}[Mercer's theorem] \label{ass:eig}
    Let $\mc T_{k,\nu}: L^2(\nu) \to L^2(\nu)$, $\varphi \mapsto \int k_V(v,\argdot)\varphi(v)\nu(dv)$ with $\nu$ finite and non-degenerate. We assume\vspace{-5pt}
    \begin{enumerate}
    \item $\mc T_{k,\nu}$ has eigendecomposition $(\lambda_{V,i},\phi_{V,i})_{i \in I}$ with $I$ countable.
        \item $\sum_{i \in I}\lambda_{V,i} <\infty$.
        \item $(e_i)_{i \in I} := (\lambda_{V,i}\half\phi_{V,i})_{i \in I}$ is an orthonormal basis (ONB) of the RKHS $\mc H_{\mc V}$ associated with $k$.
         \item $k_V(v,v') = \sum_{i \in I} e_i(v)e_i(v')$ where the convergence is absolute and uniform on any compact $A \times B \subset \mc V^2$.
         \item $(e_i(v))_{i \in I} \in \ell_2(I), \quad \forall v \in \mc V$.
        \item $\lambda_{V,i} > 0 , \quad \forall i \in I$.
    \end{enumerate}
\end{assumption}
\begin{remark}
   Conditions 1-5 hold in \cref{ass:eig} under \cref{ass:top} and \cref{ass:kern} (see Proposition 1 and Theorem 3 in \citet{sun2005mercer}. Condition 6 holds for popular characteristic kernels $k$ (e.g. Gaussian, Mat\'{e}rn, Rational quadratic, and Gamma exponential).
\end{remark}

\begin{remark}
 Without loss of generality we will take $I = \bb N$ in this work for concreteness. In cases where $\mc T_{k,\nu}$ is a finite rank operator, we restrict the index set $I = \mathrm{Rank}(\mc T_{k,\nu})$, so that the assumptions hold. 
\end{remark}

\subsection{Background on Causal Graphical Models and Identification Criteria} \label{sec:app:cgm}

Here we describe some key concepts used in the causal graphical modelling literature \cite{pearl2009causality}.

\textbf{Graphs:}  A Graph $\mc G$ is a set of directed edges $E$ and indices $[m] = \{1,...,m\}$. The edges of $\mc G$ can be encoded by a parent function $\pa: [m] \to 2^{[m]}$ in the sense that $l \in \pa(j) \Leftrightarrow l \to j$ in $\mc G$. If $j$ has no incoming edges then $\pa(j) = \emptyset$. In this case, one can write $\mc G = (E,\pa)$. 

\textbf{Causal DAGs:} a graph $\mc G = (\bm X, \pa)$ is a causal DAG over random variables $\bm X := (X_i)_{i=1}^m \sim \bb P$, if the factorisation $\bb P = \prod_i \bb P_{X_i|X_{\pa(i)}}$ holds and, for each $i \in \{1,\dots,m\}$, $X_{\pa(i)}$ are all direct causes of $X_i$.

\begin{definition}[Back-door criterion \cite{pearl1995causal}]
Take a causal DAG $\mc G$ over $(X_i)_{i=1}^m$, and let $(A,V,Y) \subseteq (X_i)_{i=1}^m$. $V$ satisfies the back-door criterion with respect to $(A,Y)$ if 
\begin{enumerate}
    \item No node in $V$ is a descendent of $A$.
    \item $V$ blocks every path between $A$ and $Y$ that contains an edge pointing into $A$
\end{enumerate}
\end{definition}
\begin{definition}[Front-door criterion \cite{pearl1995causal}]
Take a causal DAG $\mc G$ over $(X_i)_{i=1}^m$, and let $(A,M,Y) \subseteq (X_i)_{i=1}^m$. $M$ satisfies the front-door criterion with respect to $(A,Y)$ if 
\begin{enumerate}
    \item $M$ intercepts all directed paths from $A$ to $Y$.
    \item There is no back-door path between $A,M$.
    \item Every back-door path between $M$ and $Y$ is blocked by $A$.
\end{enumerate}
\end{definition}

In the following we present the known formulae for all causal effects of interest in this work. All these effects take the form of  \cref{eq:causal_effect_of_interest} in the main text, or a marginal expectation of it. Details for ATE and CATE can be found in \cite{pearl2009causality} and for ATT in \cite{shpitser2012effects}.

\textbf{Effects under the back-door criterion:}
In the following $A$ is the treatment, $Y$ is the outcome, and $(V,Z)$ satisfy the back-door criterion w.r.t. $(A,Y)$ (see \cref{fig:causal_graphs_appendix}, left).
\begin{align*}
    \text{ATE}_{BD}(a) & = \int_{\mc V \times \mc Z} \mathbb E[Y|A=a,V=v,Z=z] \bb P_{V,Z}(dv \times dz) \\
    \text{CATE}_{BD}(a,z) & = \int_{\mc V} \mathbb E[Y|A=a,V=v,Z=z] \bb P_{V|Z}(dv|z) \\
    \text{ATT}_{BD}(a,a') & = \int_{\mc V} \mathbb E[Y|A=a,V=v,Z=z] \bb P_{V|A}(dv|a')
\end{align*}

\textbf{Effects under the front-door criterion:}
In the following $A$ is the treatment, $Y$ is the outcome, and $M$ satisfies the front-door criterion w.r.t. $(A,Y)$ (see \cref{fig:causal_graphs_appendix}, right).
\begin{align*}
    \text{ATE}_{FD}(a) & = \int_{\mc A}\int_{\mc M} \mathbb E[Y|A=a',M=m] \bb P_{M|A}(dm|a)\bb P_{A}(da') \\
    \text{ATT}_{FD}(a,a') & =\int_{\mc M} \mathbb E[Y|A=a',M=m] \bb P_{M|A}(dm|a)
\end{align*}

\subsection{IMPSpec Modifications for Causal Data Fusion Setting} \label{sec:background:causaldatafusion} 

Our method can be used both in situations where $\mc X^n$ is a single dataset of i.i.d. observations from a causal graph, or in certain situations where $\mc X^n$ is comprised of two (i.i.d.) datasets $\mc D_1, \mc D_2$ of partial observations from the causal graph. Following \citet{chau2021bayesimp}, we refer to the latter situation as the \emph{causal data fusion} setting. In general, to estimate a causal effect of the form $\gamma(w,z)$ as defined in \cref{eq:causal_effect_of_interest} in the main text, one must have access to observation sets $\mc D_1 = (Y^{(1)}_i,W^{(1)}_i,V^{(1)}_i)_{i=1}^{m_1}$ and $\mc D_2 = (Z^{(2)}_i,V^{(2)}_i)_{i=1}^{m_2}$ (in the case where $W = \emptyset$ it suffices to have $\mc D_1 = (Y^{(1)}_i,V^{(1)}_i)_{i=1}^{m_1}$).
In such cases, we derive the posteriors on $f \mid \mc D_1$ and $\mu \mid \mc D_2$ and combine them (as in the proof of \cref{thm:moment_derivation}) to recover the posterior moments of $\gamma$ following the same calculation steps as in the proof of \cref{thm:moment_derivation}. The resulting posterior moments are then as follows.
    \begin{align*}
\mathbb E[\gamma(w,z)|\mc X^n] & = \bm \beta^{(2)}(z)^\top K_{V^{(2)}V^{(1)}}\bm \alpha^{(1)}(w) \\
\mathbb Var[\gamma(w,z)|\mc X^n] & =  S_1 + S_2 + S_3
\end{align*}
where
\begin{align*}
S_1 & =  \bm \beta^{(2)}(z)^\top K_{V^{(2)}}\bm \beta^{(2)}(z)k_W(w,w) -  \bm \beta^{(2)}(z)^\top K_{V^{(2)}V^{(1)}}A^{(1)}(w)K_{V^{(1)}V^{(2)}}\bm \beta^{(2)}(z)  \\
S_2 & = \hat k^{(2)}(z,z) Tr[\tilde K_{V^{(1)}}(\bm \alpha^{(1)}(w)\bm \alpha^{(1)}(w)^\top  - A^{(1)}(w))]  \\
S_3 & = \left(\sum\nolimits_{i \in I}\lambda_i\right) k_W(w,w)\hat k^{(2)}(z,z))
\end{align*}
and the associated quantities are defined as
\begin{align*}
& \bm \alpha^{(1)}(w) = D^{(1)}(w)\left(K_{W^{(1)}V^{(1)}} + \sigma^2 I\right)^{-1}\bm Y^{(1)}, 
&& A^{(1)}(w) = D^{(1)}(w)\left(K_{W^{(1)}V^{(1)}} + \sigma^2 I\right)^{-1}D^{(1)}(w),\\[3pt]
& \bm \beta^{(2)}(z) = (K_{Z^{(2)}} + \eta^2 I)^{-1}\bm k^{(2)}(z),
&& \hat k^{(2)}(z,z) = k_Z(z,z) - \bm k^{(2)}(z)^\top \bm \beta^{(2)}(z), \\[3pt]
& D^{(1)}(w) = \mathrm{diag}(\bm k^{(1)}(w)), 
&& \tilde K_{V^{(1)}} = \int \bm k^{(1)}(v)\bm k^{(1)}(v)^\top d\nu(v), \\[3pt]
& (K_{W^{(1)}})_{ij} = k_W(W^{(1)}_i,W^{(1)}_j), 
&& (K_{V^{(1)}})_{ij} = k_V(V^{(1)}_i,V^{(1)}_j), \\[3pt]
& (K_{V^{(2)}})_{ij} = k_V(V^{(2)}_i,V^{(2)}_j), 
&& (K_{V^{(2)}V^{(1)}})_{ij} = k_V(V^{(2)}_i,V^{(1)}_j), \\[3pt]
& K_{V^{(1)}V^{(2)}} = (K_{V^{(2)}V^{(1)}})^\top, 
&& K_{W^{(1)}V^{(1)}} = K_{W^{(1)}} \odot K_{V^{(1)}}, \\[3pt]
& \bm k^{(1)}(w) = [\,k_W(w,W^{(1)}_1),\dots,k_W(w,W^{(1)}_{m_1})\,]^\top, 
&& \bm k^{(2)}(z) = [\,k_Z(z,Z^{(2)}_1),\dots,k_Z(z,Z^{(2)}_{m_2})\,]^\top. \\[3pt]
\end{align*}

Thus, in any experiments which require causal data fusion with $\mc D_1$ and $\mc D_2$, we use the above posterior moments instead of those given in the main text for practical implementation. Algorithm 3 below gives the resulting algorithm for computing posterior credible intervals in this setting.

\begin{remark}[Case $W = \emptyset$] \label{remark:wempty}
In the special case where $W = \emptyset$, the terms in the posterior moments involving $k_W$ and $K_{W^{(1)}}$ drop out of the model. 
Concretely, $K_{W^{(1)}} = I$, $k_W(w,w) = 1$, and $\bm k^{(1)}(w) = \bm 1$, 
so that $D^{(1)}(w) = I$ and $K_{W^{(1)}V^{(1)}}$ reduces to $K_{V^{(1)}}$. 
Consequently, $\bm \alpha^{(1)}(w)$ and $A^{(1)}(w)$ simplify to
\[
\bm \alpha^{(1)} = (K_{V^{(1)}} + \sigma^2 I)^{-1}\bm Y^{(1)},
\qquad
A^{(1)} = (K_{V^{(1)}} + \sigma^2 I)^{-1},
\]
and the remaining equations for $\bm \beta^{(2)}(z)$, $\hat k^{(2)}(z,z)$, and the variance terms $S_1,S_2,S_3$ remain unchanged. This modification of Algorithm 3 is used in the Experiments for the Toy Example (\cref{fig:causal_graphs}, left) and in the Healthcare Simulator (\cref{fig:causal_graphs}, right), which are described in detail in \cref{appendix:details}.
\end{remark}

\clearpage

\begin{algorithm}[H]
\caption{$\alpha$-Credible Interval for $\gamma$ in Causal Data Fusion Setting}
\label{alg:post_CDF}
\begin{algorithmic}[1]
\REQUIRE
Datasets
$\mc D_1=\{(Y^{(1)}_i,W^{(1)}_i,V^{(1)}_i)\}_{i=1}^{m_1}$,
$\mc D_2=\{(Z^{(2)}_i,V^{(2)}_i)\}_{i=1}^{m_2}$;
kernels $k_W,k_V,k_Z$; noise variances $\sigma^2,\eta^2$;
test points $\{(w_m,z_m)\}_{m=1}^M$; confidence level $\alpha$;
spectral measure $\nu$; Monte Carlo samples $S$
\ENSURE
Posterior means $\{\hat\mu_m\}$ and credible intervals $\{\mathrm{CI}_m\}$

\STATE Compute kernel matrices
$K_{W^{(1)}},K_{V^{(1)}},K_{V^{(2)}},
K_{V^{(2)}V^{(1)}},K_{V^{(1)}V^{(2)}},K_{W^{(1)}V^{(1)}}$

\STATE Approximate $\tilde K_{V^{(1)}}$ using
$(v_s)_{s=1}^S \sim \nu$:
$\tilde K_{V^{(1)}} \gets
S^{-1}K_{V^{(1)}V_s}K_{V^{(1)}V_s}^\top$,
where $[K_{V^{(1)}V_s}]_{i,s}:=k_V(V_i^{(1)},v_s)$

\FOR{$m = 1$ \TO $M$}
  \STATE \textit{Compute posterior mean and variance components}
  \STATE $\bm k^{(1)}(w_m)
  \gets [k_W(w_m,W^{(1)}_1),\dots,k_W(w_m,W^{(1)}_{m_1})]^\top$
  \STATE $D^{(1)}(w_m)\gets \mathrm{diag}(\bm k^{(1)}(w_m))$
  \STATE $\bm \alpha^{(1)}(w_m)
  \gets D^{(1)}(w_m)(K_{W^{(1)}V^{(1)}}+\sigma^2 I)^{-1}\bm Y^{(1)}$
  \STATE $A^{(1)}(w_m)
  \gets D^{(1)}(w_m)(K_{W^{(1)}V^{(1)}}+\sigma^2 I)^{-1}D^{(1)}(w_m)$

  \STATE $\bm k^{(2)}(z_m)
  \gets [k_Z(z_m,Z^{(2)}_1),\dots,k_Z(z_m,Z^{(2)}_{m_2})]^\top$
  \STATE $\bm \beta^{(2)}(z_m)
  \gets (K_{Z^{(2)}}+\eta^2 I)^{-1}\bm k^{(2)}(z_m)$
  \STATE $\hat k^{(2)}(z_m,z_m)
  \gets k_Z(z_m,z_m)-\bm k^{(2)}(z_m)^\top\bm \beta^{(2)}(z_m)$

  \STATE \textit{Compute posterior moments of $\gamma(w_m,z_m)$}
  \STATE $\hat\mu_m
  \gets \bm \beta^{(2)}(z_m)^\top K_{V^{(2)}V^{(1)}}\bm \alpha^{(1)}(w_m)$
  \STATE $S_1 \gets
  \bm \beta^{(2)}(z_m)^\top K_{V^{(2)}}\bm \beta^{(2)}(z_m)k_W(w_m,w_m)
  -\bm \beta^{(2)}(z_m)^\top K_{V^{(2)}V^{(1)}}A^{(1)}(w_m)
  K_{V^{(1)}V^{(2)}}\bm \beta^{(2)}(z_m)$
  \STATE $S_2 \gets
  \hat k^{(2)}(z_m,z_m)
  \mathrm{Tr}\!\left[
  \tilde K_{V^{(1)}}
  \left\{
  \bm \alpha^{(1)}(w_m)\bm \alpha^{(1)}(w_m)^\top
  - A^{(1)}(w_m)
  \right\}
  \right]$
  \STATE $S_3 \gets
  \left(\sum_{i\in I}\lambda_i\right)
  k_W(w_m,w_m)\hat k^{(2)}(z_m,z_m)$
  \STATE $\hat\sigma_m^2 \gets S_1+S_2+S_3$

  \STATE \textit{Compute credible interval}
  \STATE $c \gets \mathrm{CDF}_{\mc N(0,1)}^{-1}\left(\frac{1+\alpha}{2}\right)$
  \STATE $\mathrm{CI}_m \gets [\,\hat\mu_m \pm c\,\hat\sigma_m\,]$
\ENDFOR
\end{algorithmic}
\end{algorithm}

\subsection{IMPSpec Posterior Approximation Error Induced by Surrogate Likelihood} \label{app:details:post_approx_error}

Here we study how mis-specification of the surrogate observation model in Eq.~(11) of the main text,
$$
\phi_{V,i}(V)=\mu_i(Z)+\xi_i,\qquad \xi_i\sim N(0,\eta^2),
$$
can affect the posterior approximation error for the pointwise causal quantity $\gamma(w,z)$.

\subsubsection{Set-up.}

Throughout, we fix a realized dataset
$
D=\{(V_i,Z_i,W_i,Y_i)\}_{i=1}^n
$. All statements below are conditional on this observed dataset, so the following analysis is finite-sample rather than asymptotic.

\paragraph{Causal Effect of Interest.}
Using the notation of the main paper, recall that the causal effects of interest can be expressed as
$$
\gamma(w,z)=f\bigl(\psi_W(w)\otimes \mu(z)\bigr),
$$
where
$$
\mu(z):=\bb E[\psi_V(V)\mid Z=z]\in \mathcal H_V
$$
is the conditional mean embedding of $V$ given $Z$. By the Mercer expansion of $\psi_V(v)=k_V(\cdot,v)$, this admits the spectral representation
$$
\mu(z)=\sum_{i\in I}\lambda_{V,i}\mu_i(z)\phi_{V,i}(\cdot),
\qquad
\mu_i(z):=\bb E[\phi_{V,i}(V)\mid Z=z].
$$
Thus, for each fixed $(w,z)$, the scalar quantity $\gamma(w,z)$ is determined by the latent pair $(f,\mu)$. We therefore define
$$
T_{w,z}(f,\mu):=f\bigl(\psi_W(w)\otimes \mu(z)\bigr),
$$
so that
$
\gamma(w,z)=T_{w,z}(f,\mu).
$

\paragraph{Probabilistic Model.}
Let $\Pi$ denote the prior distribution of the latent pair $(f,\mu)$ induced by the GP priors in the main text. Given fixed $D$, let $\hat L(f,\mu):= \hat p(D|f,\mu)$ denote the likelihood induced by the surrogate observation model in Eq.~(11), and let $L(f,\mu)$ denote a \emph{well-specified}\footnote{By this we mean that there exists a latent pair $(f^\star,\mu^\star)$ such that the true conditional distribution of the observed data is exactly $L(\cdot\mid f^\star,\mu^\star)$. In contrast, $\hat L$ is the surrogate likelihood induced by Eq.~(11), and need not coincide with the true conditional distribution.} benchmark likelihood. The purpose of the following derivation is to quantify the posterior mis-specification error induced by replacing the well-specified likelihood $L$ with the surrogate likelihood $\hat L$.

Define the corresponding posteriors
$$
P(d f,d\mu):=\frac{L(f,\mu)\Pi(d f,d\mu)}{Z},
\qquad
{\hat P}(d f,d\mu):=\frac{\hat L(f,\mu)\Pi(d f,d\mu)}{\hat Z}.
$$
with normalizing constants
$
Z:=\int L(f,\mu)\,\Pi(d f,d\mu),\;
\hat Z:=\int \hat L(f,\mu)\,\Pi(d f,d\mu),
$
assumed finite and strictly positive.
Now, define the likelihood ratio,
$$
R(f,\mu):=\frac{\hat L(f,\mu)}{L(f,\mu)}.
$$
and note that we can express the normalizing constant ratio as follows
$$
\frac{\hat Z}{Z}
=
\int \frac{\hat L(f,\mu)}{L(f,\mu)}\,P(d f,d\mu)
=
\int R(f,\mu)\,P(d f,d\mu)
=
\bb E_P[R].
$$
Hence ${\hat P}\ll P$ and the Radon-Nikodym derivative can be expressed in terms of $R$.
$$
\frac{d{\hat P}}{dP}(f,\mu)
=
\frac{R(f,\mu)}{\bb E_P[R]}.
$$

Since the dataset $D$ is fixed throughout, the quantities
$
L(f,\mu), \hat L(f,\mu), R(f,\mu)
$
are treated as deterministic (measurable) functions of the latent pair $(f,\mu)$. Expectations such as
$
\bb E_P[R]
$
are therefore taken with respect to the posterior $P(df,d\mu)$ on $(f,\mu)$, i.e., for any measurable function $g \in L^1(P)$,
$$
\bb E_P[g]:=\int g(f,\mu)\,P(df,d\mu).
$$

The above identities let us bound divergences/metrics of interest in terms of functionals of $R$,
$$
\chi^2(\hat P\|P)
:=
\int \left(\frac{d\hat P}{dP}-1\right)^2 dP
=
\bb E_P\!\left[\left(\frac{R}{\bb E_P[R]}-1\right)^2\right].
$$
$$
\TV(P,{\hat P})
=
\frac12\int \left|\frac{d{\hat P}}{dP}-1\right|\,dP
=
\frac12\,\bb E_P\!\left[\left|\frac{R}{\bb E_P[R]}-1\right|\right].
$$
In what follows, we will use these identities to bound how likelihood differences propagate to posterior differences.

\subsubsection{Joint Posterior Bounds under Likelihood-Ratio Moment Control}

Assume the likelihood ratio  satisfies the following integrated perturbation bound for some $\varepsilon>0$:
$$
\frac{\bb E_P[R^2]}{\bb E_P[R]^2}\le 1+\varepsilon.
$$
This condition can be used to control the posterior divergences above. Indeed, since
$$
\frac{d\hat P}{dP}(f,\mu)=\frac{R(f,\mu)}{\bb E_P[R]},
$$
we have the exact identity
$$
\chi^2(\hat P\|P)
=
\int\left(\frac{d\hat P}{dP}-1\right)^2\,dP
=
\bb E_P\!\left[\left(\frac{R}{\bb E_P[R]}-1\right)^2\right]
=
\frac{\bb E_P[R^2]}{\bb E_P[R]^2}-1.
$$
Hence the assumption implies
$$
\chi^2(\hat P\|P)\le \varepsilon.
$$
Moreover, using the standard bound
$
\TV(\hat P,P)\le \tfrac12\sqrt{\chi^2(\hat P\|P)},
$
we obtain
$$
\TV(\hat P,P)\le \tfrac12\sqrt{\varepsilon}.
$$

\subsubsection{Propagation to Posterior of Causal Effect \texorpdfstring{$\gamma(w,z)$}{gamma(w,z)}}

The above control on the posterior over latent processes propagates automatically to the posterior of the causal effect $\gamma(w,z)$. In particular, let
$$
P_{w,z}:=P\circ T_{w,z}^{-1},
\qquad
{\hat P}_{w,z}:={\hat P}\circ T_{w,z}^{-1}
$$
denote the induced posteriors of the scalar quantity $\gamma_{w,z}=\gamma(w,z)$ under $P$ and ${\hat P}$.

For total variation, we immediately get the contraction bound
$$
\TV(P_{w,z},{\hat P}_{w,z})
=
\sup_{B\in\mathcal B(\mathbb R)}
|P(T_{w,z}^{-1}(B))-{\hat P}(T_{w,z}^{-1}(B))|
\le
\sup_{A\in\mathcal F}|P(A)-{\hat P}(A)|
=
\TV(P,{\hat P}).
$$
Since $\chi^2$ is an $f$-divergences, the data-processing inequality for the measurable map $T_{w,z}$ also gives
$$
\chi^2({\hat P}_{w,z}\|P_{w,z})\le \chi^2({\hat P}\|P).
$$
Combining this with the previous bounds yields
\begin{align}
\chi^2({\hat P}_{w,z}\|P_{w,z})\le \varepsilon,
\qquad
\TV(P_{w,z},{\hat P}_{w,z})
\le
\tfrac12\sqrt{\varepsilon}.
\end{align}

\paragraph{Remark on Usefulness of Bounds Under Small/Large $\varepsilon$.}
The strongest global consequence of the integrated likelihood-perturbation assumption is the chi-squared bound
$$
\chi^2(\hat P_{w,z}\|P_{w,z}) \le \varepsilon.
$$
Unlike total variation, this remains nontrivial even outside the small-misspecification regime. By contrast, the corresponding total-variation bound
$$
\TV(P_{w,z},\hat P_{w,z}) \le \tfrac12\sqrt{\varepsilon}
$$
is useful only when $\varepsilon<4$, since $\TV$ is always bounded by $1$.

Let $\tilde P_{w,z}$ denote the Gaussian moment-matching approximation to the surrogate posterior $\hat P_{w,z}$ used in the main text to construct credible intervals from its posterior moments. Then, whenever the above total-variation bound is nontrivial, the triangle inequality gives the decomposition
$$
\TV(P_{w,z},\tilde P_{w,z})
\le
\TV(P_{w,z},\hat P_{w,z})+\TV(\hat P_{w,z},\tilde P_{w,z})
\le
\tfrac12\sqrt{\varepsilon}
+
\TV(\hat P_{w,z},\tilde P_{w,z}).
$$
Thus the total posterior error separates into two contributions: the first is the error induced by replacing the well-specified benchmark likelihood with the surrogate likelihood, while the second is the error introduced by the Gaussian approximation step itself.

\subsubsection{Effect on Posterior Moments}
Since the Gaussian approximation used in the main text is matched to the posterior mean and covariance of the surrogate posterior, it is natural to study how the above likelihood error can affect these moments. Below we derive bounds that are most useful when likelihood error is relatively small (e.g., $\epsilon < 1$).

\paragraph{Bound based on $\chi^2$.} For brevity, define
$
\delta_\varepsilon:=\sqrt{\varepsilon}
$.

Now suppose $\gamma_{w,z}\in L^4(P_{w,z})$. Then, since
$
\int \left(\frac{d{\hat P}_{w,z}}{dP_{w,z}}-1\right)dP_{w,z}=0
$, for any measurable $h\in L^2(P_{w,z})$
\begin{align*}
\bb E_{{\hat P}_{w,z}}[h(\gamma_{w,z})]-\bb E_{P_{w,z}}[h(\gamma_{w,z})]
&=
\int h(x)\left(\frac{d{\hat P}_{w,z}}{dP_{w,z}}(x)-1\right)P_{w,z}(dx) \\
&=
\int \bigl(h(x)-\bb E_{P_{w,z}}[h(\gamma_{w,z})]\bigr)
\left(\frac{d{\hat P}_{w,z}}{dP_{w,z}}(x)-1\right)P_{w,z}(dx),
\end{align*}
 Now, by Cauchy--Schwarz,
$$
\left|
\bb E_{{\hat P}_{w,z}}[h(\gamma_{w,z})]-\bb E_{P_{w,z}}[h(\gamma_{w,z})]
\right|
\le
\sqrt{\chi^2({\hat P}_{w,z}\|P_{w,z})}\,
\sqrt{\Var_{P_{w,z}}(h(\gamma_{w,z}))}.
$$
Using the bound on $\chi^2({\hat P}_{w,z}\|P_{w,z})$, this gives
$$
\left|
\bb E_{{\hat P}_{w,z}}[h(\gamma_{w,z})]-\bb E_{P_{w,z}}[h(\gamma_{w,z})]
\right|
\le
\delta_\varepsilon\,
\sqrt{\Var_{P_{w,z}}(h(\gamma_{w,z}))}.
$$

Applying this with $h(x)=x$ and then $h(x) = x^2$ yields the first and second moment bounds
\begin{align}
\left|
\bb E_{{\hat P}_{w,z}}[\gamma_{w,z}]-\bb E_{P_{w,z}}[\gamma_{w,z}]
\right|
\le
\delta_\varepsilon\,
\sqrt{\Var_{P_{w,z}}(\gamma_{w,z})}.
\end{align}
$$
\left|
\bb E_{{\hat P}_{w,z}}[\gamma_{w,z}^2]-\bb E_{P_{w,z}}[\gamma_{w,z}^2]
\right|
\le
\delta_\varepsilon\,
\sqrt{\Var_{P_{w,z}}(\gamma_{w,z}^2)}.
$$

Finally, writing
$$
\Delta_1:=\bb E_{{\hat P}_{w,z}}[\gamma_{w,z}]-\bb E_{P_{w,z}}[\gamma_{w,z}],
\qquad
\Delta_2:=\bb E_{{\hat P}_{w,z}}[\gamma_{w,z}^2]-\bb E_{P_{w,z}}[\gamma_{w,z}^2],
$$
we have
\begin{align*}
\left|
\Var_{{\hat P}_{w,z}}(\gamma_{w,z})-\Var_{P_{w,z}}(\gamma_{w,z})
\right|
&=
\left|
\Delta_2-\Big((\bb E_{{\hat P}_{w,z}}[\gamma_{w,z}])^2-(\bb E_{P_{w,z}}[\gamma_{w,z}])^2\Big)
\right| \\
&\le
|\Delta_2| + |\Delta_1|\,\Big(|\bb E_{{\hat P}_{w,z}}[\gamma_{w,z}]|+|\bb E_{P_{w,z}}[\gamma_{w,z}]|\Big) \\
&\le
|\Delta_2| + |\Delta_1|\,\Big(2|\bb E_{P_{w,z}}[\gamma_{w,z}]|+|\Delta_1|\Big),
\end{align*}
since $|\bb E_{{\hat P}_{w,z}}[\gamma_{w,z}]|\le |\bb E_{P_{w,z}}[\gamma_{w,z}]|+|\Delta_1|$. Substituting the bounds on $\Delta_1$ and $\Delta_2$ gives
\begin{align}
\left|
\Var_{{\hat P}_{w,z}}(\gamma_{w,z})-\Var_{P_{w,z}}(\gamma_{w,z})
\right|
\le
\delta_\varepsilon\,\sqrt{\Var_{P_{w,z}}(\gamma_{w,z}^2)}
+
2|\bb E_{P_{w,z}}[\gamma_{w,z}]|\,\delta_\varepsilon\,\sqrt{\Var_{P_{w,z}}(\gamma_{w,z})}
+
\delta_\varepsilon^2\,\Var_{P_{w,z}}(\gamma_{w,z}).
\end{align}

\subsection{\textsc{IMPspec} Equivalence of Posterior Mean Functions and Kernel Ridge Regression}
\label{app:krr_equivalence}

We now show that the posterior means of the Gaussian process (GP) priors used for $f$ and $\mu$ in \textsc{IMPspec} coincide exactly with the kernel ridge regression (KRR) estimators of the corresponding conditional expectations. 
This establishes that our GP formulation provides a fully Bayesian extension of the original kernel estimator introduced in \citet{singh2024kernel}. We note this equivalence for the scalar-valued GP case (e.g., for $f$) has already been established in \citet{kanagawa2018gaussian}.

\paragraph{Notation.}
Throughout this section we use the following notation for Gram matrices and kernel evaluation vectors.  
Let $\{(Y_i, W_i,V_i,Z_i)\}_{i=1}^n \sim \bb P_{Y,W,V,Z}^{\otimes n}$ denote the relevant training samples.

\begin{itemize}
    \item $K_W \in \mathbb R^{n \times n}$ and $K_V \in \mathbb R^{n \times n}$ are the Gram matrices of $k_W$ and $k_V$, respectively:
    \[
    [K_W]_{ij} := k_W(W_i,W_j),
    \qquad
    [K_V]_{ij} := k_V(V_i,V_j).
    \]

    \item $K_{WV} \in \mathbb R^{n \times n}$ is the Gram matrix of the product kernel $k_W \otimes k_V$:
    \[
    [K_{WV}]_{ij} := k_W(W_i,W_j)\,k_V(V_i,V_j)
    = \langle \psi_W(W_i)\otimes \psi_V(V_i),\,\psi_W(W_j)\otimes \psi_V(V_j)\rangle.
    \]

    \item $\bm k_W(w) \in \mathbb R^n$ and $\bm k_V(v) \in \mathbb R^n$ are the kernel evaluation vectors at test inputs $w\in\mathcal W$ and $v\in\mathcal V$:
    \[
    [\bm k_W(w)]_i := k_W(w,W_i),
    \qquad
    [\bm k_V(v)]_i := k_V(v,V_i).
    \]

    \item $\bm k_Z(z) \in \mathbb R^n$ is the kernel evaluation vector for $k_Z$ at $z\in\mathcal Z$:
    \[
    [\bm k_Z(z)]_i := k_Z(z,Z_i).
    \]
\end{itemize}

\paragraph{Posterior mean for $\bm f$}
Recall from \eqref{eq:Pmodel} in the main text that $Y = f(\psi_W(W)\otimes\psi_V(V)) + U$, $U\sim\mc N(0,\sigma^2)$. 
As we show in the proof of \cref{thm:moment_derivation} (i.e., by a simple application of \cref{lem:gp-regression-posterior}), under the prior $f\sim\mc{GP}(0,\langle \argdot, \argdot \rangle_{\mc H_W \otimes \mc H_V})$, the posterior mean of $f$ conditional on data 
$\mc D_n^{(1)}=\{(Y_i,\psi_W(W_i),\psi_V(V_i))\}_{i=1}^n$ is
\[
\hat f(\cdot)
= \Phi(\cdot)^\top (K_{WV}+\sigma^2 I)^{-1}\bm Y,
\qquad
\Phi(\cdot) := [\psi_W(W_i) \otimes \psi_V(V_i)]_{i=1}^n \in (\mc H_{W} \otimes \mc H_{V})^{1 \times n}.
\]
so that, by the reproducing property $k_X(x,x') = \langle \psi_X(x) , \psi_X(x') \rangle$ for $x \in \{w,v\}$, we have \[\hat f(\psi_W(w) \otimes \psi_V(v)) = (\bm k_W(w) \odot \bm k_V(v))^\top (K_{WV}+\sigma^2 I)^{-1}\bm Y\]
On the other hand, it is a standard fact that the kernel ridge regression (KRR) estimator for $f$ minimizing
\[
\min_{f\in\mc H_W\otimes\mc H_V}
\frac{1}{n}\sum_{i=1}^n (Y_i - \langle f, \psi_W(W_i) \otimes \psi_V(V_i)\rangle_{\mc H_W\otimes \mc H_V})^2 + \lambda\|f\|_{\mc H_W\otimes\mc H_V}^2
\]
has closed form
\[
\hat f_{\mathrm{KRR}}(\cdot)
= \Phi(\cdot)^\top (K_{WV}+n\lambda I)^{-1}\bm Y.
\]
where again by the reproducing property
\[\hat f_{\mathrm{KRR}}(\psi_W(w) \otimes \psi_V(v)) = (\bm k_W(w) \odot \bm k_V(v))^\top (K_{WV}+n\lambda I)^{-1}\bm Y\]
Setting $\lambda = \sigma^2/n$ shows the equivalence $\hat f_{\mathrm{KRR}}=\hat f$, i.e.\ the GP posterior mean exactly reproduces the KRR estimator for $f$, when evaluating both functions on features of $(w,v)$.

\paragraph{Posterior mean for $\bm \mu$}
Analogously, recall from \eqref{eq:Pmodel2} in the main text that
\[
\phi_{V,i}(V) \;=\; \mu_i(Z) \;+\; \xi_i,\qquad \xi_i \sim \mathcal N(0,\eta^2),
\]
where each  $\mu_i \sim \mc {GP}(0,k_Z)$ and  $\{\phi_{V,i}\}_{i \in I}$ are the Mercer eigenfunctions of $k_V$ with eigenvalues
$\{\lambda_{V,i}\}_{i \in I}$. Recall also from \eqref{eq:spectral_decomp} in the main text that vector-valued function $\mu = \bb E[\psi_V(V)|Z=\argdot]$ is parameterized by the coordinates $\{\mu_i\}_{i \in I}$ (where $I \subseteq \bb N$) via
\[
\mu(z) \;=\; \sum_{i \in I} \lambda_{V,i}\,\mu_i(z)\,\phi_{V,i}
\;=\; \sum_{i \in I} \sqrt{\lambda_{V,i}}\,\mu_i(z)\,e_{V,i}.
\]
See \cref{ass:eig} for the full assumptions regarding Mercer's theorem in the present setting.

For each fixed spectral index $i$, define the observation vector
$\phi_{V,i}(\bm V) := \big(\phi_{V,i}(V_1),\ldots,\phi_{V,i}(V_n)\big)^\top$.
With independent priors $\mu_i \sim \mathcal{GP}(0,k_Z)$ and noise variance $\eta^2$. Since the model \eqref{eq:Pmodel2} and prior on $\mu_i$ corresponds to a standard GP regression of $\phi_{V,i}(V)$ on $Z$ with Gaussian noise, and the noise variables and GP priors are independent across $i$, we get the standard form of the GP posterior mean in this setting,
\[
\widehat{\mu}_i(z) \;=\; \phi_{V,i}(\bm V)^\top\,(K_Z+\eta^2 I)^{-1}\,\bm k_Z(z).
\]
Therefore using the fact that \(k_V(v,v') \;=\; \sum_{i \in I} \lambda_{V,i}\,\phi_{V,i}(v)\,\phi_{V,i}(v') 
\), the posterior mean of $\mu(z)$ is
\[
\hat\mu(z)
\;=\; \sum_{i \in I} \lambda_{V,i}\,\widehat{\mu}_i(z)\,\phi_{V,i}
\;=\; \sum_{j=1}^n \alpha_j(z)\,\psi_V(V_j)
\;=\; \Phi_V \bm \alpha(z),
\]
where $\bm \alpha(z):=(K_Z+\eta^2 I)^{-1}\bm k_Z(z)$ and
$\Phi_V:=(\psi_V(V_1),\ldots,\psi_V(V_n))\in \mathcal H_V^{1 \times n}$.

Now, consider doing vector-valued KRR with separable operator-valued kernel $\Gamma(z,z')=k_Z(z,z')\,I_{\mathcal H_V}$:
\[
\min_{\mu \in \mathcal H_Z\otimes \mathcal H_V}
\frac{1}{n}\sum_{j=1}^n \|\psi_V(V_j)-\mu(Z_j)\|_{\mathcal H_V}^2
\;+\; \lambda\,\|\mu\|_{\mathcal H_Z\otimes \mathcal H_V}^2.
\]
By the vector-valued representer theorem the estimator is given as follows (see \cite{grunewalder2012conditional}),
\[
\hat\mu_{\mathrm{KRR}}(z) \;=\; \Phi_V\,(K_Z+n\lambda I)^{-1}\,\bm k_Z(z).
\]
Setting $\lambda=\eta^2/n$ yields $\hat\mu_{\mathrm{KRR}}=\hat\mu$.

As a result, the posterior mean of the overall causal function 
\(\gamma(w,z)=f(\psi_W(w)\otimes\mu(z))\)
reduces to the plug-in kernel estimator of \citet{singh2024kernel}, as demonstrated in the main text. 
The GP formulation therefore preserves the same point estimator while additionally providing posterior uncertainty via the covariance terms derived in \cref{thm:moment_derivation}.

\subsection{Limitations of BayesIMP}\label{app:details:bayesimp}

Here we provide additional analysis on the limitations of the kernel constructions used in \citet{chau2021bayesimp}, supporting our claims in \cref{sec:background:related_work} and \cref{sec:experiments}. 

\subsubsection{Nonstationarity and Underfitting}
\label{app:bayesimp_nonstationarity}
In the Toy Example in \cref{sec:experiments} we observe that BayesIMP's posterior mean systematically underfits the true causal function $\bb E[Y|\mathrm{do}(A=a)]$. Here we demonstrate what we believe to be the root of the problem: the nonstationarity of the kernel constructions used. For $f: \mc X \to \bb R$ (here we let $\mc X = \mc V \times \mc W$ for convenience), they use the following kernel
\[
r(x,x')
\;=\;
\int_{\mc X} k(x,t)\,k(t,x')\,d\nu(t)
\;=\;
\langle k(x,\argdot),\,k(x',\argdot)\rangle_{L^2(\nu)}.
\]
where $\nu$ is a finite, non-degenerate measure on $\mc X$. Since $r$ satisfies a certain \emph{`nuclear-dominance'} property w.r.t. $k$ (see \citet{lukic2001stochastic, flaxman2016bayesian}), a GP $f \sim \mc {GP}(0,r)$ is known to lie in the RKHS $\mc H_X$ with feature map $\psi_X(x) = k(\argdot,x)$. When $\nu$ is a probability measure concentrated around a location $m$ (e.g.\ a Gaussian $\mc N(m,\varsigma^2 I)$, as used in \citet{flaxman2016bayesian, chau2021bayesimp}) and $k$ is radial, $k(x,t)=h(\|x-t\|)$, the kernel $r$ is \emph{nonstationary}: it depends on both $x-x'$ and on the location of $x,x'$ relative to $m$. Unfortunately, the nature of this non-stationarity has undesirable consequences: the ``basis function'' $x'\mapsto r(x,x')$ \emph{shrinks} to zero as $x$ moves away from $m$. Below we formalize this effect, and further quantify the rate of decay under specific choices of kernel $k$ and measure $\nu$.

\begin{proposition}[Uniform tail limit for the nuclear-dominant kernel basis function]
\label{lem:sup_r_tail_general}
Let $k:\bb R^d\times\bb R^d\to[0,1]$ be a stationary, positive-definite kernel of the radial form
$k(x,t)=h(\|x-t\|)$, where $h:[0,\infty)\to[0,1]$ is nonincreasing and satisfies $\lim_{r\to\infty}h(r)=0$.
Let $\nu$ be a finite Borel measure on $\bb R^d$ and define
\begin{align}
r(x,x') = \int k(x,t)\,k(t,x')\,d\nu(t). \label{eq:nuc_dom_kern}
\end{align}
Then, for every $x\in\bb R^d$,
\begin{equation}\label{eq:sup-envelope-general}
\sup_{x'\in\bb R^d} r(x,x')
\;\le\;
\int k(x,t)\,d\nu(t)
\end{equation}
and consequently \(\sup_{x'\in\bb R^d} r(x,x')
\;\xrightarrow[\ \|x-m\|\to\infty\ ]{}\;0,\) for any fixed $m\in\bb R^d$.
\end{proposition}

\begin{proposition}[Decay rate for log-convex, integrable $h$ and density of $\nu$]
\label{lem:logconvex_decay}
Under the conditions of \cref{lem:sup_r_tail_general},
assume further that (i) $\log h$ is convex, (ii) $\int_{\bb R^d}h(x) dx < \infty$
and (iii) $\nu$ admits density $p$ w.r.t. Lebesgue measure
\[
p(t) = C_p\,h(\|t-m\|),
\]
where $m\in\bb R^d$ is the mode and $C_p>0$ is the normalizing constant.
Then, for all $x\in\bb R^d$,
\begin{equation}\label{eq:sup-decay-logconvex}
\sup_{x'\in\bb R^d} r(x,x')
\;\le\;
h(\tfrac 12 \|x-m\|).
\end{equation}
\end{proposition}

\begin{remark}
    The log-convexity and integrability assumptions on $h$ is satisfied by popular stationary kernels such as the Gaussian kernel, Laplace kernel, and Cauchy kernel.
\end{remark}
\noindent
\cref{lem:sup_r_tail_general} and \cref{lem:logconvex_decay} show that the entire basis function $x'\mapsto r(x,x')$ collapses in magnitude as $\|x-m\|\to\infty$, and when $\nu$ is chosen of the same form as $k$ and $h$ satisfies the log-convexity assumption, the rate of decay is given by the rate of tail decay of the kernel itself. \cref{fig:nonstationarity} demonstrates the decay for the Gaussian kernel $k(x,x') = \exp(-\|x-x'\|)$. This may induce location-dependent shrinkage and systematic underfitting in KRR/GP fits, particularly when evaluating away from $m$. 

Tuning the variance $\varsigma$ of the density of $\nu$ can mitigate but not remove the effect; it vanishes only in the limit of translation-invariant (improper) $\nu$, which is not finite on $\bb R^d$ (finiteness is required to enforce the nuclear dominance property). In experiments, we tried treating $\varsigma$ as an additional hyperparameter in the Toy Example and optimizing the marginal likelihood derived in BayesIMP. Whilst this improved performance (see \cref{appendix:experiments}), it did not fully resolve the underfitting problem.

\begin{figure}
    \centering
    \includegraphics[width=0.5\linewidth]{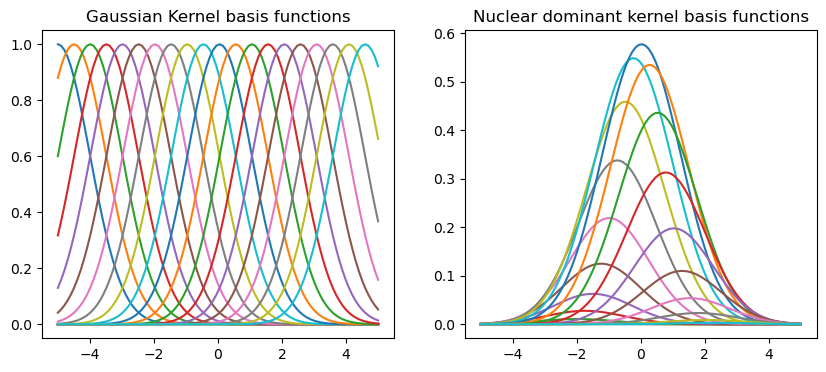}
    \caption{Basis functions of the Gaussian kernel $k(x,\argdot) = \exp(-\lVert x-\argdot\rVert)$ (left) and nuclear dominant kernel $r(x,\argdot) = \int k(x,t)k(\argdot,t)\mc N(dt|0,1)$ (right) for $x$ grid-spaced on $\{-5...,5\}$. $r(x,\argdot)$ collapses exponentially fast as $|x| \to \infty$.}
    \label{fig:nonstationarity}
\end{figure}

\subsubsection{Finite Dimensional Approximation and Variance Collapse}

In the synthetic benchmark in \cref{sec:experiments}, we also observed that BayesIMP's posterior variance collapsed out-of- distribution. Here we show that this occurs as a consequence of the finite dimensional GP approximations they use for tractability. Abstracting away from this particular experiment, the posterior variance of any causal effect $\gamma(w,z)$ of the form \cref{eq:causal_effect_of_interest} for BayesIMP is given by
\begin{align}
   \bb Var[\gamma(w,z)|\mc X^n] :=  \bb Var \langle f,\psi(w) \otimes \mu(z) \rangle_{\mc H_W \otimes \mc H_V}, \quad  f \sim {\mc GP}(\hat m_f,\hat r_f), \quad \mu(z) \sim \mc {GP}(\hat m_z, \hat r_z) \label{eq:post_var_BIMP}
\end{align}
where now $\hat m_f, \hat r_f$ are posterior functions trained on datasets $\mc D_1 = (Y_i^{(1)}, W_i^{(1)}, V_i^{(1)})_{i=1}^n$ and $\hat m_z, \hat r_z$ are posterior functions trained on $\mc D_2 = ( W_i^{(2)}, V_i^{(2)},  Z_i^{(2)})_{i=1}^{m}$ respectively. Note, \citet{chau2021bayesimp} focus on the setting where $\mc D_1$ and $\mc D_2$ are disjoint sets of samples (see \cref{sec:background:causaldatafusion} for more details on this set-up), hence why we emphasize the distinction between these datasets. When all observations arise from a single shared dataset, we have $n=m$ and $( W_i^{(1)}, V_i^{(1)}, Z_i^{(1)}) = (W_i^{(2)}, V_i^{(2)},  Z_i^{(2)})$ $\forall i  \in \{1,...,n\}$. As this variance is not tractable in closed form given the bespoke nuclear-dominant kernels used for $r_f,r_z$, \citet{chau2021bayesimp} replace $f$ and $\mu(z)$ with the finite dimensional approximations that retain the true posterior mean
\begin{align*}
    \hat f & = \sum_{i=1}^{n+m}a_i k_{W,V}((W_i,V_i),\argdot), \quad  \bm a \sim \mc N(\bm m_a,C_a) \\
    \hat \mu(z) & = \sum_{i=1}^{n+m}b_i(z) {k}_V(V_i,\argdot), \quad  \bm b(z) \sim \mc N(\bm m_b(z),C_b(z))
\end{align*}
where now $(W_i, V_i)_{i=1}^{n+m}$ are all observations concatenated from  $\mc D_1 \cup \mc D_2$ (in the single dataset case $m=0$) and $\bm m_a = (K_{W} \odot K_V)^{-1}\hat {\bm m}_f$, $C_a = (K_{W} \odot K_V)^{-1}\hat R_{f}(K_{W} \odot K_V)^{-1}$, $\bm m_b(z) = K_V^{-1}\hat {\bm m}_z$, $ C_b(z) =  K_V^{-1}\hat R_{z}K_V^{-1}$. Here $\hat {\bm m_f}, \hat {\bm m}_z$, $\hat R_f$, $\hat R_z$ are the posterior means and covariances of the true $f,\mu(z)$ evaluated on $(W_i, V_i)_{i=n+1}^{n+m}$.  With these approximations, \cref{eq:post_var_BIMP} is then estimated as 
\begin{align*}
   & \widehat {\bb Var}\langle f,\psi(w) \otimes \mu(z) \rangle_{\mc H_W \otimes \mc H_V}  =  \bb Var [\bm a^\top D(w) K_{V V} \bm b(z)]  \\
   &  \text{\small{$= \bm m_a^\top D(w)K_VC_b(z)K_VD(w) \bm m_a + \bm m_b(z)^\top D(w) K_V C_aK_{ VV}D(w) \bm m_b(z) + Tr[C_aD(w) K_{V V} C_b(z)K_VD(w)]$}} \\
   & =  \bm k_W(w) ^\top \Theta \bm k_W(w)
\end{align*}
where $D(w) = \mathrm{diag}(k_W(w,W_1),\dots,k_W(w,W_{n+m}))$, $\bm k_W(w) = [k_W(w,W_1),\dots,k_W(w,W_{n+m})]^\top$ and $\Theta \in \bb R^{(n+m) \times (n+m)}$. The last line follows from the general facts 
\[
u^\top D A D u \;=\; d^\top (A \odot uu^\top) d 
\quad\text{and}\quad 
\operatorname{Tr}(C D A D) \;=\; d^\top (A \odot C^\top) d ,
\]
with \(D=\mathrm{diag}(d)\) and \(d=\bm k_W(w)\), applied termwise to the three summands.

For any radial kernel $k_W$ (e.g.\ Gaussian, Cauchy, Exponential), the $i$th entry of $\bm k_W(w)$
decays to $0$ as $\|w - W_i\|\to\infty$. 
Since each term in $\widehat{\mathbb Var}\langle f,\psi(w)\otimes \mu(z)\rangle$ is quadratic in $\bm k_W(w)$, 
the overall variance decays roughly according to the \emph{square of the kernel decay rate}. 
For example, with a Gaussian kernel $k_W(w,w') = \exp(-\|w-w'\|^2)$, the posterior variance 
vanishes at a Gaussian-squared rate $\sim \exp(-2\|w\|^2)$ as $w$ moves out of the support of the data. This leads to extreme model overconfidence in those regions. This behaviour can be avoided for certain causal estimands where $W=\emptyset$ (e.g., when estimating ATE in the causal data fusion setting discuss in \cref{sec:background:causaldatafusion}), but in most standard single-dataset and fusion 
settings $W\neq\emptyset$, so this pathology is unavoidable.

\section{Experiments} \label{appendix:experiments}

\subsection{Evaluation Metrics}
\label{app:metrics}

We define the metrics used to evaluate methods in the synthetic experiments. Let
\(\mc X^n_r\), \(r=1,\ldots,R\), denote the dataset used in the \(r\)-th independent trial, drawn from the repeated-sampling distribution \(\bb P_{\mc X^n}\). In the causal data fusion setting, \(\mc X^n_r\) denotes the union of the datasets available in that trial, e.g. \(\mc X^n_r=\mc D_{1,r}\cup \mc D_{2,r}\). Let
\(\Theta=\{\theta_j\}_{j=1}^J\) denote the grid of causal-effect evaluation points used in a given experiment, where \(\theta_j\) may be an intervention value or an intervention--conditioning pair. For example, \(\theta_j=a_j\) in the toy ATE experiment, while \(\theta_j=(d_j,b)\) in the synthetic CATE benchmark.

For each \(\theta_j\), let \(\gamma(\theta_j)\) denote the true causal effect. In the synthetic experiments, \(\gamma\) is known from the data-generating structural causal model and is evaluated independently of the training datasets. When a closed-form expression is not used, we approximate it by high-accuracy Monte Carlo simulation from the interventional distribution,
\[
\gamma(\theta_j)
\approx
\frac{1}{L}
\sum_{\ell=1}^L
Y^{\mathrm{do}(\theta_j)}_\ell ,
\]
with \(L\) chosen large enough that this Monte Carlo error is negligible relative to the reported experimental variation. In the toy experiment, for instance, we estimate \(\gamma(a)=\bb E[Y\mid \mathrm{do}(A=a)]\) by drawing \(10^5\) Monte Carlo samples from the interventional distribution for each \(a\) on the evaluation grid. These ground-truth values are used only for evaluation. They are distinct from the plug-in estimates \(\hat\gamma\) used inside the calibration procedure in Algorithm~\ref{alg:calibration}.

Let \(\hat\mu_r(\theta_j)\) be the posterior mean produced by a method on trial \(r\), and let
\[
C_{r,j,\alpha}
=
[L_{r,j,\alpha},U_{r,j,\alpha}]
\]
denote the central \(\alpha\)-credible interval, or confidence interval for frequentist baselines, for \(\gamma(\theta_j)\) on trial \(r\), where \(\alpha\in(0,1)\).

\paragraph{Root mean squared error.}
For each trial, we compute the root mean squared error of the posterior mean,
\[
\mathrm{RMSE}_r
=
\left\{
\frac{1}{J}
\sum_{j=1}^J
\bigl(
\hat\mu_r(\theta_j)-\gamma(\theta_j)
\bigr)^2
\right\}^{1/2}.
\]
Tables report the mean and standard deviation of \(\mathrm{RMSE}_r\) across trials.

\paragraph{Calibration error.}
A central \(\alpha\)-credible interval is calibrated at \(\theta_j\) if it covers the true causal effect with probability \(\alpha\) over repeated draws of the dataset:
\[
\bb P_{\mc X^n}
\left(
\gamma(\theta_j)
\in
C_{\theta_j,\alpha}(\mc X^n)
\right)
=
\alpha .
\]
Thus, for a fixed evaluation point \(\theta_j\) and nominal level \(\alpha\), the calibration error is
\[
\Delta_{j,\alpha}
=
\left|
\bb P_{\mc X^n}
\left(
\gamma(\theta_j)
\in
C_{\theta_j,\alpha}(\mc X^n)
\right)
-
\alpha
\right|.
\]
Since \(\bb P_{\mc X^n}\) is unknown, we estimate it using the empirical distribution over trials,
\[
\widehat{\bb P}_{\mc X^n}
=
\frac{1}{R}
\sum_{r=1}^R
\delta_{\mc X^n_r}.
\]
This gives the empirical coverage
\[
\widehat{\mathrm{Cov}}_{j,\alpha}
=
\frac{1}{R}
\sum_{r=1}^R
\mathbbm{1}
\left\{
\gamma(\theta_j)
\in
C_{r,j,\alpha}
\right\},
\]
and the empirical calibration error
\[
\widehat{\Delta}_{j,\alpha}
=
\left|
\widehat{\mathrm{Cov}}_{j,\alpha}
-
\alpha
\right|.
\]

For calibration-error profiles, such as those shown in the main text, we compute the average calibration error separately for each fixed nominal level \(\alpha\):
\[
\widehat{\mathrm{CalErr}}(\alpha)
=
\frac{1}{J}
\sum_{j=1}^J
\widehat{\Delta}_{j,\alpha}.
\]
In our experiments, these profiles are computed over a grid of 100 nominal levels \(\alpha\in(0,1)\). For scalar summaries in tables, we average over a grid \(\mathcal A\) of nominal levels:
\[
\widehat{\mathrm{CalErr}}
=
\frac{1}{|\mathcal A|}
\sum_{\alpha\in\mathcal A}
\widehat{\mathrm{CalErr}}(\alpha)
=
\frac{1}{J|\mathcal A|}
\sum_{j=1}^J
\sum_{\alpha\in\mathcal A}
\left|
\widehat{\mathrm{Cov}}_{j,\alpha}
-
\alpha
\right|.
\]
This is the normalized version of the calibration objective in \cref{eq:cal}; the normalization does not affect comparisons between methods.

We also report marginal calibration curves, which plot
\[
\widehat{\mathrm{MCov}}(\alpha)
=
\frac{1}{RJ}
\sum_{r=1}^R
\sum_{j=1}^J
\mathbbm{1}
\left\{
\gamma(\theta_j)
\in
C_{r,j,\alpha}
\right\}
\]
against the nominal level \(\alpha\). Marginal calibration curves are standard in the calibration literature and summarize average empirical coverage. However, unlike the calibration-error profile \(\widehat{\mathrm{CalErr}}(\alpha)\), they can appear close to the diagonal even when a method is miscalibrated at individual evaluation points \(\theta_j\), because over-coverage and under-coverage can cancel after averaging over \(\theta_j\).

\paragraph{Bootstrap uncertainty for calibration summaries.}
For each method, the trials yield a single empirical calibration profile and a single marginal calibration curve. To construct approximate confidence intervals around these curves, we use the empirical bootstrap over trials. Specifically, each bootstrap replication samples \(R\) datasets with replacement from
\[
\widehat{\bb P}_{\mc X^n}
=
\frac{1}{R}
\sum_{r=1}^R
\delta_{\mc X^n_r},
\]
and recomputes the calibration-error profile and marginal calibration curve on the resampled trials. In the reported experiments we use 100 bootstrap replications, each consisting of \(R=50\) sampled trials.

\paragraph{Interval score.}
Calibration alone does not penalize unnecessarily wide intervals. We therefore also report interval scores, which reward narrow intervals while penalizing intervals that fail to cover the true causal effect. For an interval
\(C_{r,j,\alpha}=[L_{r,j,\alpha},U_{r,j,\alpha}]\), the interval score is
\[
\begin{aligned}
\mathrm{IS}_{r,j,\alpha}
&=
U_{r,j,\alpha}-L_{r,j,\alpha}+
\frac{2}{1-\alpha}
\bigl(L_{r,j,\alpha}-\gamma(\theta_j)\bigr)
\mathbbm{1}
\left\{
\gamma(\theta_j)<L_{r,j,\alpha}
\right\}+
\frac{2}{1-\alpha}
\bigl(\gamma(\theta_j)-U_{r,j,\alpha}\bigr)
\mathbbm{1}
\left\{
\gamma(\theta_j)>U_{r,j,\alpha}
\right\}.
\end{aligned}
\]
We report the average interval score
\[
\widehat{\mathrm{IS}}_{\alpha}
=
\frac{1}{RJ}
\sum_{r=1}^R
\sum_{j=1}^J
\mathrm{IS}_{r,j,\alpha}.
\]
In particular, \(\mathrm{IS}_{90}\) and \(\mathrm{IS}_{95}\) denote interval scores computed with \(\alpha=0.90\) and \(\alpha=0.95\), respectively. As with calibration error, uncertainty estimates for interval scores are obtained by bootstrapping over trials.

\begin{figure}[!]
\resizebox{\textwidth}{!}{%
\begin{tikzpicture}[node distance=0.5cm and 0.75cm, >=stealth, thick, scale=0.75, transform shape]
\definecolor{darkgreen}{rgb}{0.13, 0.55, 0.13}  

\node (Z) [circle, draw, color = red] {$A$};
\node (X) [circle, draw, right=of Z, xshift=0cm, color = darkgreen] {$\bm M$};
\node (Y) [circle, draw, right=of X, xshift=0cm, color = blue] {$Y$};
\node[draw, dashed, rounded corners, fit=(Z) (X), inner sep=0.25cm, label=below right:{$\mc D_1$}] {};
\node[draw, dashed, rounded corners, fit=(X) (Y), inner sep=0.1cm, label=below right:{$\mc D_2$}] {};

\draw[->] (Z) -- (X); 
\draw[->] (X) -- (Y);

\node (B) [circle, draw, right=of Y,xshift = 0.5cm, color = red] {$B$};
\node (A) [circle, draw, above=of B] {$A$};
\node (C) [circle, draw, right=of B, color = darkgreen] {$C$};
\node (D) [circle, draw, right=of C, color = red] {$D$};
\node (E) [circle, draw, above=of D] {$E$};
\node (Ymid) [circle, draw, right=of D, color = blue] {$Y$};

\draw[->] (A) -- (E);
\draw[->] (B) -- (C);
\draw[->] (C) -- (D);
\draw[->] (C) -- (E);
\draw[->] (D) -- (Ymid);
\draw[->] (E) -- (Ymid);

\draw[<->, dashed, bend left, gray] (A) to[out=50,in=110] (Ymid);
\draw[<->, dashed, bend left, gray] (B) to[out=-30,in=-150] (Ymid);

\node (Age) [circle, draw, right=of E, xshift=2cm, color = darkgreen] {Age};
\node (BMI) [circle, draw, below=of Age, color = darkgreen] {BMI};
\node (Statin) [circle, draw, right=of BMI, color = red] {Statin};
\node (Aspirin) [circle, draw, right=of Age] {Aspirin};
\node (Cancer) [circle, draw, right=of Aspirin] {Cancer};
\node (PSA) [circle, draw, below=of Cancer, color = darkgreen] {PSA};
\node (VOL) [circle, draw, right=of PSA, color = blue] {VOL};

\draw[->] (Age) -- (BMI);
\draw[->] (Age) -- (Aspirin);
\draw[->] (Age) -- (Statin);  
\draw[->] (BMI) -- (Aspirin);
\draw[->] (BMI) -- (Statin);
\draw[->] (Aspirin) -- (Cancer);
\draw[->] (Statin) -- (Cancer);
\draw[->] (Age) to[out=40, in=150] (Cancer);  
\draw[->] (BMI) -- (Cancer);
\draw[->] (Aspirin) -- (Cancer);  
\draw[->] (Statin) -- (Cancer);   
\draw[->] (Cancer) -- (PSA);
\draw[->] (PSA) -- (VOL);

\draw[->] (Age) -- (PSA);        
\draw[->] (BMI) to[out=-30, in=200] (PSA);  
\draw[->] (Aspirin) -- (PSA);    
\draw[->] (Statin) -- (PSA);     
\draw[->] (Cancer) -- (PSA);     

\node[draw, dashed, rounded corners, fit=(PSA) (VOL), inner sep=0.15cm, label=below right:{$\mc D_2$}] {};

\node[draw, dashed, rounded corners, fit=(Age) (BMI) (Statin) (Aspirin) (Cancer) (PSA), inner sep=0.3cm, label=below:{$\mc D_1$}] {};
\end{tikzpicture}
}
\caption{{Left}: Causal graph used in Toy Example where we aim to identify the average effect of $A$ on $Y$. Middle: Synthetic benchmark from \citet{aglietti2020causal} where we aim to identify different causal effects of $B$ and $D$ on $Y$ with unobserved confounding between $A \leftrightarrow Y$ and $B \leftrightarrow Y$. Right: Healthcare Example from \citet{chau2021bayesimp} where we aim to find the optimal level of statin dose to reduce cancer volume. $\mc D_1$ and $\mc D_2$ indicate that we only have access to datasets of observations in those subsets. Blue = outcomes, red = interventions, green = required observables for identifiability.}
\label{fig:causal_graphs}
\end{figure}

\subsection{Toy Example} \label{sec:app:experiments:toy}

In this experiment we aim to estimate the ATE $\mathbb E[Y|\mathrm{do}(A=a)]$ for the left causal graph in \cref{fig:causal_graphs}.

\subsubsection{Simulation Details}

The data generating process is given by 
\begin{align}
    A & \sim \mc U(0,1) \\
    M_d|A & \sim \mc N( \sin(\alpha_dA),\sigma^2_d), \quad  d  \in \{1,...,5\} \\
    Y | \bm M & \sim \mc N(\bm \beta^\top  \sin(\bm M), \sigma^2_y)
\end{align}
Where $\bm \alpha = 10 \times [1,1.75,2.5,3.25,4]$, $\bm \beta = 1/[1,2,3,4,5]$ and $(\sigma^2_d)_{d=1}^5,\sigma_y$ are set so that the signal to noise ratios are 2:1. 

For each simulation we draw two i.i.d. datasets $(Y^{(1)}_i, \bm M^{(1)}_i, A^{(1)}_i)_{i=1}^n$, $( Y^{(2)}_j, {\bm M}^{(2)}_j, A^{(2)}_j)_{j=1}^n$ of size $n=100$, and assume we only have access to the sets of observation pairs $\mc D_1 = (Y^{(1)}_i, \bm M^{(1)}_i)_{i=1}^n$ and $\mc D_2 = ({\bm M}^{(2)}_j, A^{(2)}_j)_{j=1}^n$.

\subsubsection{Causal Effect Identification}
Using the rules of do-calculus \cite{pearl2009causality}, we have
\begin{align}
    \mathbb E[Y|\mathrm{do}(A=a)] & = \int_{\mc M} \mathbb E[Y|\mathrm{do}(A=a),M=m] \bb P_{M|\mathrm{do}(A)}(dm|a) \\
                         & = \int_{\mc M} \mathbb E[Y|\mathrm{do}(A=a),M=m] \bb P_{M|A}(dm|a) \qquad (Y \ind A \in \mc G_{\underline A}) \\
                         & = \int_{\mc M} \mathbb E[Y|M=m] \bb P_{M|A}(dm|a) \qquad (Y \ind A|M \in \mc G_{\overline A})
\end{align}
\subsubsection{Implementation Details}

\textbf{True causal function:} The true causal function is estimated by drawing $10^5$ Monte Carlo samples from $\bb P_{Y|A}(\argdot|A=a)$ and averaging. We do this for $a \in \hat {\mc A}$, where $\hat {\mc A}$ is a uniformly spaced grid on $[0,1]$ of size $100$.
 
\textbf{IMPspec:} Our estimation target is $\gamma(w,z)$ as defined in \cref{eq:causal_effect_of_interest} in the main text, where $(W,V,Z) := (\emptyset, \bm M, A)$ (i.e., so that $\gamma(w,a) = \gamma(a))$. Since we are in the causal data fusion setting (i.e., we only observe observation pairs $(Y,\bm M)$ and $(M,A)$) and $W = \emptyset$), we estimate the posterior mean and variance of $\gamma(z)$ using the modification of \cref{thm:moment_derivation} for this case (see \cref{sec:background:causaldatafusion} for details).  The posterior credible intervals are constructed using the procedure outlined in \cref{sec:posterior:interval}. The exact algorithm used is the modification of Algorithm \ref{alg:post_CDF} (for $W = \emptyset$) discussed in \cref{remark:wempty} in \cref{sec:background:causaldatafusion}.

All kernels are set as Gaussian kernels with per-dimension lengthscales. The kernel parameters and noise variances are optimized using the marginal likelihoods in \cref{sec:posterior:hyperparameters}. We use 1000 iterations of  ADAM with a base learning rate of 0.1. We implement our method both with and without our spectral calibration procedure \cref{alg:calibration}, to do an ablation of this effect on calibration performance. When not using this procedure, we fix $\nu = \mc N(\bar {\bm M}, \hat S)$, where $\hat S = diag(\hat {\bb V}ar(M_1)\half,...,\hat {\bb V}ar(M_5)\half)$ and $\bar {\bm M} \approx 0$ is the empirical mean. When using this procedure, we set $\nu = \mc N(\bar {\bm M}, \omega \hat S)$ and choose the optimal $\omega \in 2^{\{-4,-2,0,2,4\}}$ that minimises the calibration error \cref{eq:cal} for $\alpha \in \Gamma$, and $a \in \hat {\mc A}$, where $\Gamma$ is a uniformly spaced grid on $[0,1]$ of size 101. We estimate \cref{eq:cal} using a 50:50 sample split (i.e. $n=50$ observations from each dataset are used to estimate $\bb P_{\mc X^{n/2}}$ via $\bb P_{\hat {\mc X}^{n/2}|\mc X^{n/2}}$ and the remaining $n=50$ observations from each dataset are used to estimate $\hat \gamma(a)$ via our posterior mean (recall IMPspec's posterior mean is equivalent to the original estimator in \citet{singh2024kernel} up to hyperparameter equivalence. We use 20 bootstrap replications to estimate $\bb P_{\hat {\mc X}^{n/2}|\mc X^{n/2}}$. 

\textbf{BayesIMP:} We implement BayesIMP using the implementation in Proposition 5 in \citet{chau2021bayesimp}\footnote{{We found two typos in Proposition 5, which when removed significantly improved the performance of their method. All presented results for BayesIMP are for the case where these typos are fixed.}}. The kernel $k_A$ are set as Gaussian kernels with per-dimension lengthscales, and the (nuclear dominant) kernel $r_M$ for the GP prior on $f$ is parameterized using their implementation:
\begin{align}
r_M = \int 
k_M(\argdot,m)k_M(\argdot,m)\mc N(0,\hat S)[dt] \label{eq:bayesimpkernel}
\end{align}
where recall that this construction is used to ensure samples of $f$ can be realized as elements of the RKHS $\mc H_M$ associated with base kernel $k_M$. We estimate the base kernel parameters and noise hyperparameters using (i) the marginal likelihood $p(\bm y|\bm m_1,...,\bm m_d) = \mc N(\bm 0,R_{M} + \sigma^2 I)$ and (ii) the marginal likelihood for $\langle \mathbb E[\psi(\bm M)|A], \psi(\bm m) \rangle_{\mc H_M}$ specified in their Proposition 3.  We use 1000 iterations of  ADAM with a base learning rate of 0.1. Since they do not provide details on how to set $\hat S$, we set $\hat S$ identically to IMPspec. 

\textbf{Sampling GP:} The baseline sampling GP we implement is essentially the method in \citet{witty2020causal} in the case where there is no unobserved confounding. In particular, in this case their approach specifies $Y = f(M) + U, M_d = g_d(A)+V_d$ where $U \sim \mc N(0,\sigma^2), V_d \sim \mc N(0,\eta^2_d)$, $f \sim \mc {GP}(0,k_M)$ and $g_d \sim \mc {GP}(0,k_{A,d})$. The kernel parameters and noise variances are trained by maximising the usual marginal likelihoods for each respective dataset: $p(\bm y|\bm m_1,...,\bm m_d) = \mc N(\bm y|\bm 0,K_{M} + \sigma^2 I)$ and $\prod_{j=1}^dp(\bm m_j|\bm a) = \prod_j \mc N(\bm m_j|\bm 0,K_{A,d} + \eta^2_dI)$.  We use 1000 iterations of ADAM with a base learning rate of 0.1.  The causal effect is then estimated by sampling from the posterior predictive distributions $\prod_{j=1}^dp(\bm m_j|\bm a, \mc D_2), p(\bm y|\bm m_1,...,\bm m_d, \mc D_1)$, which are again given by the usual formulae \cite{williams2006gaussian}. We use Gaussian kernels for $k_{M}$ and each $k_{A,d}$, with per-dimension lengthscales.

\subsubsection{Additional Ablations}

We report additional ablations of IMPspec and BayesIMP on the toy example.
Unless stated otherwise, all results are averaged over 50 trials and use the
same hyperparameter and training settings as reported above.

\paragraph{Effect of sample splitting.}
We first analyze the effect of sample splitting in IMPspec's spectral
calibration procedure. In the main experiments, we use separate data splits to
estimate \(\hat \gamma\) and to estimate the empirical distribution
\(\hat{\bb P}_{\mc X^n}\) appearing in the calibration objective
\eqref{eq:cal}. Table~\ref{tab:ablation_full} shows that calibration performance
worsens when this sample split is removed. One possible explanation is that
using the same data for both terms induces dependence between \(\hat \gamma\)
and \(\hat{\bb P}_{\mc X^n}\). Since the uncalibrated uncertainty estimates are
slightly overconfident in this experiment, this dependence may cause the
calibration procedure to select a spectral representation with posterior
variance that remains too small.

\paragraph{BayesIMP ablations.}
We also run two ablations for BayesIMP. The first quantifies the effect of
optimizing the marginal likelihood \(\log p(\bm y|\bm m_1,\ldots,\bm m_d)\) over
the variance parameter \(\varsigma\) of the integrating measure
\(\mc N(0,\varsigma \hat S)\) used to construct the nuclear-dominating kernel
\(r_M\); see \eqref{eq:bayesimpkernel}. This parameter controls the
nonstationarity of \(r_M\), which we argued in
\cref{app:bayesimp_nonstationarity} can lead to systematic underfitting. The
second ablation quantifies the effect of Monte Carlo approximating \(r_M\)
during training and inference. Except for special choices of base kernel
\(k_M\), the kernel \(r_M\) is not available in closed form, so such an
approximation is required in general. During training, we use \(m=\alpha n\)
Monte Carlo samples with \(\alpha=1\), which ensures that the resulting Gram
matrix \(R_M\) is invertible. At test time, where these matrices are computed
only once, we use \(10^5\) Monte Carlo samples.

Table~\ref{tab:ablation_full} shows that optimizing the integrating-measure
variance improves BayesIMP, but does not fully resolve the underfitting problem
relative to IMPspec. Monte Carlo approximating \(r_M\) further degrades
performance, highlighting an additional practical limitation of BayesIMP. In
contrast, IMPspec maintains lower RMSE and calibration error, with sample
splitting providing a further improvement in calibration.

\begin{table}[H]
    \centering
    \caption{
    Toy-example ablations. Results are mean \(\pm\) standard deviation over
    50 trials. BayesIMP-approx uses a Monte Carlo approximation to the
    nuclear-dominating kernel \(r_M\); BayesIMP-opt optimizes the variance of
    the integrating measure used to construct \(r_M\); IMPspec-nosplit removes
    sample splitting from the calibration procedure.
    }
    \label{tab:ablation_full}
    \begin{tabular}{lcc}
    \toprule
    Method & RMSE & Calibration error \\
    \midrule
    BayesIMP-approx
        & \(0.366 \pm 0.156\)
        & \(0.136 \pm 0.015\) \\
    BayesIMP
        & \(0.314 \pm 0.082\)
        & \(0.143 \pm 0.009\) \\
    BayesIMP-opt
        & \(0.269 \pm 0.086\)
        & \(0.094 \pm 0.010\) \\
    IMPspec-nosplit (ours)
        & \(\mathbf{0.223} \pm 0.046\)
        & \(0.087 \pm 0.009\) \\
    IMPspec (ours)
        & \(\mathbf{0.223} \pm 0.046\)
        & \(\mathbf{0.065} \pm 0.008\) \\
    \bottomrule
    \end{tabular}
\end{table}

\paragraph{Effect of number of bootstrap replications.}
We next study sensitivity to the number of bootstrap replications \(B\) used by
the calibration procedure. Table~\ref{tab:bootstrap-ablation} shows that RMSE,
calibration error, and wall-clock time are relatively stable across the range
considered. This suggests that, in this experiment, only a modest number of
bootstrap replications is sufficient to obtain a useful calibration signal.

\begin{table}[H]
\centering
\caption{
Sensitivity to the number of bootstrap replications used in calibration.
Results are averaged over 50 trials on NVIDIA Quadro P5000 and RTX A4500 GPUs.
}
\label{tab:bootstrap-ablation}
\begin{tabular}{lcccc}
\toprule
 & \multicolumn{4}{c}{Bootstrap replications} \\
\cmidrule(lr){2-5}
Metric & \(B=10\) & \(B=20\) & \(B=50\) & \(B=100\) \\
\midrule
RMSE
& \(0.23 \pm 0.01\)
& \(0.23 \pm 0.01\)
& \(0.23 \pm 0.01\)
& \(0.23 \pm 0.01\) \\
Calibration error
& \(0.07 \pm 0.01\)
& \(0.08 \pm 0.01\)
& \(0.08 \pm 0.01\)
& \(0.07 \pm 0.01\) \\
Wall-clock time (s)
& \(19.11 \pm 1.72\)
& \(16.10 \pm 1.27\)
& \(20.10 \pm 1.94\)
& \(19.03 \pm 1.65\) \\
\bottomrule
\end{tabular}
\end{table}

\paragraph{Effect of kernel choice.}
Finally, we test whether IMPspec's performance is specific to the Gaussian
kernel used in the main experiments. Table~\ref{tab:kernel-ablation} compares
the Gaussian kernel to a more flexible \(\gamma\)-exponential kernel family. The
two choices give similar calibration error after calibration, with only a small
increase in RMSE for the \(\gamma\)-exponential kernel. This suggests that the
performance of IMPspec is not especially sensitive to this kernel choice in the
toy example.

\begin{table}[H]
\centering
\caption{
Kernel ablation in the toy example. We compare the Gaussian kernel used in the
main experiments with a \(\gamma\)-exponential kernel.
}
\label{tab:kernel-ablation}
\begin{tabular}{llcc}
\toprule
Method & Kernel & RMSE & Calibration error \\
\midrule
IMPspec-nocal
    & Gaussian
    & \(0.223 \pm 0.046\)
    & \(0.098 \pm 0.010\) \\
IMPspec-cal
    & Gaussian
    & \(0.223 \pm 0.046\)
    & \(0.065 \pm 0.008\) \\
IMPspec-nocal
    & \(\gamma\)-Exp
    & \(0.236 \pm 0.006\)
    & \(0.088 \pm 0.010\) \\
IMPspec-cal
    & \(\gamma\)-Exp
    & \(0.238 \pm 0.006\)
    & \(0.067 \pm 0.008\) \\
\bottomrule
\end{tabular}
\end{table}

\paragraph{Robustness to noise misspecification and limited support.}
We also evaluate all methods under two more challenging variants of the toy
example. First, to test robustness to noise misspecification, we replace the
Gaussian outcome noise with heavy-tailed Student-\(t\) noise. Second, to test
performance under limited support coverage, we draw treatments from a skewed
\(\mathrm{Beta}(3,1/2)\) distribution, which concentrates mass near one end of
the treatment domain and makes extrapolation across the full range more
difficult. Table~\ref{tab:stress-tests} reports RMSE, calibration error, and
90\%/95\% interval scores. IMPspec remains best calibrated in both settings and
achieves the best interval scores after calibration, although under limited
support coverage Sampling-GP obtains slightly lower RMSE.

\begin{table}[H]
\centering
\caption{
Robustness to noise misspecification and limited support coverage in the toy
example. Results are mean \(\pm\) standard deviation over 50 trials. The
heavy-tailed design replaces Gaussian outcome noise with Student-\(t_{\nu=3}\) noise,
while the limited-support design draws treatments from a skewed
\(\mathrm{Beta}(3,1/2)\) distribution. IMPspec-cal achieves the best calibration
error and interval scores in both settings.
}
\label{tab:stress-tests}
\begin{tabular}{llcccc}
\toprule
Design & Method & RMSE & Cal. error & IS 90\% & IS 95\% \\
\midrule
\multirow{4}{*}{Heavy-tailed noise}
& Sampling-GP
    & \(0.37 \pm 0.01\)
    & \(0.16 \pm 0.09\)
    & \(2.41 \pm 0.20\)
    & \(3.73 \pm 0.42\) \\
& BayesIMP
    & \(0.58 \pm 0.01\)
    & \(0.35 \pm 0.17\)
    & \(6.43 \pm 0.34\)
    & \(11.41 \pm 0.69\) \\
& IMPspec-nocal
    & \(0.35 \pm 0.02\)
    & \(0.15 \pm 0.09\)
    & \(2.31 \pm 0.26\)
    & \(3.47 \pm 0.50\) \\
& IMPspec-cal
    & \(\mathbf{0.35} \pm 0.02\)
    & \(\mathbf{0.10} \pm 0.04\)
    & \(\mathbf{2.02} \pm 0.22\)
    & \(\mathbf{2.84} \pm 0.41\) \\
\midrule
\multirow{4}{*}{Limited support}
& Sampling-GP
    & \(\mathbf{0.32} \pm 0.01\)
    & \(0.10 \pm 0.05\)
    & \(1.40 \pm 0.06\)
    & \(1.74 \pm 0.10\) \\
& BayesIMP
    & \(0.45 \pm 0.02\)
    & \(0.19 \pm 0.12\)
    & \(2.92 \pm 0.31\)
    & \(4.47 \pm 0.61\) \\
& IMPspec-nocal
    & \(0.35 \pm 0.01\)
    & \(0.08 \pm 0.05\)
    & \(1.87 \pm 0.12\)
    & \(2.61 \pm 0.21\) \\
& IMPspec-cal
    & \(0.36 \pm 0.01\)
    & \(\mathbf{0.06} \pm 0.04\)
    & \(\mathbf{1.43} \pm 0.07\)
    & \(\mathbf{1.70} \pm 0.10\) \\
\bottomrule
\end{tabular}
\end{table}

\subsection{Synthetic Benchmark} \label{sec:app:experiments:synthetic}

In this experiment we aim to estimate the CATE $\mathbb E[Y|\mathrm{do}(D=d), B = b]$ for the middle causal graph in \cref{fig:causal_graphs}.

\subsubsection{Simulation Details}

We use the same data generating process as in \citet{aglietti2020causal}, which is:
\begin{align}
    U_1 & = \epsilon_1 \\
    U_2 & = \epsilon_2  \\
    F & = \epsilon_3 \\ 
    A & = F^2 + U_1 + \epsilon_A \\
    B & = U_2 + \epsilon_B \\
    C & = \exp(-B) + \epsilon_C \\
    D & = \exp(-C)/10 + \epsilon_D \\
    E & = \cos(A) + C/10\epsilon_E \\
    Y & = \cos(D) + \sin(E) + U_1 + U_2\epsilon_Y
\end{align}
where all noise variables $\epsilon_{\argdot} \sim \mc N(0,1)$. 

For each simulation we draw a single dataset $(A_i,B_i,C_i,D_i,E_i,Y_i)_{i=1}^n$ of size $n=100$.

\subsubsection{Causal Effect Identification}

Using the rule of do-calculus, we have
\begin{align}
    \mathbb E[Y|\mathrm{do}(D=d),B=b] & = \int_{\mc C}  \mathbb E[Y|\mathrm{do}(D=d), B=b, C=c] \bb P_{C|\mathrm{do}(D),B}(dc|d,b) \\
     & = \int_{\mc C}  \mathbb E[Y|\mathrm{do}(D=d), B=b, C=c] \bb P_{C|B}(dc|b) \qquad (C \ind D|B \in \mc G_{\overline D}) \\
     & = \int_{\mc C}  \mathbb E[Y|D=d, B=b, C=c] \bb P_{C|B}(dc|b) \qquad (Y \ind D|B,C \in \mc G_{\underline D})
\end{align}

\subsubsection{Implementation Details}

\textbf{True causal function:} Since  $\mathbb E[Y|D=d, B=b, C=c] = \cos(d) + \bb E[\sin(E)|C=c] = \mathbb E[Y|D=d, C=c]$, we can estimate $\mathbb E[Y|\mathrm{do}(D=d),B=b]$ for each pair $(d,b)$ as follows. We first sample $10^5$ Monte Carlo samples from $\bb P_{C|B}(\argdot|b)$ and $P(A)$. This gives samples $(A_i,C_i)_{i=1}^{10^5}$ from $\bb P_{A,C|\mathrm{do}(D=d),B=b}$. We then use each sample to sample from $\bb P_{E|C,A}$. This gives samples $(E_i)_{i=1}^{10^5}$ from $\bb P_{E|\mathrm{do}(D=d),B=b}$. Averaging $\cos(d)+\sin(E)$ w.r.t. these samples therefore estimates $\hat { \mathbb E}[Y|\mathrm{do}(D=d),B=b]$.

\textbf{IMPspec:} Our estimation target is $\gamma(w,z)$ as defined in \cref{eq:causal_effect_of_interest} in the main text, where now $(W,V,Z) := ((D,B),C,B)$. We therefore estimate (i) posterior moments of $\gamma$ using \cref{thm:moment_derivation} in the main text, credible intervals of $\gamma$ using Algorithm \ref{alg:post} in the main text, and (iii) optimize posterior calibration using \cref{alg:calibration} in the main text. The kernels used, training, and calibration settings are all identical to the Toy Example.

\textbf{BayesIMP:} We implement BayesIMP \citep{chau2021bayesimp} using their equations in Proposition 5 extended for the case where $W \neq \emptyset$ in \cref{eq:causal_effect_of_interest} in the main text. This extension is derived in \cref{appendix:details}. Note that their derivations are for the causal data fusion setting discussed in \cref{sec:background:causaldatafusion}. However, their method can be used in the single dataset setting $\mc D_1 = \mc D_2$ (in this case the concatenation of the datasets that they use becomes $\mc D_1 \cup \mc D_2 = \mc D_1 = \mc D_2$). The choice of kernels as well as the training and inference details are the same as in the Toy Example. 

\textbf{Sampling GP:} The sampling GP from \citet{witty2020causal} specifies $Y = f(B,C,D) + U$ and $C = g(B) + V$, with $f,g,U,V$ specified as in the Toy Example. The choice of kernels as well as the training and inference details are the same as in the Toy Example. 

\textbf{Continuous DR:}
We implement the continuous-treatment doubly robust estimator of
\citet{kennedy2017non}. Since this estimator targets marginal
continuous-treatment dose-response curves, rather than conditional
dose-response functions, we give it a favorable adaptation to the
fixed-\(B=0\) benchmark. Specifically, for this baseline we train on
observations from the fixed-\(B=0\) subproblem, i.e. samples from
\(P(\mathrm{Data}\mid B=0)\), so that the target reduces to the marginal
dose-response curve
\[
d \mapsto \bb E[Y\mid \mathrm{do}(D=d),B=0].
\]
This removes the need for Continuous DR to learn the conditioning on
\(B\), and therefore gives it an advantage relative to methods that
directly estimate the full conditional causal function.

Using the notation in \citet{kennedy2017non}, we set the continuous treatment to \(T=D\), the adjustment covariate to
\(L=C\), and the outcome to \(Y\). The nuisance components are estimated
using flexible regression models: a stacked learner with linear
regression, decision trees, random forests, gradient boosting, ridge
regression, and lasso for the treatment regression and conditional
treatment variance, and a random forest for the outcome regression
\(\hat\mu(l,t)\approx \bb E[Y\mid L=l,T=t]\). Conditional treatment
densities are estimated from standardized treatment residuals using a
Gaussian kernel density estimator, with bandwidth selected by five-fold
cross-validation.

Given these nuisance estimates, we form the doubly robust pseudo-outcome
\[
\hat\xi_i
=
\frac{Y_i-\hat\mu(L_i,T_i)}
     {\hat\pi(T_i\mid L_i)/\hat\varpi(T_i)}
+
\hat m(T_i),
\]
where \(\hat\pi\) and \(\hat\varpi\) denote the estimated conditional
and marginal treatment densities, and \(\hat m\) is the marginal
outcome-regression curve. The final dose-response estimate is obtained
by local-linear kernel regression of \(\hat\xi_i\) on \(T_i\), with
bandwidth chosen by leave-one-out cross-validation. Pointwise confidence
intervals use the corresponding sandwich-style standard error estimate
and Gaussian critical values. All reported results use \(50\)
independent trials with \(n=100\) observations per trial and \(100\)
intervention grid points.

\textbf{Orthogonal RF:}
We implement orthogonal random forests using the \texttt{DMLOrthoForest}
estimator from \texttt{EconML}. ORF is based on a partially linear
treatment-effect model of the form
\[
Y
=
g(X,W)
+
\theta(X)T
+
\epsilon,
\]
where \(T\) is the treatment, \(X\) contains the heterogeneity features,
and \(W\) contains additional controls. Thus ORF estimates the
heterogeneous linear treatment-effect slope \(\theta(X)\), rather than
an unrestricted nonlinear dose-response curve.

For the back-door synthetic benchmark, we set \(T=D\), \(X=B\),
\(W=C\), and the outcome to \(Y\). Under the ORF model, the implied
fixed-\(B=0\) response curve has the form
\[
d
\mapsto
\bb E[g(0,C)\mid B=0]
+
\theta(0)d,
\]
so ORF provides the slope \(\hat\theta(0)\) but not the intercept needed
to put the estimate on the same response-curve scale as the other
methods. We therefore reconstruct the ORF curve as
\[
\hat m_{\mathrm{ORF}}(d)
=
\hat c+d\,\hat\theta(0),
\]
where the intercept is chosen as the MSE-optimal vertical offset on the
evaluation grid,
\[
\hat c
=
\frac{1}{J}
\sum_{j=1}^J
\left\{
\gamma(d_j,0)-d_j\hat\theta(0)
\right\}.
\]
This oracle offset is favorable to ORF, since it removes vertical bias in
the reconstructed curve; however, the resulting estimate remains
restricted to the linear-in-\(d\) form implied by the ORF model.

The treatment and outcome nuisance regressions in the first stage are
random forests with \(200\) trees and minimum leaf size \(10\). The final
local nuisance models are fit using \texttt{LassoCV} with three-fold
cross-validation. For the orthogonal forest itself, we use \(500\) trees,
minimum leaf size \(10\), maximum depth \(10\), and subsampling ratio
\(0.7\). We use \texttt{inference="auto"} to obtain pointwise intervals
for \(\theta(0)\). Pointwise intervals for the reconstructed response
curve are obtained by transforming the ORF slope intervals through the
same affine map \(d\mapsto \hat c+d\theta\), swapping endpoints when
\(d<0\). Calibration and interval-score metrics are then computed from
these transformed intervals as described in \cref{app:metrics}.

\subsection{Causal Bayesian Optimization Tasks}  \label{sec:app:experiments:CBO}
 \textbf{Primer on Causal Bayesian optimization:} In this experiment we use our method to construct a GP prior for Bayesian optimization (BO) of several causal effects of interest. In general, in causal BO (CBO) \cite{aglietti2020causal}, one aims to find $\text{argmin}_{a \in \mc A}\bb E[Y|\mathrm{do}(A=a)]$ or $\text{argmax}_{a \in \mc A}\bb E[Y|\mathrm{do}(A=a)]$ by querying as few values of the treatment as possible. At the start, one typically has no interventional data, and so has to construct a GP prior for $\bb E[Y|\mathrm{do}(A=\argdot)] \sim \mc{GP}(m,k)$ (either naively or by using observational data). Each iteration recovers an observation $(\hat x, {\bb E}[Y|\mathrm{do}(X=\hat x)])$, which is then used to condition the GP prior on. Note that in practice to recover this observation one would have to run a large scale experiment under the intervention $\mathrm{do}(A=\hat a)$, and return an estimated ${\bb E}[Y|\mathrm{do}(A=\hat a)]$ using a very large empirical average. At each iteration the GP is used to construct an acquisition function. This function determines the next optimal treatment value $a$ by using the posterior uncertainty estimates to optimally trade off exploration and exploitation. GPs are popular surrogates for BO tasks as they are flexible, non-parametric models with good generalisation and uncertainty quantification properties \cite{williams2006gaussian}; enable acquisition functions to be computed in closed form; and are efficient with their use of the data \cite{shahriari2015taking}.
 
 In our case, we aim to find the optimal intervention that (i) minimises the CATE $\mathbb E[Y|\mathrm{do}(D=d),B=b]$ in the synthetic benchmark (middle causal graph in \cref{fig:causal_graphs}), (ii) maximises the ATT $\mathbb E[Y|\mathrm{do}(B=b),B=b']$ in the synthetic benchmark (middle causal graph in \cref{fig:causal_graphs}), and (iii) minimises the ATE $\mathbb E[\text{VOL}|\mathrm{do}(\text{Statin})]$ in a real Healthcare example with access to partial datasets (see right causal graph in \ref{fig:causal_graphs}). We search for the optimal intervention level, whilst fixing $B=0$ for the CATE and ATT in the synthetic benchmark. We note that despite the ATT being considered a counterfactual quantity, it is a `single-world' counterfactual quantity, and can be estimated experimentally \cite{richardson2013single}.

\subsubsection{Simulation Details}

\textbf{Synthetic benchmark:} The data generating process for the synthetic benchmark is already described above. For the task of minimising the CATE $\mathbb E[Y|\mathrm{do}(D=d),B=b]$  w.r.t. $D$ we draw datasets of size $n=100$, and for the task of maximising the ATT $\mathbb E[Y|\mathrm{do}(B=b),B=b']$ we draw dataset sizes of $n=500$. 

\textbf{Healthcare example:} For the first dataset $\mc D_1$ (see \cref{fig:causal_graphs}), we use the same data generating process as in \citet{chau2021bayesimp}:
\begin{align}
    \text{age} & = \mc U[15, 75] \\
    \text{bmi} & = \mc N(27 - 0.01  \cdot \text{age}, 0.7) \\
    \text{aspirin} & = \sigma(-8.0 + 0.1  \cdot   \text{age} + 0.03 \cdot  \text{bmi}) \\
    \text{statin} & = \sigma(-13 + 0.1 \cdot  \text{age} + 0.2 \cdot  \text{bmi}) \\
    \text{cancer} & = \sigma(2.2 - 0.05 \cdot  \text{age} + 0.01 \cdot  \text{bmi} - 0.04 \text{statin} + 0.02 \cdot  \text{aspirin}) \\
    \text{PSA} & = \mc N(6.8 + 0.04 \cdot  \text{age} - 0.15 \cdot  \text{bmi} - 0.6  \cdot \text{statin} + 0.55\cdot \text{aspirin} + \text{cancer}, 0.4)
\end{align}

and for each simulation we draw $n=100$ observations. For the second dataset $\mc D_2$ (see \cref{fig:causal_graphs}), we did not have access to the original data used in \cite{chau2021bayesimp}, but we still were able to construct a simulator based on real data. In particular, we use the estimated linear model (VOL $= \hat \beta$PSA) of the effect of prostate specific antigen (PSA) on cancer volume (VOL) in \citet{carvalhal2010correlation} as a simulator, by setting VOL $= \hat \beta$PSA$+U$, where $U \sim \mc N(0,\sigma^2)$ with $\sigma^2$ set to match the estimated $R^2 = 0.13$ in their study. For each simulation we draw $n=100$ observations for $\mc D_2$ using this model (note we use the data generating process above to first draw observations for PSA).

\subsubsection{Causal Effect Identification}

We have already derived the identification formulae for the CATE $\mathbb E[Y|\mathrm{do}(D=d),B=b]$ in the synthetic benchmark using the back-door criterion. The ATT  $ \mathbb E[Y|\mathrm{do}(B=b),B=b']$ in the synthetic benchmark can be identified using the do calculus on a counterfactual graph which has an added node $B'$ with the same parents as $B$ but all outgoing edges removed (see Theorem 1 in \cite{shpitser2012effects}). We call this graph $\mc G'$.
\begin{align}
    \mathbb E[Y|\mathrm{do}(B=b),B=b'] & = \int_{\mc C}  \mathbb E[Y|\mathrm{do}(B=b), B'=b', C=c] \bb P_{C|\mathrm{do}(B),B'}(dc|b,b') \\
     & = \int_{\mc C}  \mathbb E[Y|\mathrm{do}(B=b), B'=b', C=c] \bb P_{C|\mathrm{do}(B)}(dc|b) \qquad (C \ind B'|B \in \mc G'_{\overline B}) \\
     & = \int_{\mc C}  \mathbb E[Y|\mathrm{do}(B=b), B'=b', C=c] \bb P_{C|B}(dc|b) \qquad (C \ind B \in \mc G'_{\underline B}) \\
     & = \int_{\mc C}  \mathbb E[Y|B'=b', C=c] \bb P_{C|B}(dc|b) \qquad (Y \ind B|B',C \in \mc G'_{\overline B }) \\
     & = \int_{\mc C}  \mathbb E[Y|B=b', C=c] \bb P_{C|B}(dc|b) 
\end{align}

Lastly, the ATE $\mathbb E[\text{Vol}|\mathrm{do}(\text{Statin})]$ for the healthcare example can be identified as follows
\begin{align}
    \mathbb E[\text{Vol}|\mathrm{do}(\text{Statin})] & = \int  \mathbb E[\text{Vol}|\mathrm{do}(\text{Statin}), \text{PSA}] \bb P(d\text{PSA}|\mathrm{do}(\text{Statin})) \\
    & = \int  \mathbb E[\text{Vol}| \text{PSA}] \bb P(d\text{PSA}|\mathrm{do}(\text{Statin})) \qquad (\text{Vol} \ind \text{Statin} | \text{PSA} \in \mc G_{\overline {\text{Statin}}}) \\
    & = \int\int  \mathbb E[\text{Vol}| \text{PSA}] \bb P(d\text{PSA}|\text{Statin}, \text{Age}, \text{BMI}) \bb P(d(\text{Age}\times \text{BMI}))
\end{align}
Where the last-line follows from the fact that (Age, BMI) satisfies the back-door criterion w.r.t. (\text{Statin},\text{PSA}).

\subsubsection{Implementation Details}

\textbf{True causal function:} (Synthetic benchmark) Since  $\mathbb E[Y|B=b, C=c] = \bb E[\cos(D)|C=c] + \bb E[\sin(E)|C=c] = \mathbb E[Y|C=c]$, we can estimate $ \mathbb E[Y|\mathrm{do}(B=b),B=b']$ for each pair $(b,b')$ as follows. We sample $10^5$ Monte Carlo samples from $\bb P_{C|B}(\argdot|b)$ and $P(A)$ which gives samples $(A_i, C_i)_{i=1}^{10^5}$ from $\bb P_{A,C|\mathrm{do}(B=b), B=b'}$. We then use each sample to sample once from $\bb P_{E|A,C}$ and $\bb P_{D|C}$. This gives samples $(D_i, E_i)_{i=1}^{10^5}$ from $\bb P_{D,E|\mathrm{do}(B=b), B=b'}$. Averaging $\cos(D) + \sin(E)$ w.r.t. these samples therefore estimates $\mathbb E[Y|\mathrm{do}(B=b),B=b']$. See \cref{sec:app:experiments:toy} for estimation strategy for $\mathbb E[Y|\mathrm{do}(D=d),B=b]$.  (Healthcare example) Since $\bb E[\text{Vol}|\mathrm{do}(\text{Statin})]$ is a marginal interventional quantity, one can estimate it by simply modifying the data generating process so that $\text{Statin}=s$ in the case $\mathrm{do}(\text{Statin}=s)$. 

The following settings are constant across all BO experiments, and so in what follows we assume we are optimizing an abstract causal function $\gamma: \mc A \to \bb R$, and $a$ is the intervention value. Note that we target the CATE and ATT above while fixing the conditioning variable $B$ to zero, so this also covers those cases. 

\textbf{General BO settings:} For all methods and causal effects we use the expected improvement (EI) acquisition function, and run 10 BO iterations using Algorithm 1 in \citet{aglietti2020causal}. The kernel hyperparameters of each method are updated every $K=3$ iterations. We assess the performance of each method by the cumulative regret scores $\sum_{i=1}^{10} |\gamma(a^*) - \gamma(a^{best}_i)|$ and also through analysing the convergence profiles of $\gamma(a^{best}_i)_{i=1}^{10}$ (here $a^{best}_i = \text{argmin}_{a \in \{a_1,..,a_i\}}|\gamma(a^*) - \gamma(a)|$ is the current best intervention).  Cumulative regret rankings of methods can be sensitive to the number of iterations run. We focus on performance after 10 iterations (rather than a larger number) because in most real-life situations it is challenging to run more than 10 full scale experimental studies sequentially. Moreover, in healthcare situations this can be unethical: an exploratory intervention for a cancer drug dosage could cost livelihoods. We emphasise that finding the optimum (or achieving the same performance) in just two or three fewer iterations can have substantial practical impact. 

\textbf{IMPspec:} The implementation used for the ATT in the synthetic example $\bb E[Y|\mathrm{do}(B=b),B=b']$ is the same as for the CATE $\bb E[Y|\mathrm{do}(D=d),B=b]$ (see \cref{sec:app:experiments:toy}). The ATE in the healthcare example is of the form $ \gamma(s) = \bb E_{\text{Age},\text{BMI}}[\gamma(\text{Statin}=s, \text{Age}, \text{BMI})]$, where $\gamma(\text{Statin}, \text{Age}, \text{BMI})$ is derived from the two-stage model $(\text{Statin}, \text{Age}, \text{BMI}) \to \text{PSA}$, $\text{PSA} \to \text{Vol}$. The posterior on $\gamma(\text{Statin}, \text{Age}, \text{BMI})$ can therefore be estimated using the modification of \cref{thm:moment_derivation} for the causal data fusion setting presented in \cref{sec:background:causaldatafusion} and where $W = \emptyset$ (see Algorithm \ref{alg:post_CDF} and \cref{remark:wempty}). The posterior on $ \gamma(s)$ then follows by applying the formulae in \cref{corr:ate} in \cref{appendix:proofs}. Note, for the outer integrals we use an average over the empirical samples of $\mathrm{Age}$ and $\mathrm{BMI}$. For all CBO tasks, we use IMPspec to construct a GP prior for the causal effects using the posterior mean and covariance derived for each causal function: $f \sim \mc {GP}(\bb E[\gamma(\argdot)|\mc X^n],\bb Cov[\gamma(\argdot),\gamma(\argdot)|\mc X^n] + k_{RBF})$. Here $k_{RBF}$ is a Gaussian kernel. The choice of IMPspec's kernels as well as the training and inference details are the same as in the Toy Example.

\textbf{BayesIMP:} The implementation used for the ATT in the synthetic example $\bb E[Y|\mathrm{do}(B=b),B=b']$ is the same as for the CATE $\bb E[Y|\mathrm{do}(D=d),B=b]$ (see \cref{sec:app:experiments:toy}). For the ATE in the Healthcare example we use the exact implementation in Proposition 5 in \citet{chau2021bayesimp}. The choice of BayesIMP's kernels as well as the training and inference details are the same as in the Toy Example. For the CBO tasks the implementation is the same as IMPspec.

\textbf{CBO:} \citet{aglietti2020causal} construct the CBO prior $f \sim \mc {GP}(\hat {\bb E}[Y|\mathrm{do}(\argdot)],\sqrt{\hat {\bb V}ar(Y|\mathrm{do}(\argdot)}) + k_{RBF})$, where $k_{RBF}$ is a Gaussian (radial basis function) kernel and  $\hat {\bb E}[Y|\mathrm{do}(X)],\hat {\bb V}ar(Y|\mathrm{do}(X))$ are estimated from observational data. We estimate these functions using the approach in \citet{singh2024kernel}, using the same kernels and hyperparameters as our method. Note, to use this approach to estimate the variance, we estimate $\hat {\bb E}[Y^2|\mathrm{do}(X)]$ and then compute $\hat {\bb V}ar(Y|\mathrm{do}(X)) = \hat {\bb E}[Y^2|\mathrm{do}(X)] - \hat{ \bb E}[Y|\mathrm{do}(X)]^2$.

\textbf{BO:} The standard BO approach does not make use of observational data, and so simply uses $f \sim \mc {GP}(0,k_{RBF})$ as the prior (see \citet{aglietti2020causal} for further details).

\subsection{401(k) Dataset}
\label{sec:app:experiments:401k}

\paragraph{Dataset background.}
We use the well-known 401(k) dataset, previously studied in
\citet{chernozhukov2004effects} and commonly used as a benchmark
for treatment-effect estimation. The dataset records household-level
financial and demographic information relevant to the effect of 401(k)
pension-plan eligibility on financial wealth. We access the preprocessed
version through the \texttt{fetch\_401K} loader in the \texttt{DoubleML}
package \citep{bach2022doubleml}. In this dataset, the variable
\texttt{e401} indicates whether an employer offers access to a 401(k)
plan, while \texttt{net\_tfa} measures net total financial assets. We
therefore interpret eligibility as the treatment and net financial assets
as the outcome. Since there is no ground-truth causal effect available
for this observational dataset, we use it as a real-data case study for
assessing whether \textsc{IMPspec} produces plausible uncertainty-aware
effect estimates at a realistic sample size.

\paragraph{Causal estimand.}
Let
\(
Y = \texttt{net\_tfa}
\)
denote net total financial assets, and let
\(
A = \texttt{e401}
\)
denote the binary indicator for 401(k) eligibility. We study how the
effect of eligibility varies with household income,
\(
I = \texttt{inc}.
\)
The adjustment variables used in our implementation are
\[
V =
(\texttt{age},\texttt{marr},\texttt{twoearn},\texttt{pira},\texttt{hown}),
\]
corresponding to age, marital status, two-earner household status, IRA
participation, and home ownership. Thus, in the notation of
\cref{eq:causal_effect_of_interest}, we set
\[
W=(A,I),
\qquad
Z=I,
\qquad
V=(\texttt{age},\texttt{marr},\texttt{twoearn},\texttt{pira},\texttt{hown}).
\]
For \(a\in\{0,1\}\) and income level \(i\), the corresponding
conditional interventional mean is
\[
\gamma((a,i),i)
=
\int
\bb E[Y\mid A=a,I=i,V=v]\,
P_{V\mid I}(dv\mid i).
\]
The conditional average treatment effect of interest is then
\[
\tau(i)
=
\gamma((1,i),i)
-
\gamma((0,i),i).
\]

\paragraph{Preprocessing.}
We set the random seed to zero and shuffle the observations before
subsampling. We use the full sample size of \(n=9915\). All variables used by \textsc{IMPspec} are standardized
column-wise using
\[
x \mapsto \frac{x-\bar x}{s_x},
\]
where \(\bar x\) and \(s_x\) are the empirical mean and standard
deviation computed on the retained data. Standardization is applied to
the outcome \(Y\), income \(I\), the intervention-regression input
\(W=(A,I)\), and the adjustment variables \(V\). Predictions are
transformed back to the original outcome scale before plotting.

\paragraph{Model fitting.}
We use Gaussian kernels for \(k_W\), \(k_V\), and \(k_Z\), with
per-dimension lengthscales. The spectral term in the posterior variance
is approximated using \(10^5\) Monte Carlo samples from the spectral
measure. The final model is trained on the full standardized dataset for
\(1000\) iterations with learning rate \(0.2\), ridge regularization
\(10^{-3}\), and minibatches of size \(512\). The median heuristic for
kernel initialization is computed using \(512\) subsamples.

\paragraph{Spectral calibration.}
Since the 401(k) dataset has no ground-truth causal effect, we calibrate
the spectral-measure scale using the plug-in bootstrap procedure in
\cref{alg:calibration}. We search over the candidate scales
\[
\beta \in \{0.01,0.1,1\},
\]
using 20 bootstrap replications and nominal levels
\(\alpha\in\{0.01,\ldots,0.99\}\). Calibration is performed with a
50/50 train-calibration split. For computational efficiency, the
hyperparameter training steps inside calibration use the same minibatch
kernel objectives as final training, with minibatches of size \(512\).
Conditional on these fitted hyperparameters, each bootstrap calibration
interval is computed using the full calibration bootstrap sample. To
ensure that calibration only selects the spectral scale and does not
alter the final fitted hyperparameters, we run the calibration procedure
on a temporary model, retain only the selected scale \(\hat\beta\),
discard the temporary model, and then train the final \textsc{IMPspec}
model from scratch on the full dataset. The selected scale \(\hat\beta\)
is then used in the posterior variance and covariance computations for
the final CATE intervals.

\paragraph{CATE evaluation grid.}
We evaluate the treatment effect over a grid of \(50\) income values.
To avoid extrapolating far outside the empirical support, the grid is
restricted to the central empirical income range,
\[
i \in
\left[
\widehat Q_{0.05}(I),
\widehat Q_{0.95}(I)
\right].
\]
For each income value \(i\), we construct the two test inputs
\(
w_0(i)=(0,i),\) 
\(w_1(i)=(1,i),
\)
with common conditioning value \(z=i\). These test inputs are
standardized using the training-set means and standard deviations before
posterior evaluation.

\paragraph{Posterior mean and variance for the treatment effect.}
Define
\[
m_a(i)
:=
\gamma((a,i),i),
\qquad a\in\{0,1\}.
\]
We compute the posterior means
\[
\hat m_0(i)
=
\bb E[m_0(i)\mid \mc X^n],
\qquad
\hat m_1(i)
=
\bb E[m_1(i)\mid \mc X^n],
\]
using the paired posterior-mean computation. The posterior mean of the
CATE is then
\[
\hat\tau(i)
=
\hat m_1(i)-\hat m_0(i).
\]

To account for posterior dependence between the two intervention levels,
we compute
\[
\widehat{\mathrm{Var}}(m_0(i)\mid \mc X^n),
\qquad
\widehat{\mathrm{Var}}(m_1(i)\mid \mc X^n),
\qquad
\widehat{\mathrm{Cov}}(m_0(i),m_1(i)\mid \mc X^n),
\]
rather than treating the two posterior curves as independent. The
posterior variance of the CATE is therefore computed as
\[
\widehat{\mathrm{Var}}(\tau(i)\mid \mc X^n)
=
\widehat{\mathrm{Var}}(m_1(i)\mid \mc X^n)
+
\widehat{\mathrm{Var}}(m_0(i)\mid \mc X^n)
-
2\widehat{\mathrm{Cov}}(m_0(i),m_1(i)\mid \mc X^n).
\]
In the implementation, this quantity is clamped below at zero for
numerical stability before taking the square root. The posterior
standard deviation is then transformed back to the original outcome
scale.

For a nominal level \(\alpha\), the central credible interval is
\[
\widehat{\mathrm{CI}}_\alpha(i)
=
\left[
\hat\tau(i)
-
\Phi^{-1}\!\left(\frac{1+\alpha}{2}\right)
\hat\sigma_\tau(i),
\,
\hat\tau(i)
+
\Phi^{-1}\!\left(\frac{1+\alpha}{2}\right)
\hat\sigma_\tau(i)
\right],
\]
where
\[
\hat\sigma_\tau(i)
=
\sqrt{
\widehat{\mathrm{Var}}(\tau(i)\mid \mc X^n)
}.
\]
In \cref{fig:401k}, we plot the posterior mean together with central
\(50\%\), \(90\%\), and \(95\%\) credible intervals, using the calibrated
spectral scale \(\hat\beta\) in the posterior variance and covariance
terms.

\paragraph{Runtime and memory.}
The runtime and memory measurements reported in
\cref{tab:401k-runtime} are recorded separately for calibration, final
training, and final inference. Calibration is run first on a temporary
model and is used only to select the spectral scale \(\hat\beta\). Final
training is then performed from scratch on the full dataset using
minibatches of size \(512\). Final posterior inference uses the full
dataset and computes the posterior means, variances, and covariance
terms needed for the CATE credible intervals. Peak GPU memory is reported
as the maximum allocated memory during each stage.

\section{Mathematical Results and Proofs} \label{appendix:proofs}

\subsection{Auxiliary Results} \label{appendix:aux}

\begin{lemma} \label{lem:inj}
 Let $\mc V$ satisfy \cref{ass:top} and $k_V: \mc V^2 \to \bb R$ be a positive definite kernel that satisfies \cref{ass:kern} and \cref{ass:eig}. Then, the feature map 
 $\phi_V: \mc V \to \bb R^I, v \mapsto (\phi_{V,i}(v))_{i \in I}$ is injective, where $(\phi_{V,i})_{i \in I}$ are the eigenfunctions of the operator $\mc T_{k,\nu}$ defined in \cref{ass:eig}.
\end{lemma}

\begin{proof}
To start, note by Mercer's theorem that the canonical feature map can be written as
\[
\psi_V(v) = \sum_{i \in I} a_i\,e_i
\quad\in \mathcal H_V,
\]
where $(a_i)_{i \in I} := (\sqrt{\lambda_{V,i}}\;\phi_{V,i}(v))_{i \in I} \in \ell_2(I)$ and $(e_i)_{i \in I} := (\sqrt{\lambda_{V,i}}\;\phi_{V,i}(\argdot))_{i \in I}$ is an orthonormal basis of $\mc H_V$. Hence each $\psi_V(V_i)$ has coordinates
$\langle \psi_V(V_i), e_j\rangle = \sqrt{\lambda_{V,j}}\phi_{V,j}(V_i)$. Since all eigenvalues $\lambda_{V,i}$ are strictly positive, the map
\[
g: \bb R^I \longrightarrow \mc H_V, \quad \ (b_i)_{i \in I} \longmapsto \sum_{i \in I} b_i \lambda_{V,i}\half e_i
\]
is injective. This holds because the map $(b_i)_{i \in I} \mapsto (\lambda_{V,i}\half b_i)_{i \in I}$ from $\bb R^I \to \ell_2$ is injective, and each element of $\mc H_V$ has a unique set of co-ordinates $(a_i)_{i \in I} \in \ell_2(I)$. Combining the injectivity of $g$ with the injectivity of the composition \(\psi_V := g \circ \phi_V\) (which holds by characteristic-ness of $k_V$), we establish that $\phi_V: \mc V \to \bb R^I,  v \mapsto (\phi_{V,i}(v))_{i \in I}$ is injective. 
\end{proof}

\begin{lemma}[Series representation of an isonormal Gaussian process]
\label{lem:series-representation}
Let $(\Omega,\mathcal{F},\mathbb{P})$ be a probability space, and let $H$ be a separable real Hilbert space with orthonormal basis $(e_i)_{i \in I}$, where $I$ is countable. Let $X = (X(h))_{h \in H}$ be an isonormal Gaussian process on $H$, that is, a centered Gaussian family satisfying
\[
\mathbb{E}[X(h) X(g)] = \langle h, g \rangle_H, \qquad \forall h,g \in H.
\]
Then there is a sequence $(Z_i)_{i \in I}$ of independent standard Normal variables $(Z_i)_{i \in I}$ such that for every $h \in H$,
\begin{equation}
\label{eq:series-representation}
    X(h) = \sum_{i \in I} \langle h, e_i \rangle \, Z_i,
\end{equation}
where the series converges in $L^2(\Omega)$.
\end{lemma}

\begin{proof}
For each $i \in I$, define $Z_i := X(e_i)$. Since $X$ is centered Gaussian and
\[
\mathbb{E}[Z_i Z_j] = \mathbb{E}[X(e_i) X(e_j)] = \langle e_i, e_j \rangle_H = \delta_{ij},
\]
it follows that $(Z_i)_{i \in I}$ is a sequence of independent $\mc N(0,1)$ random variables. 

Fix $h \in H$ and set
\[
S_n(h) := \sum_{i=1}^n \langle h, e_i \rangle \, Z_i, \qquad n \in I.
\]
We show that $S_n(h) \to X(h)$ in $L^2(\Omega)$ directly by expanding the mean-square error:
\begin{align*}
\mathbb{E}\left[ \big| X(h) - S_n(h) \big|^2 \right]
&= \mathbb{E}[X(h)^2] + \mathbb{E}[S_n(h)^2] - 2\,\mathbb{E}[X(h)S_n(h)].
\end{align*}
Using the defining covariance structure of the isonormal process,
\[
\mathbb{E}[X(h)^2] = \|h\|_H^2.
\]
For the second term, note that \(S_n(h)\) is a finite linear combination of Gaussian variables:
\begin{align*}
\mathbb{E}[S_n(h)^2]
&= \sum_{i,j=1}^n \langle h, e_i\rangle \langle h, e_j\rangle\, \mathbb{E}[X(e_i)X(e_j)] \\
&= \sum_{i=1}^n \langle h, e_i\rangle^2 \\
&= \|P_n h\|_H^2,
\end{align*}
where \(P_n h := \sum_{i=1}^n \langle h, e_i\rangle e_i\) is the orthogonal projection of \(h\) onto \(\mathrm{span}\{e_1,\dots,e_n\}\).

For the cross term,
\begin{align*}
\mathbb{E}[X(h)S_n(h)]
&= \sum_{i=1}^n \langle h,e_i\rangle \,\mathbb{E}[X(h)X(e_i)] \\
&= \sum_{i=1}^n \langle h,e_i\rangle^2 \\
&= \|P_n h\|_H^2.
\end{align*}

Substituting these expressions gives
\begin{align*}
\mathbb{E}\left[ \big| X(h) - S_n(h) \big|^2 \right]
&= \|h\|_H^2 + \|P_n h\|_H^2 - 2\|P_n h\|_H^2 \\
&= \|h - P_n h\|_H^2.
\end{align*}
Since \(P_n h \to h\) in \(H\) as \(n \to \infty\), the right-hand side tends to \(0\), proving that \(S_n(h) \to X(h)\) in \(L^2(\Omega)\). This establishes the series representation \eqref{eq:series-representation}.
\end{proof}

\begin{lemma}[GP Regression Posterior on General Input Space]
\label{lem:gp-regression-posterior}
Let $(\mc X,d)$ be any metric space. Let $k:\mc X\times\mc X\to\bb R$ be a symmetric positive definite kernel and let 
$f\sim\mc{GP}(m,k)$ be a Gaussian process on $\mc X$ with mean function $m:\mc X\to\bb R$.
Fix distinct design points $X_{1:n}=(x_1,\dots,x_n)\in\mc X^n$ and observe
\[
Y_i \;=\; f(x_i) + \xi_i,\qquad \xi_i \stackrel{\text{i.i.d.}}{\sim}\mc N(0,\sigma^2),\quad \xi\perp f.
\]
Write $\bm Y=(Y_1,\dots,Y_n)^\top$, $m(X_{1:n})=(m(x_1),\dots,m(x_n))^\top$, 
$K(X_{1:n},X_{1:n})=[k(x_i,x_j)]_{i,j=1}^n$, and for $x\in\mc X$ define
$k(x,X_{1:n})=(k(x,x_1),\dots,k(x,x_n))^\top$.
Then the posterior of $f$ given $\bm Y$ is a Gaussian process
\[
f\mid \bm Y \;\sim\; \mc{GP}\big(m_*,k_*\big),
\]
with mean and covariance
\begin{align}
m_*(x) \;&=\; m(x) + k(x,X_{1:n})^\top\big(K(X_{1:n},X_{1:n})+\sigma^2 I_n\big)^{-1}\big(\bm Y - m(X_{1:n})\big), \label{eq:gp-post-mean}\\
k_*(x,x') \;&=\; k(x,x') - k(x,X_{1:n})^\top\big(K(X_{1:n},X_{1:n})+\sigma^2 I_n\big)^{-1}k(X_{1:n},x'). \label{eq:gp-post-cov}
\end{align}
Equivalently, for any finite test set $X_*=(x_1^*,\dots,x_m^*)$, the vector $f(X_*)\mid \bm Y$ is multivariate normal with mean and covariance obtained by evaluating \eqref{eq:gp-post-mean}–\eqref{eq:gp-post-cov} on $X_*$.
\end{lemma}

\begin{proof}
By the GP prior, the joint vector
\[
\Big(f(X_*),\,\bm Y\Big) \;=\; \Big(f(X_*),\, f(X_{1:n})+\xi\Big)
\]
is multivariate Gaussian. Its mean is $\big(m(X_*),\, m(X_{1:n})\big)$, and its covariance is
\[
\begin{pmatrix}
K(X_*,X_*) & K(X_*,X_{1:n})\\
K(X_{1:n},X_*) & K(X_{1:n},X_{1:n})+\sigma^2 I_n
\end{pmatrix},
\]
since $\xi$ is independent of $f$ and has covariance $\sigma^2 I_n$. 
Conditioning a jointly Gaussian vector on its second block yields that $f(X_*)\mid \bm Y$ is Gaussian with mean and covariance given by the standard block-conditioning formulas, which are exactly \eqref{eq:gp-post-mean}–\eqref{eq:gp-post-cov} evaluated on $X_*$. 
As this holds for every finite $X_*$, the posterior is the GP with mean $m_*$ and covariance $k_*$ stated above.
\end{proof}

\begin{remark}
\cref{lem:gp-regression-posterior} is simply the standard GP regression posterior in the case where $\mc X$ is a general metric space. Since this case is not regularly considered (we note \citet{koepernik2021consistency} recently analyzed GP posteriors in this setting), we present it for completeness.
\end{remark}

\begin{lemma}[Expectation of Isonormal GP]
\label{lem:EhXh}
Let $(\Omega_X,\mathcal F_X,\mathbb P_X)$ and $(\Omega_H,\mathcal F_H,\mathbb P_H)$ be probability spaces and write
$(\Omega,\mathcal F,\mathbb P):=(\Omega_X\times\Omega_H,\ \mathcal F_X\otimes\mathcal F_H,\ \mathbb P_X\otimes\mathbb P_H)$.
Let $\mc H$ be a separable real Hilbert space, and let $X:\mc H\to L^2(\bb P_X)$ be an isonormal Gaussian process; i.e., $X$ is a centered Gaussian linear isometry with
$\mathbb E[X(h)X(g)]=\langle h,g\rangle_\mc H$ for all $h,g\in \mc H$.
Let $H:\Omega_H\to \mc H$ be square-integrable, $H \in  L^2(\bb P_H; \mc H)$.
Define the random variable on $\Omega$ by $X(H)(\omega_X,\omega_H):=X\big(H(\omega_H)\big)(\omega_X)$ and the conditional expectation
$\mathbb E_H[\cdot]:=\mathbb E[\cdot\,|\,\mathcal F_X]$, i.e.\ integration over $\Omega_H$ only. Then,
\[
\mathbb E_H\big[X(H)\big] \;=\; X\big(\mathbb E[H]\big)
\quad\text{in } L^2(\bb P_X).
\]
\end{lemma}

\begin{proof}
Fix an orthonormal basis $(e_i)_{i \in I}$ of $\mc H$ and set $Z_i:=X(e_i)\in L^2(\bb P_X)$.
Since $X$ is an isonormal GP, it admits the expansion
\[
X(h) \;=\; \sum_{i \in I} \langle h,e_i\rangle\, Z_i
\quad\text{in } L^2(\bb P_X),
\]
with $(Z_i)$ i.i.d.\ $N(0,1)$ (for a proof see \cref{lem:series-representation}).

Let now $H\in L^2(\bb P_H; \mc H)$ be a random element and write
$P_n H := \sum_{i=1}^n \langle H,e_i\rangle e_i$. In this case, we can extend the above $ L^2(\bb P_X)$ convergence to convergence in $L^2(\bb P_X \otimes \bb P_H)$. In particular, by Fubini's Theorem,
\begin{align*}
\mathbb E_{X,H}\left[\Big|X(H)-\sum_{i=1}^n \langle H,e_i\rangle Z_i\Big|^2\right]
&= \mathbb E_H\,\mathbb E_X\left[\Big|X(H)-\sum_{i=1}^n \langle H,e_i\rangle Z_i\Big|^2\Bigm| H\right] \\
&= \mathbb E_H\big[\|H-P_n H\|_{\mc H}^2\big],
\end{align*}
where we use the convention that $\bb E_X[\argdot]$ is the integral w.r.t $\bb P_X$ and the same for $\bb E_H[\argdot]$. 

Since $P_n$ is an orthogonal projection, $\|H-P_n H\|_{\mc H} \le \|H\|_{\mc H}$ and $P_n H(\omega_H)\to H(\omega_H)$ in ${\mc H}$ for each $\omega_H$. Thus, by dominated convergence (with dominator $\|H\|_{\mc H}^2\in L^1(\Omega_H)$),
\[
\lim_{n \to \infty}\mathbb E_H\big[\|H-P_n H\|_{\mc H}^2\big] = \mathbb E_H\big[\lim_{n \to \infty} \|H-P_n H\|_{\mc H}^2\big]\; = \;0.
\]
Hence, for random $H \in L^2(\bb P_H; \mc H)$, we have the jointly convergent representation
\begin{align}
X(H) \;=\; \sum_{i \in I} \langle H,e_i\rangle\, Z_i
\quad\text{in } L^2(\bb P_X\otimes \bb P_H) \label{eq:series}
\end{align}

Now, for each $n$, let $S_n:=\sum_{i=1}^n \langle H,e_i\rangle Z_i$.
Because $\mathbb E_H[\cdot]$, by Jensen's Inequality is the orthogonal projection onto $ L^2(\bb P_X)$, it is a contraction on $ L^2(\bb P_X\otimes \bb P_H)$.\footnote{Note for any $U \in  L^2(\bb P_X\otimes \bb P_H)$ we have \(
\|\mathbb E_H[U]\|_{L^2(\bb P_X)}^2
= \mathbb E_X\big[(\mathbb E_H[U])^2\big]
\le \mathbb E_X\big[\mathbb E_H[U^2]\big]
= \|U\|_{L^2(\bb P_X \otimes \bb P_H)}^2.
\)} 
This implies that
\[
\big\|\mathbb E_H[S_n]-\mathbb E_H[X(H)]\big\|_{ L^2(\bb P_X)}
\;\le\; \|S_n - X(H)\|_{ L^2(\bb P_X\otimes \bb P_H)}
\;\xrightarrow[n\to\infty]{}\; 0.
\]
and so,
\[
\mathbb E_H[S_n]\;\xrightarrow[n\to\infty]{ L^2(\bb P_X)}\; \mathbb E_H[X(H)].
\]

Now, note for finite sums we can pull $\mathbb E_H$ inside:
\[
\mathbb E_H[S_n] \;=\; \sum_{i=1}^n \big(\mathbb E\langle H,e_i\rangle\big)\, Z_i,
\]
since each $Z_i$ is $\mathcal F_X$–measurable (hence constant w.r.t.\ $\mathbb P_H$).
Therefore,
\begin{equation}
\label{eq:Eh-limit}
\mathbb E_H[X(H)] \;=\; \sum_{i \in I} \big(\mathbb E\langle H,e_i\rangle\big)\, Z_i
\quad\text{in } L^2(\bb P_X).
\end{equation}

Finally, we identify the element of ${\mc H}$ whose coordinates are the means of the coordinates of $H$.
By Cauchy–Schwarz and Parseval,
\[
\sum_{i \in I} \big(\mathbb E\langle H,e_i\rangle\big)^2
\;\le\; \sum_{i \in I} \mathbb E\langle H,e_i\rangle^2
\;=\; \mathbb E\|H\|_{\mc H}^2 \;<\;\infty,
\]
so $m:=\sum_{i \in I} \big(\mathbb E\langle H,e_i\rangle\big)\, e_i\in {\mc H}$.
For any $u=\sum_{i \in I} u_i e_i\in {\mc H}$ with $(u_i)\in \ell^2$,
\[
\langle m,u\rangle_{\mc H}
=\sum_{i \in I} \big(\mathbb E\langle H,e_i\rangle\big)u_i
=\mathbb E\sum_{i \in I} \langle H,e_i\rangle u_i
=\mathbb E\langle H,u\rangle_{\mc H},
\]
where exchanging sum and expectation is justified by Cauchy–Schwarz:
$\mathbb E|\sum_{i \in I} \langle H,e_i\rangle u_i|
\le (\mathbb E\sum_{i \in I} \langle H,e_i\rangle^2)^{1/2}(\sum_{i \in I} u_i^2)^{1/2}<\infty$.
Thus $m$ represents the Bochner mean of $H$; i.e.\ $m=\mathbb E[H]\in {\mc H}$.
Using the series representation for deterministic arguments once more,
\[
X\big(\mathbb E[H]\big)=X(m)=\sum_{i \in I} \big(\mathbb E\langle H,e_i\rangle\big)\,Z_i
\quad\text{in } L^2(\bb P_X).
\]
Comparing with \eqref{eq:Eh-limit} yields $\mathbb E_H[X(H)]=X(\mathbb E[H])$ in $ L^2(\bb P_X)$, as claimed.
\end{proof}

\subsection{Proofs of Main Results and Omitted Results} \label{app:main_proofs}

\subsubsection{Representation of IMPspec}
Here we show that the processes defined in \eqref{eq:expected_gp} and \eqref{eq:impspec} 
converge in mean square. The key step is to justify that the expectation can pass through 
the (possibly unbounded) linear functional $f$, which follows directly from 
\cref{lem:EhXh}. For clarity, we define the Gaussian process $f$ and the random variables 
$(W,V,Z)$ on independent probability spaces.

\begin{proposition}[$L^2$-Convergent Representation of \textsc{IMPspec}]
\label{prop:gamma-push-expectation}
Let $(\mathcal W,\mathcal V,\mathcal Z)$ be standard Borel spaces and 
$\mathcal H_W,\mathcal H_V$ be separable real Hilbert spaces with feature maps 
$\psi_W:\mathcal W\to\mathcal H_W$ and $\psi_V:\mathcal V\to\mathcal H_V$. 
Let $\mathcal H:=\mathcal H_W\otimes\mathcal H_V$ denote the Hilbert tensor product, and 
let 
$f:\mathcal H\to L^2(\mathbb P_f)$ 
be an isonormal Gaussian process on a probability space 
$(\Omega_f,\mathcal F_f,\mathbb P_f)$; that is, 
$f$ is a centered Gaussian linear isometry satisfying
$\mathbb E_f[f(h)f(g)] = \langle h,g\rangle_{\mathcal H}$ for all $h,g\in\mathcal H$.

Let $(\Omega_h,\mathcal F_h,\mathbb P_h)$ be another standard Borel probability space,
independent of $(\Omega_f,\mathcal F_f,\mathbb P_f)$, and let 
$V:\Omega_h\to\mathcal V$ and $Z:\Omega_h\to\mathcal Z$ be random elements.
Assume $\psi_V(V)\in L^2(\mathbb P_h;\mathcal H_V)$.

For each $w\in\mathcal W$ and $z\in\mathcal Z$, define
\[
\gamma(w,z)
   := \mathbb E_V\left[f\big(\psi_W(w)\otimes\psi_V(V)\big)\,\middle|\,Z=z\right]
   \quad\in L^2(\mathbb P_f),
\]
where the conditional expectation is understood as a measurable version of the regular conditional expectation of the $L^2(\mathbb P_f\otimes\mathbb P_h)$ random variable 
$f(\psi_W(w)\otimes\psi_V(V))$ given $Z$.

Then, for $\mathbb P_Z$–almost every $z\in\mathcal Z$,
\[
\gamma(w,z)
   \;=\;
   f\Big(\psi_W(w)\otimes\mathbb E[\psi_V(V)\mid Z=z]\Big)
   \quad\text{in }L^2(\mathbb P_f).
\]
\end{proposition}

\begin{proof}
Fix $w\in\mathcal W$ and set $\mathcal H:=\mathcal H_W\otimes\mathcal H_V$.
Define
\[
H(\omega_h):=\psi_W(w)\otimes\psi_V(V(\omega_h))\in\mathcal H.
\]
Since $\psi_V(V)\in L^2(\mathbb P_h;\mathcal H_V)$ and $\psi_W(w)$ is fixed, we have
$H\in L^2(\mathbb P_h;\mathcal H)$ with
$\|H(\omega_h)\|_{\mathcal H}
= \|\psi_W(w)\|_{\mathcal H_W}\,\|\psi_V(V(\omega_h))\|_{\mathcal H_V}$.

Because $\mathcal Z$ is standard Borel and $\mathcal H$ is separable,
there exists a regular conditional law $\mathbb P_{H\mid Z=z}$ of $H$ given $Z$.
Since $H\in L^2(\mathbb P_h;\mathcal H)$,
\[
\int_{\mathcal H}\|h\|_{\mathcal H}^2\,\mathbb P_{H\mid Z=z}(dh)<\infty
\qquad\text{for }\mathbb P_Z\text{-a.e.\ }z,
\]
so the conditional second moment is finite.

For such a $z$, define the conditional space
$(\Omega_h^z,\mc F_h^z,\bb P_{H \mid Z=z}):=
(\mathcal H,\mc B(\mathcal H),\mathbb P_{H\mid Z=z})$
and let $H^z:\Omega_h^z\to\mathcal H$ be the identity map, $H^z(\omega)=\omega$.
Then $H^z\in L^2(\bb P_{H \mid Z=z};\mathcal H)$ and $H^z\sim\mathbb P_{H\mid Z=z}$.

Now consider the product space
\[
(\Omega,\mathcal F,\mathbb P):=
(\Omega_f\times\Omega_h^z,\ \mathcal F_f\otimes\mc F_h^z,\ 
  \mathbb P_f\otimes\bb P_{H \mid Z=z}).
\]
By independence of $(\Omega_f,\mathcal F_f,\mathbb P_f)$ and $(\Omega_h,\mathcal F_h,\mathbb P_h)$,
conditioning on $Z=z$ does not affect the law of $f$,
so $f$ and $H^z$ are independent on $\Omega$.
Define
\[
f(H^z)(\omega_f,\omega_h^z):=f(H^z(\omega_h^z))(\omega_f),
\quad\text{so }f(H^z)\in L^2(\mathbb P_f\otimes\bb P_{H \mid Z=z}).
\]

Applying \cref{lem:EhXh} (Expectation of Isonormal GP) to
$X=f$ and the random element $H^z$ yields
\[
\mathbb E_{H^z}[f(H^z)]
   = f(\mathbb E[H^z])
   \quad\text{in }L^2(\mathbb P_f).
\]
By the definition of the conditional law,
\[
\mathbb E_{H^z}[f(H^z)]
   = \mathbb E[f(H)\mid Z=z],
   \qquad
   \mathbb E[H^z] = \mathbb E[H\mid Z=z].
\]
Therefore, for $\mathbb P_Z$–a.e.\ $z\in\mathcal Z$,
\[
\mathbb E[f(H)\mid Z=z]
   = f(\mathbb E[H\mid Z=z])
   \quad\text{in }L^2(\mathbb P_f).
\]

Finally, since $H=\psi_W(w)\otimes\psi_V(V)$ and
$T:\mathcal H_V\to\mathcal H$, $u\mapsto\psi_W(w)\otimes u$, is bounded linear,
the Bochner conditional expectation gives
\[
\mathbb E[H\mid Z=z]
   = T\big(\mathbb E[\psi_V(V)\mid Z=z]\big)
   = \psi_W(w)\otimes\mathbb E[\psi_V(V)\mid Z=z].
\]
Substituting in these definitons, we get
\[
\gamma(w,z)
= \mathbb E[f(\psi_W(w)\otimes\psi_V(V))\mid Z=z]
= f\Big(\psi_W(w)\otimes\mathbb E[\psi_V(V)\mid Z=z]\Big)
\quad\text{in }L^2(\mathbb P_f),
\]
for $\mathbb P_Z$–a.e.\ $z$, as claimed.
\end{proof}

\subsubsection{Proof of \cref{thm:moment_derivation}}

\setcounter{theorem}{0}

\begin{theorem}[Posterior Moments of $\gamma$ (Full Statement)]
\label{thm:posterior_moments}
Let  \cref{ass:top}, \cref{ass:kern} and \cref{ass:eig} hold. Under \eqref{eq:isogp}–\eqref{eq:Pmodel2} in the main text, the posterior mean and variance of $\gamma(w,z)\mid \mc X^n$ are given by
\begin{align*}
    \mathbb E[\gamma(w,z)|\mc X^n] & =  \bm \beta(z)^\top K_{V}\bm\alpha(w) \\
     \mathbb Var[\gamma(w,z)|\mc X^n] & = S_1 + S_2+S_3 
\end{align*}
where 
\begin{align*}
    & S_1  =  \bm \beta(z)^\top K_{V}(Ik_W(w,w) - A(w)K_{V})\bm \beta(z)  \\
    & S_2  = \hat k_Z(z,z) (Tr[\tilde K_{V}(\bm \alpha(w)\bm \alpha(w)^\top  - A(w))] \\
    & S_3  = \tau (k(z,z) - \bm k_Z(z)^\top \bm \beta(z))k_W(w,w) \\
    & \bm \alpha(w)  = D(w)(K_{W}\odot K_{V} + \sigma^2I)^{-1}\bm Y \\
    & A(w)  = D(w) (K_{W} \odot K_{V} + \sigma^2I)^{-1}D(w)\\
    & \bm \beta(z)  = (K_{Z} + \eta^2I)^{-1}\bm k_Z(z) 
\end{align*}
and, for $x \in \{w,v,z\}$ we use the definitions
\begin{align*}
    & D(w) := \mathrm{diag}(\bm k_W(w)) \quad  
     \bm k_X(x) := [k_X(x,X_i)]_{i=1}^n , \\
    & \tilde K_{V} := \textstyle\int \bm k_V(v)\bm k_V(v)^\top d \nu(v) \quad 
     \tau := \textstyle\sum\nolimits_{i \in I}\lambda_i ,
\end{align*}
\end{theorem}

\begin{proof}
We first derive the factorization structure of the posterior on 
$f,(\mu_i)_{i \in I} \mid \mathcal X^n$, 
where recall $\mathcal X^n := \{(Y_i,V_i,W_i,Z_i)\}_{i=1}^n$. 
We write $\bm Y := (Y_i)_{i=1}^n$ and similarly for 
$\bm W, \bm V, \bm Z$, and note that $I \subseteq \mathbb N$ is countable by \cref{ass:eig}. 
For any random variables $X_1,X_2$ taking values in standard Borel spaces, 
we denote by $\mathbb P_{X_1|X_2}(\argdot\mid \argdot)$ or equivalently 
$\mathbb P(X_1 \in \argdot\mid \argdot)$ the corresponding regular conditional distribution (probability kernel). 

\paragraph{Deriving the Posterior Factorization.}
Let 
\[
\bm f:=\big[f(\psi_W(W_1)\!\otimes\!\psi_V(V_1)),\dots,f(\psi_W(W_n)\!\otimes\!\psi_V(V_n))\big],\quad
\bm \mu_i:=[\mu_i(Z_1),\dots,\mu_i(Z_n)]\ \ \forall i\in I,
\]
be the evaluations at the training points, and let
\[
\bm f^*:=\big[f(\psi_W(w_1)\!\otimes\!\psi_V(v_1)),\dots,f(\psi_W(w_m)\!\otimes\!\psi_V(v_m))\big],\quad
\bm \mu_i^*:=[\mu_i(z_1),\dots,\mu_i(z_m)], \ \forall i\in I,
\]
be the evaluations at $m$ fixed test points. 
Under the GP priors on $f$ and $(\mu_i)_{i\in I}$, these are Gaussian random vectors:
\[
(\bm f,\bm f^*)\sim\mathcal N(0,\Sigma_f),
\qquad
\big((\bm \mu_i,\bm \mu_i^*)_{i\in I}\big)\sim\bigotimes_{i\in I}\mathcal N(0,\Sigma_\mu),
\]
where the block covariances $\Sigma_f$ and $\Sigma_\mu$ are induced by $k_W\otimes k_V$ and $k_Z$, respectively. 
Hence
\[
\bm f,\bm \mu_i\in(\mathbb R^n,\mathcal B(\mathbb R^n)),\qquad
\bm f^*,\bm \mu_i^*\in(\mathbb R^m,\mathcal B(\mathbb R^m)),
\]
where $\mathcal B(\mathbb R^m)$ denotes the Borel $\sigma$-algebra on $\mathbb R^m$.
By Assumption~\ref{ass:top}, the observation spaces
$\mathcal Y,\mathcal W,\mathcal V,\mathcal Z$ are also standard Borel. 
Since a countable product of standard Borel spaces is again standard Borel 
under the product $\sigma$-algebra (see, e.g., \citet[Thm.~2.3]{parthasarathy2005probability}), 
the joint law
\[
\mathbb P_{\mathcal X^n,\bm f,\bm f^*,(\bm \mu_i,\bm \mu_i^*)_{i\in I}}
\]
is a Borel probability measure, and regular conditional distributions exist on all components \cite{kallenberg1997foundations}. 
Under the Gaussian observation models \eqref{eq:Pmodel}–\eqref{eq:Pmodel2} 
and independence of the priors, this joint law admits the following disintegration:
\begin{align*}
\mathbb P_{\mathcal X^n,\bm f,(\bm \mu_i)_{i\in I},\bm f^*,(\bm \mu_i^*)_{i\in I}}
&=
\mathbb P_{\bm Y\mid \bm V,\bm W,\bm f}
\otimes
\mathbb P_{\bm W\mid \bm V,\bm Z}
\otimes
\mathbb P_{\bm V\mid \bm Z,(\bm \mu_i)_{i\in I}}
\otimes
\mathbb P_{\bm Z}
\otimes
\mathbb P_{\bm f^*\mid \bm f}
\otimes
\mathbb P_{\bm f}
\otimes
\bigotimes_{i\in I}\big(\mathbb P_{\bm \mu_i^*\mid \bm \mu_i}\otimes \mathbb P_{\bm \mu_i}\big).
\end{align*}

where we have used the fact that (i) $\bm Y \ind (\bm Z, \bm f^*, (\bm \mu_i, \bm \mu^*_i)_{i \in I} )\mid \bm V, \bm W, \bm f$ by \eqref{eq:Pmodel}, (ii) $\bm V \ind (\bm f, \bm f^*, (\bm \mu^*_i)_{i \in I}) \mid \bm Z, (\bm \mu_i)_{i \in I}$ by\footnote{By \cref{lem:inj} the feature map $\phi_V := v \mapsto (\phi_{V,i}(v))_{i \in I}$ is injective. Thus, the conditional independence between $\bm V$ and $\bm g := (\bm f, \bm f^*, (\bm \mu^*_i)_{i \in I})$ is equivalent to the conditional independence between $\phi_V(\bm V)$ and $\bm g$. To see this, note by injectivity $\bb P (\bm V \in A \mid \bm g, (\mu_i)_{i \in I}, \bm Z) = \bb P(\phi_V(\bm V) \in \phi_V(A) \mid \bm g,(\mu_i)_{i \in I}, \bm Z)$. By \eqref{eq:Pmodel2},  we have $\phi_{V,i}(V) = \mu_i(Z) + \xi_i$, and so the conditional independence $\phi_V(\bm V) \ind \bm g \mid (\mu_i)_{i \in I}, \bm Z$ holds. This means  $\bb P (\bm V \in A \mid \bm g, (\bm \mu_i)_{i \in I}, \bm Z) = \bb P(\phi_V(\bm V) \in \phi_V(A) \mid (\bm \mu_i)_{i \in I},\bm Z) =  \bb P(\bm V \in \phi_V^{-1} \{\phi_V(A)\} \mid (\bm \mu_i)_{i \in I},\bm Z) = \bb P(\bm V \in A \mid (\bm \mu_i)_{i \in I},\bm Z)$.} \eqref{eq:Pmodel2}, as well as the fact that $\bm W \ind (\bm f, \bm f^*,(\bm \mu_i, \bm \mu_i^*)_{i \in I}) \mid \bm V, \bm Z$ and $\bm Z \ind (\bm f, \bm f^*,(\bm \mu_i, \bm \mu_i^*)_{i \in I})$ which holds because $f, (\mu_i)_{i \in I}$ only affect the conditional factors for $\bm Y, \bm V$ respectively, by construction.

Hence, letting $p(\bm Y|\bm f,\bm W, \bm V)$ and $p(\bm V\mid (\bm \mu_i)_{i \in I}, \bm Z)$ be (conditional) densities w.r.t. Lebesgue measure (which exist under the Gaussian noise models\footnote{Note that since \(\phi_{V,i}(V) \mid Z,(\mu_i)_{i \in I}\) is continuous, so is \(V \mid Z,(\mu_i)_{i \in I}\). 
Indeed, assume for contradiction that \(V \mid Z,(\mu_i)_{i \in I}\) has an atom at \(v_0\), i.e.\ \(\mathbb P(V = v_0 \mid Z,(\mu_i)_{i \in I}) = p > 0\). 
Since \(\phi_{V,i}\) is deterministic, 
\(
\{V = v_0\} \subseteq \{\phi_{V,i}(V) = \phi_{V,i}(v_0)\}
\quad \Rightarrow \quad
\mathbb P\big(\phi_{V,i}(V) = \phi_{V,i}(v_0) \mid Z,(\mu_i)_{i \in I}\big) \ge p > 0,
\)
so \(\phi_{V,i}(V) \mid Z,(\mu_i)_{i \in I}\) has an atom at \(\phi_{V,i}(v_0)\), contradicting the assumption that it is continuous. 
Hence, \(V \mid Z,(\mu_i)_{i \in I}\) must also be atomless.}
 \eqref{eq:Pmodel}-\eqref{eq:Pmodel2}), by Bayes’ rule and the independence of the prior,
\begin{align*}
& \bb P_{\bm f^*,(\bm \mu_i^*)_{i \in I} \mid \mc X^n}(A,B)
\; \propto\; \\
& \int_B\int_{\bb R^{m \times I}}\int_{A}\int_{\bb R^m}p(\bm Y \mid \bm f,\bm W,\bm V)p(\bm V \mid (\bm \mu_i)_{i \in I},\bm Z)\bb P(d\bm f \mid \bm f^*)\bb P(d\bm f^*)\bb P(d(\bm \mu_i)_{i \in I} \mid (\bm \mu_i^*)_{i \in I}) \bb P(d(\bm \mu_i^*)_{i \in I}) \\
& \propto \int_{A}\int_{\bb R^m} p(\bm Y \mid \bm f,\bm W,\bm V) \bb P(d\bm f \mid \bm f^*)\bb P(d\bm f^*)\int_{B}\int_{\bb R^{m \times I}} p(\bm V \mid (\bm \mu_i)_{i \in I},\bm Z)\bb P(d(\bm \mu_i)_{i \in I} \mid (\bm \mu_i^*)_{i \in I}) \bb P(d(\bm \mu_i^*)_{i \in I})  \\
& = \bb P_{\bm f^*\mid \bm V, \bm W, \bm Y}(A) \otimes \bb P_{(\bm \mu^*_i)_{i \in I}  \mid \bm V, \bm Z}(B)
\end{align*}

for any measurable sets $(A,B) \in \mc B(\bb R^m) \otimes \mc B(\bb R^{I \times m})$. Here 
\(
\mc B(\mathbb R^{I\times m})
:= 
\bigotimes_{i\in I}\mc B(\mathbb R^m)
\)
is the product $\sigma$-algebra on $(\mathbb R^m)^I$.

We next further simplify each posterior component. For the posterior on $\bm f^*$,
since $k_W,k_V$ are assumed characteristic, their canonical feature maps $\psi_W$, $\psi_V$ are injective. Defining  $\Phi := \big(\psi_W(W_i)\otimes \psi_V(V_i)\big)_{i=1}^n$, the map
$(\bm W,\bm V) \mapsto \Phi$ is therefore also injective (coordinate-wise). The $\sigma$-algebras  $\sigma(\bm W,\bm V)$ and $\sigma(\Phi)$ therefore
generate the same information and so conditioning on $\Phi$ is equivalent to conditioning on $(\bm W,\bm V)$. Thus, for $\bm f^*$, it suffices to recover the posterior on $\bm f^* \mid \bm Y, \bm \Phi$. 

Meanwhile, the posterior on $(\bm\mu^*)_{i \in I}$ factorizes over the co-ordinates $i \in I$. This is easy to see since for any measurable $B \in \mc B(\bb R^{I \times m})$,
\begin{align*}
\mathbb P\big((\bm \mu^*_j)_{j\ge1}\in B\ \big| \bm V,\,\bm Z\big)
&= \mathbb P\big((\bm \mu^*_j)_{j\ge1}\in B\ \big|\ (\phi_{V,j}(\bm V))_{j\ge1},\,\bm Z\big) \\
&= \bigotimes_{j \in I} \mathbb P\big(\bm \mu^*_j \in \pi_j[B] \,\big|\, \phi_{V,j}(\bm V),\,\bm Z\big),
\end{align*}
where the first equality holds by the injectivity of $\phi_V$, the second equality holds by the independent noise model for each $\phi_{V,i}(V)$ \eqref{eq:Pmodel2}, and $\pi_j$ denotes the projection onto the $j$th coordinate. Thus, it suffices to analyze the posterior distributions ${\mu_j \mid \phi_{V,j}(\bm V) ,\bm Z}$  for $j \in I$.

Altogether, this implies that the posterior factorizes as
\[\bb P_{\bm f^*,(\bm \mu_i^*)_{i \in I} \mid \mc X^n}(A,B) =  \bb P_{\bm f^*\mid \bm \Phi, \bm Y}(A)  \otimes \left(\bigotimes_{j \in I} \mathbb P_{\bm \mu^*_j\mid \phi_{V,j}(\bm V),\,\bm Z}\big)\right)
\]
Since $m\in\mathbb N$ and the test points 
$(w_1,\dots,w_m)$, $(v_1,\dots,v_m)$, and $(z_1,\dots,z_m)$ 
were arbitrary, the above finite-dimensional posteriors 
jointly characterize the posterior distributions 
$f\mid \bm \Phi, \bm Y$ and $(\mu_i)_{i \in I} \mid \bm V, \bm Z$ 
of each process, which together define the joint posterior $(f,(\mu_i)_{i\in I}) \mid \mathcal X^n$.
\medskip

\paragraph{Deriving the Posteriors on $\bm f$ and $\bm \mu$} Note $f$ is an isonormal GP on $\mathcal H=\mathcal H_W\otimes \mathcal H_V$ with covariance
$\langle\cdot,\cdot\rangle_{\mathcal H}$. Define the $n\times n$ Gram matrix
$K_{W} \odot K_V:=K_{W} \odot K_V$ with $(K_W)_{ij}=k_W(W_i,W_j)$, $(K_V)_{ij}=k_V(V_i,V_j)$, and write for any $h \in \mc H$
\[
h^\top\Phi \;:=\;[\big\langle h,\,\psi_W(W_i)\otimes \psi_V(V_1)\big\rangle_{\mathcal H},\dots,\big\langle h,\,\psi_W(W_i)\otimes \psi_V(V_n)\big\rangle_{\mathcal H}] \in \bb R^{1 \times n}.
\]
where we define $(\Phi^\top h) = (h^\top \Phi)^\top$. By \cref{lem:gp-regression-posterior}, under the observation model \eqref{eq:Pmodel} with noise variance $\sigma^2$, the GP posterior is
\[
f\mid \bm Y,\Phi \;\sim\; \mathcal{GP}\big(m,v\big),
\]
with
\begin{align*}
m(h) \;&=\; h^\top \Phi\,\big(K_{W} \odot K_V+\sigma^2 I_n\big)^{-1}\bm Y,\\
v(h,h') \;&=\; \langle h,h'\rangle_{\mathcal H}
\;-\; h^\top \Phi\,\big(K_{W} \odot K_V+\sigma^2 I_n\big)^{-1}\,(\Phi^\top h').
\end{align*}

Similarly, by the noise model \eqref{eq:Pmodel2} and GP prior on each $\mu_i$, the posterior on each $\mu_i$ is also given by
\[
\mu_j \mid \phi_{V,j}(\bm V),\bm Z \;\sim\; \mathcal{GP}\big(m_j,v_j\big),
\]
with
\begin{align*}
m_j(z) &=\bm k_Z(z)^\top\,\big(K_Z+\eta^2 I_n\big)^{-1}\,\phi_{V,j}(\bm V),\\
v_j(z,z') &= k_Z(z,z') -\bm k_Z(z)^\top\,\big(K_Z+\eta^2 I_n\big)^{-1}\bm k_Z(z).
\end{align*}
where $\bm k_Z(z) = [k_Z(z,Z_1),\dots,k_Z(z,Z_n)]^\top $.

\paragraph{Deriving Posterior Moments of $\bm \gamma$}
Now that we have the required posteriors, we combine them to derive the posterior moments of $\gamma(w,z)$ using the $L^2$–convergent representation of the model
\[
\gamma(w,z) = f\bigl(\psi_W(w) \otimes \mu(z)\bigr),
\]
where recall that\footnote{By standard results on Gaussian measures in separable Hilbert spaces, $\mu(z)$ is a well-defined Gaussian random element in $\mc H_V$; see, e.g., Chapter~5 of \citet{kukush2020gaussian}.}
\(\mu(z) := \sum_{i \in I} \lambda_{V,i}\half\mu_i(z)e_i\) and  $e_i := \lambda_{V,i}\half\phi_{V,i}$ satisfies $\langle e_i,e_j \rangle_{\mc H_V} = \delta_{i=j}$. In particular, since $f \ind \mu \mid \mc X^n$, we will use the fact that 
\[\gamma(w,z)\mid \mc X^n, \mu(z) =_d  f\bigl(\psi_W(w) \otimes \mu(z)\bigr) \mid \mc X^n\]
where above we treat $\mu(z)$ as fixed temporarily. Thus, by the law of total expectation and variance, the posterior moments of $\gamma(w,z)$ are given as
\begin{align}
    \mathbb E[\gamma(w,z)|\mc X^n] & = \mathbb E[m(\phi_W(w) \otimes \mu(z))] \\
    \mathbb Var[\gamma(w,z)|\mc X^n] & = \mathbb Var[m(\phi_W(w) \otimes \mu(z))] + \mathbb E[v(\phi_W(w) \otimes \mu(z),\phi_W(w) \otimes \mu(z))] \label{eq:var_decomp}
\end{align}
All that remains is to calculate each of the terms using the posterior moments of $f$ and $\mu(z)$. For the mean note we can rewrite the posterior mean of $f$ evaluated at this input as
\begin{align}
m(\phi_W(w) \otimes \mu(z)) = \mu(z)^\top \Phi_V\,\bm \alpha(w) \label{eq:post_mean_f}
\end{align}
where $\bm \alpha(w) = D(w)(K_{W} \odot K_V + \sigma^2I_n)^{-1}\bm Y$, $D(w) = \text{diag}(k_W(w,W_1),\dots, k_W(w,W_n)) \in \bb R^{n \times n}$ and $\mu(z)^\top \Phi_V = [\langle \mu(z),\psi_V(V_1) \rangle_{\mc H_V},\dots,\langle \mu(z),\psi_V(V_n) \rangle_{\mc H_V}]^\top  \in \bb R^{1 \times n}$. Using this definition, the posterior mean is then 
\begin{align}
\mathbb E[\gamma(w,z)|\mc X^n] & = \bb E[\mu(z)^\top \Phi_V\,\bm \alpha(w)\mid \mc X^n] \nonumber \\
& = \bb E[\mu(z) ^\top \Phi_V \mid \mc X^n]\,\bm \alpha(w) \label{eq:expectation}
\end{align}

Now, we can straightforwardly recover the required expectation from the posterior mean of each $\mu_i$. In particular, for any $v \in \mc V$, the mean evaluation of $\mu(z)$ on $\psi_V(v)$ is 
\begin{align*}
\bb E[\langle \psi_V(v), \mu(z) \rangle_{\mc H_V}\mid \mc X^n] & = \bb E\left[\left\langle \sum_{i \in I} \lambda_{V,i}\half \phi_{V,i}(v)e_i, \sum_{i \in I} \lambda_{V,i}\half\mu_i(z)e_i \right\rangle_{\mc H_V}\;\Bigg|\; \mc X^n\right]  \\ 
 \quad \quad & = \bb E \left [\sum_{i \in I} \lambda_{V,i}\phi_{V,i}(v)\mu_i(z) \;\Bigg| \;  \mc X^n\right] \\
& =  \sum_{i \in I} \lambda_{V,i}\phi_{V,i}(v)\bb E[\mu_i(z) \mid \mc X^n] \\
& = \sum_{i \in I} \lambda_{V,i}\phi_{V,i}(v)\phi_{V,i}(\bm V)(K_Z + \eta^2I_n)^{-1}\bm k_Z(z) \\
& = \bm k_V(v)(K_Z + \eta^2I_n)^{-1}\bm k_Z(z)
\end{align*}
where the second line follows from the orthonormality of $(e_i)_{i \in I}$, and the third line follows from Fubini's theorem.\footnote{Fubini's theorem lets us swap the infinite sum with the expectation if $\sum_{i \in I} | \lambda_{V,i} \phi_{V,i}(v)\bb E[\mu_i(z) \mid \mc X^n]| < \infty$. By Cauchy Schwartz, this is bounded by $\left(\sum_{i \in I} \lambda_{V,i} \phi_{V,i}(v)^2\right)\half \left(\sum_{i \in I} \lambda_{V,i}\left(m_i(z)^2 + v_i(z,z) \right)\right)\half$. By Mercer's Theorem, the first term is $k_V(v,v)$, which is bounded by definition. Since $v_i(z,z) \leq k_Z(z,z)$ and $\sum_{j} \lambda_{V,j}m_j(z)^2 = \sum_{i,l=1}^n\sum_{j \in I}\lambda_{V,i}\phi_{V,i}(V_i)\phi_{V,i}(V_l) B_{i,l}(z) \leq n^2\sup_{z,z'}k_Z(z,z')$, the second term is also finite.}
Substituting this into \eqref{eq:expectation} we get
\begin{align}
\mathbb E[\gamma(w,z)|\mc X^n] & = \bm k_Z(z)^\top \left(K_{Z}+I\eta^2\right)^{-1}K_{V}\bm \alpha(w)
\end{align}
which is the form of the expectation in the theorem. 

Now we turn to the variance. For the first term, we have
 \begin{align}
 Var[m(\phi_W(w) \otimes \mu(z))\mid \mc X^n] & = \underbrace{\mathbb E[m(\phi_W(w) \otimes \mu(z))^2|\mc X^n]}_{I_1} - \underbrace{\mathbb E[m(\phi_W(w) \otimes \mu(z))\mid \mc X^n]^2}_{I_2}
 \end{align}
Using the definition of $m(\phi_W(w) \otimes \mu(z))$ in \eqref{eq:post_mean_f} and basic properties of tensor-product RKHS's, we can expand $I_1$ as follows
\begin{align}
    I_1 & = \mathbb E[\mu(z)^\top\Phi_V\bm \alpha(w)\bm\alpha(w)^\top \Phi_{V}^\top\mu(z)|\mc X^n] \nonumber \\
    & = \sum_{l,m=1}^n \alpha_{n,m}(w) \bb E[ \langle \mu(z),\psi_V(V_l) \rangle_{\mc H_V}\langle \mu(z),\psi_V(V_m) \rangle_{\mc H_V}|\mc X^n] \nonumber \\
    & = \sum_{l,m=1}^n \alpha_{n,m}(w) \bb E[ \langle \mu(z) \otimes  \mu(z) ,\psi_V(V_l) \otimes \psi_V(V_m)\rangle_{HS(\mc H_V,\mc H_V)}|\mc X^n] \nonumber \\
    & = \sum_{l,m=1}^n \alpha_{n,m}(w)  \langle\bb E[ \mu(z) \otimes  \mu(z) |\mc X^n],\psi_V(V_l) \otimes \psi_V(V_m)\rangle_{HS(\mc H_V,\mc H_V)} \label{eq:hs_exp_exchange} \\
    & = \sum_{l,m=1}^n \alpha_{n,m}(w)  \langle\bb E[\mu(z) |\mc X^n]\otimes \bb E[  \mu(z) |\mc X^n] + \bb Cov[\mu(z)|\mc X^n],\psi_V(V_l) \otimes \psi_V(V_m)\rangle_{HS(\mc H_V,\mc H_V)} \nonumber\\
    & = \sum_{l,m=1}^n \alpha_{n,m}(w) \left( \langle\bb E[\mu(z) |\mc X^n],\psi_V(V_l) \rangle_{\mc H_V}\langle\bb E[\mu(z) |\mc X^n],\psi_V(V_m) \rangle_{\mc H_V}+\langle\psi_V(V_l), \bb Cov[\mu(z)|\mc X^n]\psi_V(V_m)\rangle_{\mc H_V}\right) \nonumber\\
    & = \underbrace{\mathbb E[\mu(z)|\mc X^n]^\top\Phi_V\bm \alpha(w)\bm\alpha(w)^\top \Phi_{V}^\top\mathbb E[\mu(z)|\mc X^n,Z]}_{I_{11}}  + \underbrace{Tr[\bm \alpha(w)\bm\alpha(w)^\top \Phi_{V}^\top\mathbb Cov[\mu(z)|\mc X^n]\Phi_V]}_{I_{12}}\nonumber
\end{align}
where $HS(\mc H_V,\mc H_V)$ is the space of Hilbert-Schmidt operators $\mc H_V \to \mc H_V$, and  $\bb Cov[\mu(z) \mid \mc X^n]: \mc H_V \to \mc H_V$ is the covariance operator
\[\bb Cov[\mu(z) \mid \mc X^n] = \bb E\bigl[\bigl(\mu(z) -  \bb E[\mu(z) \mid \mc X^n] \bigr)\otimes \bigl(\mu(z) -  \bb E[\mu(z) \mid \mc X^n]\bigr) \mid \mc X^n\bigr]\]

Note, we are able to pass the expectation inside the inner product in \eqref{eq:hs_exp_exchange} because the operator $A \mapsto \bb E[ \langle \mu(z) \otimes  \mu(z), A\rangle_{HS(\mc H_V,\mc H_V)}|\mc X^n]$ is bounded. In particular by Cauchy-Schwartz and Jensen's inequality,
\begin{align*}
\bb E[ \langle \mu(z) \otimes  \mu(z), A\rangle_{HS(\mc H_V,\mc H_V)} |\mc X^n]& \leq \sqrt{\bb E[ \lVert \mu(z) \otimes  \mu(z) \rVert_{HS(\mc H_V,\mc H_V)}^2|\mc X^n]}\lVert A\rVert_{HS(\mc H_V,\mc H_V)} \\
\text{ (Hilbert-Schmidt norm)} \qquad \qquad & = \sqrt{\bb E\left[ \sum_{i \in I}\langle e_i,\mu(z)\rangle_{\mc H_V^2}^2\middle|\mc X^n\right]}\lVert A\rVert_{HS(\mc H_V,\mc H_V)}  \\
\text{ (Expanding $\mu(z)$ into co-ordinates)} \qquad \qquad & = \sqrt{\bb E\left[ \sum_{i \in I}\lambda_{V,i}\mu_i(z)^2\middle|\mc X^n\right]}\lVert A\rVert_{HS(\mc H_V,\mc H_V)}  \\
\text{($\sum_{i \in I}\lambda_{V,i} < \infty$)}
 \qquad \qquad & = \sqrt{ \sum_{i \in I}\lambda_{V,i}\bb E\left[\mu_i(z)^2\middle|\mc X^n\right]}\lVert A\rVert_{HS(\mc H_V,\mc H_V)}  \\
\text{(Expanding $\bb [\mu_i(z)^2]$)} \qquad \qquad & \leq \sqrt{\sum_{i \in I}(\lambda_{V,i}m_i(z)^2 + \lambda_{V,i} \bb Var[\mu_i(z) \mid \mc X^n]}\lVert A\rVert_{HS(\mc H_V,\mc H_V)}  \\
\text{(Prior variance $>$ Posterior variance)} \qquad \qquad & \leq \sqrt{\sum_{i \in I}(\lambda_{V,i}m_i(z)^2 + \lambda_{V,i} k_Z(z,z))}\lVert A\rVert_{HS(\mc H_V,\mc H_V)}  \\
\text{(Definition of $m_i(z)$)} \qquad \qquad  & = \sqrt{\bm \beta(z)^\top K_V\bm \beta(z) + \left(\sum_{i \in I}\lambda_{V,i}\right) k_Z(z,z)}\lVert A\rVert_{HS(\mc H_V,\mc H_V)}  \\
& \leq C\lVert A\rVert_{HS(\mc H_V,\mc H_V)} 
\end{align*}
where $C < \infty$ is implied by \cref{ass:kern} and \cref{ass:eig}.

Now, since $I_{11} = I_2$, we have $\mathbb Var[m(\phi_W(w) \otimes \mu(z))\mid \mc X^n] = I_{12}$. To calculate this term, we require $\mathbb Cov[\mu(z)\mid \mc X^n]$, which we derive below. Recall the definitions
\[
\mu(z)=\sum_{i \in I} \lambda_{V,i}^{1/2}\,\mu_i(z)\,e_i,
\qquad
\mu_i(z):=\lambda_{V,i}^{-1/2}\,\langle \mu(z),e_i\rangle_{\mc H_V}.
\]
 By independence of coordinates,
\begin{align*}
\bb Cov\big(\langle \mu(z),h\rangle,\langle \mu(z'),h'\rangle\mid \mc X^n\big)
& = \bb Cov \left( \sum_{i \in I} h_i \lambda_{V,i}\half \mu_i(z), \sum_{i \in I} h_i \lambda_{V,i}\half \mu_i(z)\right)
\end{align*}
To show we can exchange the covariance operation with the infinite sums, we define the truncated sums $X_N:=\sum_{i=1}^N \lambda_{V,i}^{1/2}\tilde \mu_i(z)\,h_i$ and
$Y_M:=\sum_{j=1}^M \lambda_{V,j}^{1/2}\tilde \mu_j(z')\,h'_j$, where $\tilde \mu_i(z) = \mu_i(z) - m_i(z)$ is a centered version of $\mu_i(z)$. For fixed $N,M$,
\[
\bb Cov(X_N,Y_M\mid\mc X^n)
=\sum_{i=1}^N\sum_{j=1}^M \lambda_{V,i}^{1/2}\lambda_{V,j}^{1/2}\,h_i h'_j\,
\bb Cov\big(\mu_i(z),\mu_j(z')\mid\mc X^n\big).
\]
where we use the fact that $\bb Cov\big(\tilde \mu_i(z),\tilde \mu_j(z')\mid\mc X^n\big) = \bb Cov\big(\mu_i(z),\mu_j(z')\mid\mc X^n\big)$. By conditional independence of coordinates,
$\bb Cov(\mu_i(z),\mu_j(z')\mid\mc X^n)=\delta_{ij}\,v_i(z,z')$, so
\[
\bb Cov(X_N,Y_M\mid\mc X^n)=\sum_{i=1}^{\min\{N,M\}} \lambda_{V,i}\,v_i(z,z')\,h_i h'_i.
\]
For $M>N$, the $L^2(\bb P(\argdot \mid \mc X^n)$ norm of $X_M - X_N$ can be written as
\[
\bb E\big((X_M-X_N)^2\mid\mc X^n\big) = \bb Var\big(X_M-X_N\mid\mc X^n\big)
=\bb Var\Big(\sum_{i=N+1}^M \lambda_{V,i}^{1/2}\tilde \mu_i(z)\,h_i\ \Big|\ \mc X^n\Big)
=\sum_{i=N+1}^M \lambda_{V,i}\,v_i(z,z)\,h_i^2,
\]
where we used finite-sum bilinearity and $\bb Cov(\tilde \mu_i,\tilde \mu_j\mid\mc X^n)=0$ for $i\neq j$. Since $h_i = \langle h, e_i \rangle_{\mc H} \leq \|h\|_\mc H$ and $v_i(z,z) \leq k_Z(z,z) \leq \tau$ for some $\tau \in \bb R_+$, $\sum_{i \in I} \lambda_{V,i}v_i(z,z)h_i^2<\infty$, so the tail sum
tends to $0$ and $(X_N)_{N \geq 1}$ is Cauchy in $L^2(\bb P(\argdot \mid\mc X^n))$. Since this is a Hilbert space, the Cauchy criterion suffices for
$X_N\to X$ in $L^2(\bb P(\argdot \mid\mc X^n))$. The same argument gives $Y_M\to Y$ in $L^2(\bb P(\argdot \mid\mc X^n))$.

Using the fact that the covariance is a continuous bilinear form on $L^2$ and the triangle inequality, we have the general bound,
\begin{align*}
|\bb Cov(U,V)-\bb Cov(U',V')| & = |\bb Cov(U-U',V)+\bb Cov(U',V-V')| \\
& \le \|U-U'\|_{L^2}\,\|V\|_{L^2} + \|V-V'\|_{L^2}\,\|U'\|_{L^2},
\end{align*}
for any $U,U',V,V' \in L^2(\bb P)$. Thus, defining the limits $X:= \sum_{i \in I} h_i \lambda_{V,i}\half \mu_i(z)$, $Y:= \sum_{i \in I} h_i \lambda_{V,i}\half \mu_i(z')$, we have $\bb Cov(X_N,Y_M \mid \mc X^n)\to \bb Cov(X,Y \mid \mc X^n)$ as $N,M\to\infty$, yielding
\[
\bb Cov\big(\langle \mu(z),h\rangle,\langle \mu(z'),h'\rangle\mid \mc X^n\big)
=\sum_{i \in I} \lambda_{V,i}\,v_i(z,z')\,h_i h'_i,
\]
By Riesz's representation theorem, this identifies the operator-valued posterior covariance as
\[
\bb Cov\big(\mu(z),\mu(z')\mid \mc X^n\big)
=\sum_{i \in I} \lambda_{V,i}\,v_i(z,z')\,e_i\otimes e_i.
\]

To use this definition of the operator to compute the term $\Phi_V^\top \bb Cov(\mu(z)\mid \mc X^n) \Phi_V$ in $I_{12}$, we will use the fact that  $v_i(z,z') := \hat k_Z(z,z') :=   k_Z(z,z') - \bm k_Z(z)^T(K_Z + \eta^2I_n)^{-1} \bm k_Z(z')$ and $\langle e_i,\psi_{V}(v)\rangle_{\mc H} = \lambda_{V,i}\half \phi_{V,i}(v)$. In particular,
\begin{align*}
[\bb Cov(\mu(z)\mid \mc X^n) \Phi_V]_{j} = \left( \sum_{i \in I} \lambda_{V,i}\,v_i(z,z')\,e_i\otimes e_i\right)\psi_{V}(V_j) & = \sum_{i \in I} \lambda_{V,i}\,v_i(z,z')\,e_i\langle e_i,\psi_{V}(V_j)\rangle_{\mc H} \\
& = \sum_{i \in I} \lambda_{V,i}^{\frac32}\,v_i(z,z')\phi_{V,i}(V_j) e_i \\
& = \hat k_Z(z,z)\sum_{i \in I} \lambda_{V,i}^{\frac32}\phi_{V,i}(V_j) e_i
\end{align*}
which implies that 
\begin{align*}
[\Phi_V^\top \bb Cov(\mu(z)\mid \mc X^n) \Phi_V]_{l,j} & = \psi_{V}(V_l)^\top \left( \sum_{i \in I} \lambda_{V,i}\,v_i(z,z')\,e_i\otimes e_i\right)\psi_{V}(V_j) \\
& = \sum_{i \in I} \lambda_{V,i}\,v_i(z,z')\,\langle e_i, \psi_V(V_l) \rangle_{\mc H_V}\langle e_i,\psi_{V}(V_j)\rangle_{\mc H_V} \\
& = \hat k_Z(z,z)\sum_{i \in I} \lambda_{V,i}^2\,\phi_{V,i}(v')\phi_{V,i}(v)
\end{align*}
Now, recall that by Mercer's theorem, \(k_V(v,v')=\sum_{i\in\mathbb N}\lambda_{V,i}\phi_{V,i}(v)\phi_{V,i}(v')\). 
Substituting this into \(k_V(V_i,t)k_V(t,V_j)\) and using orthonormality of \(\{\phi_{V,i}\}_{i \in I}\) in \(L^2(\nu)\) w.r.t. spectral measure $\nu \in \mc P(\mc V)$ yields
\[
\int_{\mc V} k_V(V_i,t)\,k_V(t,V_j)\,d\nu(t)
=\sum_{l\in\mathbb N}\lambda_l^2\,\phi_l(V_i)\phi_l(V_j),
\]
From here, it becomes clear that
\begin{align*}
\mathbb Var[m(\phi_W(w) \otimes \mu(z))\mid \mc X^n] & = I_{12} = Tr[\bm \alpha(w)\bm\alpha(w)^\top \Phi_{V}^\top\mathbb Cov[\mu(z)|\mc X^n]\Phi_V]\\
& =   Tr[\bm \alpha(w)\bm\alpha(w)^\top \tilde K_V]\hat k_Z(z,z)
\end{align*}
Where $[\tilde K_V]_{i,j} = \sum_{l \in I} \lambda_l^2\phi_l(V_i)\phi_l(V_j) = \int_{\mc V}k(V_i,t)k(t,V_j)d\nu(t)$. This is the first part of $S_2$. Now, using the definition of the posterior variance of $f$, we can write the second term in the law of total variance decomposition \eqref{eq:var_decomp} as
\begin{align*}
     & \mathbb E[v(\phi_W(w) \otimes \mu(z),\phi_W(w) \otimes \mu(z))|\mc X^n] \\
     & =   \mathbb E\left[\langle \phi_W(w) \otimes \mu(z),\phi_W(w) \otimes \mu(z)\rangle_{\mathcal H} - (\phi_W(w) \otimes \mu(z))^\top \Phi_V\,\big(K_{W} \odot K_V+\sigma^2 I_n\big)^{-1}\,(\Phi_V^\top ( \phi_W(w) \otimes \mu(z))) \mid \mc X^n \right]\\
     & =  \bb E\left[k_W(w,w) \langle \mu(z), \mu(z)\rangle_{\mathcal H_V} - \langle \mu(z),\left(\Phi_V D(w)\big(K_{W} \odot K_V+\sigma^2 I_n\big)^{-1}D(w)\Phi_V^\top\right) \mu(z) \rangle_{\mc H_V}\mid \mc X^n \right]\\
     & =  k_W(w,w) \bb E\left[ \langle \mu(z), B\mu(z)\rangle_{\mathcal H_V}\mid \mc X^n \right]\\
     & =  \mathbb E[Tr[B\mu(z)\otimes \mu(z)^\top]|\mc X^n] \label{eq:trace_exp_swap}
\end{align*}
where $B = Ik_W(w,w)- \Phi_{V} A(w) \Phi_{V}^\top$, $A(w) = D(w)\left(K_{W} \odot K_V+I\sigma^2\right)^{-1}D(w)$, and $D(w)$ is defined in the Theorem. 

Now, recall that for any random element $X \in \mc H_V$ and bounded linear $A: \mc H \to \mc H$ we have $\bb E[AX] = A\bb E[X]$ whenever $\bb E\lVert X\rVert_{\mc H_V} < \infty$. Therefore, in our case we can exchange the expectation with $Tr \circ B$ as long as (i) $A \mapsto Tr[BA]$ is bounded and (ii) $\bb E[\lVert \mu(z) \otimes \mu(z)\rVert_{HS(\mc H_V,\mc H_V)}|\mc X^n] < \infty$. For (i), first note that $A \mapsto Tr[A]$ is bounded on the space $\mc B_1(\mc H_V)$ of trace class operators $\mc H_V \to \mc H_V$. Moreover, from the structure of $B$ it is obvious that $Tr[BA] \leq k_W(w,w) Tr[A] \leq cTr[A]$ for any trace class $A$, where $c < \infty$ by \cref{ass:kern}. Combining these two facts, it suffices to show that $\mu(z) \otimes \mu(z) \in \mc B_1(\mc H_V)$ almost surely for (i) to hold. This is true because
\begin{align*}
    Tr[\mu(z) \otimes \mu(z)] & = \sum_{i \in I}\langle e_i,(\mu(z) \otimes \mu(z)]) e_i \rangle_{\mc H_V} \\
    & = \sum_{i \in I}\langle e_i,\mu(z) \rangle_{\mc H_V}^2 \\
    & = \sum_{i \in I}\lambda_{V,i}\mu_i(z)^2 < \infty
\end{align*}
with posterior probability 1, since we already showed earlier that $\bb E[\sum_{i \in I}\lambda_{V,i}\mu_i(z)^2|\mc X^n]< \infty$. We also already showed that (ii) holds earlier in the proof. Therefore, we can exchange the expectation with $Tr \circ B$, which gives us
    \begin{align*}
        \mathbb E[\bb Var[\gamma(w,z)|\mc X^n,L]|\mc X^n] & = Tr[B \mathbb E[\mu(z)\otimes \mu(z)^\top]|\mc X^n] \\
        & = Tr[B(\bb E[\mu(z)|\mc X^n]\otimes\mathbb E[\mu(z)|\mc X^n]^\top + \mathbb Cov[\mu(z)|\mc X^n]))] \\
    & = Tr[B(\Phi_V^\top\bm \beta(z)\bm \beta(z)^\top \Phi_V + \bb Cov[\mu(z)|\mc X^n])] \\
    & = \bm \beta(z)^\top \Phi_V B\Phi_V^\top\bm \beta(z) + Tr[B\bb Cov[\mu(z)|\mc X^n]] \\
\end{align*}
where $\bm \beta(z) := (K_Z + \eta^2 I_n)^{-1}\bm k_Z(z)$ Substituting in the definition of $B$  and $\bb Cov[\mu(z)|\mc X^n]$ gives
\begin{align*}
\bm \beta(z)^\top \Phi_V^\top B\Phi_V\bm \beta(z) & = \bm \beta(z)^\top \left(K_Vk_W(w,w) -  K_V A(w)K_V \right)\bm \beta(z) \\
 Tr[B\bb Cov[\mu(z)|\mc X^n]]  & = k_W(w,w)\sum_{i \in I} \lambda_i \hat k_Z(z,z) - Tr[A(w)\tilde K_V] \hat k_Z(z,z)
\end{align*}
Re-arranging the terms we finally get $\mathbb Var[\gamma(w,z)|\mc X^n] = S_1 + S_2 + S_3$ where
\begin{align*}
S_1 & =  \bm \beta(z)^\top K_V(Ik_W(w,w) - A(w)K_V)\bm \beta(z)  \\
S_2 & = \hat k_Z(z,z) (Tr[\tilde K_V(\bm \alpha(w)\bm \alpha(w)^\top  - A(w))]  \\
S_3 & = \left(\sum\nolimits_{i \in I}\lambda_i\right) k_W(w,w)\hat k_Z(z,z))
\end{align*}
\end{proof}

\begin{remark}
    For the case $W = \emptyset$, the posterior moments in \cref{thm:moment_derivation} and its modification (Algorithm \ref{alg:post_CDF}) for the the causal data fusion setting discussed in \cref{sec:background:causaldatafusion}) are identical except where now $K_W = I$, $k_W(w,w) = 1$ and $\bm k_W(w) = \bm 1$.
\end{remark}

\subsubsection{\cref{thm:moment_derivation} for Incremental Effects} 

\setcounter{theorem}{4}  
\begin{theorem}[Posterior moments of incremental effects] \label{thm:incremental_effect}
Under the conditions of \cref{thm:moment_derivation}, we have that 
\begin{align*}
\bb E[\gamma(w,z) - \gamma(w',z')|\mc X^n] & = \bb E[\gamma(w,z)|\mc X^n] - \bb E[\gamma(w',z')|\mc X^n] \\
\bb Var[\gamma(w,z) - \gamma(w',z')] & = \bb Var[\gamma(w,z)| \mc X^n]  + \bb Var[\gamma(w',z')| \mc X^n]  - 2\bb Cov[\gamma(w,z),\gamma(w',z')| \mc X^n]
\end{align*}
$\bb E[\gamma(w,z)|\mc X^n]$, $\bb  Var[\gamma(w,z)|\mc X^n]$ are defined as in \cref{thm:moment_derivation} and 
\begin{align*}
   Cov[\gamma(w,z),\gamma(w',z')| \mc X^n] = C_1 + C_2 + C_3
\end{align*}
where 
\begin{align*}
C_1 & =  \bm \beta(z)^\top K_V(Ik_W(w,w') - A(w,w')K_V)\bm \beta(z')  \\
C_2 & = \hat k_Z(z,z') (Tr[\tilde K_V(\bm \alpha(w)\bm \alpha(w')^\top  - A(w,w'))]  \\
C_3 & = \left(\sum\nolimits_{i \in I}\lambda_i\right) k_W(w,w')\hat k_Z(z,z'))
\end{align*}
and all quantities except $A(w,w') := D(w)(K_{W} \odot K_V+ \sigma^2I)^{-1}D(w')$ are defined as in \cref{thm:moment_derivation}.
\end{theorem}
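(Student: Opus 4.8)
The plan is to mirror the proof of \cref{thm:moment_derivation}, replacing its quadratic (variance) computations by the corresponding bilinear (covariance) ones. The first two displayed identities require no work: additivity of the posterior mean is linearity of conditional expectation, and the decomposition of $\bb Var[\gamma(w,z)-\gamma(w',z')|\mc X^n]$ is the elementary identity $\bb Var[X-X']=\bb Var[X]+\bb Var[X']-2\bb Cov[X,X']$ applied conditionally on $\mc X^n$. Hence the entire content of the theorem is the closed form for $\bb Cov[\gamma(w,z),\gamma(w',z')|\mc X^n]$.

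For that I would apply the law of total covariance, conditioning on $L$:
\[
\bb Cov[\gamma(w,z),\gamma(w',z')|\mc X^n]=\bb Cov[\bb E[\gamma(w,z)|\mc X^n,L],\bb E[\gamma(w',z')|\mc X^n,L]\,|\,\mc X^n]+\bb E[\bb Cov[\gamma(w,z),\gamma(w',z')|\mc X^n,L]\,|\,\mc X^n].
\]
I would then reuse, essentially verbatim, the two building blocks established inside the proof of \cref{thm:moment_derivation}: (i) $\bb E[\gamma(w,z)|\mc X^n,L]=(L[\tilde\phi(z)])^{*}\tilde\Phi_V^{*}\bm\alpha(w)$, with $\bm\alpha(w)=D(w)(K_{WW}\odot K_{VV}+\sigma^2 I)^{-1}\bm Y$, obtained from \cref{prop:spectral_posts} and Proposition~3.2 of \citet{chau2021deconditional}; and (ii) the deconditional cross-covariance kernel, which at two inputs $(w,z)$ and $(w',z')$ gives $\bb Cov[\gamma(w,z),\gamma(w',z')|\mc X^n,L]=Tr[B(w,w')\,L[\tilde\phi(z)]\otimes L[\tilde\phi(z')]^{*}]$ with $B(w,w')=I\,k(w,w')-\tilde\Phi_V^{*}A(w,w')\tilde\Phi_V$ and $A(w,w')=D(w)(K_{WW}\odot K_{VV}+\sigma^2 I)^{-1}D(w')$. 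All integrability conditions needed to invoke (i)--(ii) and to exchange expectations with inner products and traces are precisely those verified in \cref{thm:moment_derivation}, now applied to the pair of points $w,w',z,z'$; the cross terms are reduced by Cauchy--Schwarz to the already-bounded quantities $\bb E[\lVert\tilde\phi(V)\rVert_{\ell_2}^2|\mc X^n]$ and $\bb E[\lVert L[\tilde\phi(z)]\rVert_{\ell_2}^2|\mc X^n]$.

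The first summand of the total-covariance decomposition then reduces, by the same Hilbert--Schmidt manipulation as in \cref{thm:moment_derivation}, to $Tr[\tilde\Phi_V^{*}\bm\alpha(w)\bm\alpha(w')^{T}\tilde\Phi_V\,\bb Cov[L[\tilde\phi(z)],L[\tilde\phi(z')]|\mc X^n]]$. Since the coefficient processes $\tilde L_i$ are independent across $i$ given $\mc X^n$ (noted in the proof of \cref{thm:post_derivation}), the cross-covariance operator is diagonal, with $\langle e_i,\bb Cov[L[\tilde\phi(z)],L[\tilde\phi(z')]|\mc X^n]\,e_j\rangle=\delta(i=j)\lambda_i\,\hat k(z,z')$ and $\hat k(z,z')=k(z,z')-\bm k(z)^{T}\bm\beta(z')$, as read off from \cref{prop:spectral_posts}; this yields $\hat k(z,z')\,Tr[\bm\alpha(w)\bm\alpha(w')^{T}\tilde K_{VV}]$, the first piece of $C_2$. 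For the second summand I would take the posterior expectation of $Tr[B(w,w')\,L[\tilde\phi(z)]\otimes L[\tilde\phi(z')]^{*}]$ over $L$ --- exchanging $Tr\circ B(w,w')$ with $\bb E[\,\cdot\,|\mc X^n]$ exactly as in \cref{thm:moment_derivation} --- substitute $\bb E[L[\tilde\phi(z)]|\mc X^n]=\tilde\Phi_V^{*}\bm\beta(z)$ and $\bb Cov[L[\tilde\phi(z)],L[\tilde\phi(z')]|\mc X^n]=\Lambda\,\hat k(z,z')$, and then expand $B(w,w')$ using $\tilde\Phi_V\tilde\Phi_V^{*}=K_{VV}$ and $\tilde\Phi_V\Lambda\tilde\Phi_V^{*}=\tilde K_{VV}$. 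This produces $\bm\beta(z)^{T}K_{VV}(I\,k(w,w')-A(w,w')K_{VV})\bm\beta(z')=C_1$ together with $\hat k(z,z')((\sum_{i}\lambda_i)k(w,w')-Tr[A(w,w')\tilde K_{VV}])$, which supplies the remaining part of $C_2$ and all of $C_3$. Adding up gives $\bb Cov[\gamma(w,z),\gamma(w',z')|\mc X^n]=C_1+C_2+C_3$.

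I do not expect a genuinely hard step: the whole argument is a polarization of \cref{thm:moment_derivation}, and the substitution $w'=w$, $z'=z$ recovers $S_1,S_2,S_3$ exactly, a convenient internal check. The one point deserving care is verifying building block (ii) --- that applying Proposition~3.2 of \citet{chau2021deconditional} at two distinct inputs really yields $B(w,w')$ with the asymmetric matrix $A(w,w')$ (the bilinear analogue of the single-input case), and that all the dominated-convergence and bounded-operator exchanges survive when the two arguments differ. Both follow from the bounds already in place in the proof of \cref{thm:moment_derivation}, invoked now for the pair of points.
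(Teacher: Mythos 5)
Your proposal matches the paper's proof essentially step for step: the law of total covariance conditioning on $L$, the reduction of the first summand to $\hat k(z,z')\,Tr[\bm\alpha(w)\bm\alpha(w')^T\tilde K_{VV}]$ via the diagonal cross-covariance $\Lambda\hat k(z,z')$, and the expansion of the second summand through $Tr[B(w,w')\,\cdot\,]$ with the asymmetric $A(w,w')$, with all integrability checks inherited from \cref{thm:moment_derivation}. This is the same polarization argument the paper gives, so no further comparison is needed.
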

\begin{proof}
To prove the result, it suffices to derive the form of the covariance, which we do using the law of total covariance,
\begin{align*}
    \mathbb Cov[\gamma(w,z),\gamma(w',z')\mid \mc X^n] & = \mathbb Cov[m(\psi_W(w) \otimes \mu(z)), m(\psi_W(w') \otimes \mu(z'))\mid \mc X^n] \nonumber \\
    & \qquad + \mathbb E[v(\psi_W(w) \otimes \mu(z),\psi_W(w') \otimes \mu(z'))\mid \mc X^n]
\end{align*}
We compute these terms using near-identical steps to the proof of the posterior variance expression in \cref{thm:moment_derivation}. For the first term, we have
 \begin{align*}
  & \mathbb Cov[m(\psi_W(w) \otimes \mu(z)),m(\psi_W(w') \otimes \mu(z'))\mid \mc X^n] = \underbrace{\bb E[m(\psi_W(w) \otimes \mu(z))m(\psi_W(w') \otimes \mu(z'))|\mc X^n]}_{I_1}\nonumber  \\
  & \qquad \qquad \qquad \qquad - \underbrace{\mathbb E[m(\psi_W(w) \otimes \mu(z)) \mid \mc X^n]\bb E[m(\psi_W(w') \otimes \mu(z')) \mid \mc X^n]}_{I_2}
 \end{align*}
As before, we can expand $I_1$ as
\begin{align*}
    I_1 & = \mathbb E[\mu(z)^\top\Phi_V\bm \alpha(w)\bm\alpha(w')^\top \Phi_{V}^ \top\mu(z')|\mc X^n,Z] \\
    & = \underbrace{\mathbb E[\mu(z)|\mc X^n]\Phi_V^\top\bm \alpha(w)\bm\alpha(w')^\top \Phi_{V}^\top\mathbb E[\mu(z')|\mc X^n,Z]}_{I_{11}}  + \underbrace{Tr[\bm \alpha(w)\bm\alpha(w')^\top \Phi_{V}^\top\mathbb Cov[\mu(z'),\mu(z)|\mc X^n]\Phi_V]}_{I_{12}}
\end{align*}
Since $I_{11} = I_2$, we have $\mathbb Cov[m(\psi_W(w) \otimes \mu(z)),m(\psi_W(w') \otimes \mu(z'))|\mc X^n] = I_{12}$. We can compute this term using the definition of the covariance operator derived in the proof of \cref{thm:moment_derivation},
\[\mathbb Cov[\mu(z),\mu(z')\mid \mc X^n] = \hat k_Z(z,z') \sum_{i \in I} \lambda_{V,i} e_i \otimes e_i\]
where recall from \cref{thm:moment_derivation} that $\hat k_Z(z,z') = k_Z(z,z) - \bm k_Z(z)^T \bm \beta(z)$ is the posterior covariance between $(\mu_i(z), \mu_i(z'))$ and $e_i := \lambda_{V,i}\half \phi_{V,i}$ for each $i \in I$. Using the same steps to compute $\Phi_{V}^\top\mathbb Cov[\mu(z),\mu(z')\mid \mc X^n]\Phi_V$ as in the proof of \cref{thm:moment_derivation}, we get
\begin{align}
\mathbb Cov[m(\psi_W(w) \otimes \mu(z)),m(\psi_W(w') \otimes \mu(z'))|\mc X^n]  & = Tr[\bm \alpha(w)\bm\alpha(w')^\top \Phi_{V}^\top\mathbb Cov[\mu(z'),\mu(z)\mid \mc X^n]\Phi_V]\hat k_Z(z,z')\\
& =   Tr[\bm \alpha(w)\bm\alpha(w')^\top \tilde K_V]
\end{align}
where $[\tilde K_V]_{i,j} = \sum_{l \in I} \lambda_l^2\phi_l(V_i)\phi_l(V_j) = \int_{\mc V}k(V_i,t)k(t,V_j)d\nu(t)$. This is the first part of $S_2$. Now for the second term, we have
\begin{align}
     \mathbb E[\bb Cov[\gamma(w,z),\gamma(w',z')|\mc X^n,L]|\mc X^n] & =  Tr[B\mathbb E[\mu(z')\otimes \mu(z)^\top|\mc X^n]] \\
    & = Tr[B(\bb E[\mu(z')|\mc X^n]\otimes \mathbb E[\mu(z)|\mc X^n]^\top + \mathbb Cov[\mu(z'),\mu(z)|\mc X^n])] \\
    & = Tr[B(\Phi_V\bm \beta(z')\bm \beta(z)^\top \Phi_V^\top + \mathbb Cov[\mu(z'),\mu(z)|\mc X^n])] \\
    & = \bm \beta(z)^\top \Phi_V B\Phi_V^\top\bm \beta(z') + Tr[B\mathbb Cov[\mu(z'),\mu(z)|\mc X^n]]
\end{align}
where now $B = Ik_W(w,w')- \Phi_{V}^\top A(w,w') \Phi_{V}$,  $A(w,w') = D(w)\left(K_{W} \odot K_V+I\sigma^2\right)^{-1}D(w')$ and $D(w) = \text{diag}(k_W(w,W_1),\dots, k_W(w,W_N))$ as before. Note we can exchange the expectation with $Tr \circ B$ using the same reasoning as \cref{thm:moment_derivation}. Substituting in the definition of $B$ gives
\begin{align}
\bm \beta(z)^\top \Phi_V B\Phi_V^\top\bm \beta(z') & = \bm \beta(z)^\top \left(K_Vk_W(w,w') -  K_V A(w,w')K_V \right)\bm \beta(z') \\
Tr[B\mathbb Cov[\mu(z'),\mu(z)|\mc X^n]] & = k_W(w,w')\sum_{i \in I} \lambda_i \hat k_Z(z,z') - Tr[A(w,w')\tilde K_V] \hat k_Z(z,z')
\end{align}

Re-arranging the terms we finally get $\mathbb Cov[\gamma(w,z),\gamma(w',z')|\mc X^n] = C_1 + C_2 + C_3$ where
\begin{align}
C_1 & =  \bm \beta(z)^\top K_V(Ik_W(w,w') - A(w,w')K_V)\bm \beta(z')  \\
C_2 & = \hat k_Z(z,z') (Tr[\tilde K_V(\bm \alpha(w)\bm \alpha(w')^\top  - A(w,w'))]  \\
C_3 & = \left(\sum\nolimits_{i \in I}\lambda_i\right) k_W(w,w')\hat k_Z(z,z'))
\end{align}
\end{proof}

\subsubsection{\cref{thm:moment_derivation} for Average Treatment Effects}
The following corollary shows how to use our derived posteriors for $\gamma$ to construct the posterior distribution on the ATE, which corresponds to marginal expectations of $\gamma$. Note, the following assumes we use the GP approximation of the posterior on the causal function $\hat \gamma \sim \mc {GP}(\bb E[\gamma(\argdot)\mid \mc X^n], \bb Cov [\gamma(\argdot),\gamma(\argdot)\mid \mc X^n])$.
\begin{corollary}[Posteriors on average causal effects] \label{corr:ate}
 Under the approximated posterior distribution for the causal function, $ {\bb P}_{\hat \gamma|\mc X^n} = \mc {GP}(\mathbb E[\gamma(\argdot)|\mc X^n], \bb Cov[\gamma(\argdot),\gamma(\argdot)|\mc X^n])$, if  the absolute moments $\int |\bb E[\gamma(w,z)|\mc X^n]|\bb P_Z(dz)$ and  $\int \int \mathbb |\bb Cov[\gamma(w,z),\gamma(w,z')|\mc X^n]|\bb P_Z(dz)\bb P_Z(dz')$ are finite for all $z,z' \in \mc Z$, we have the following distribution of $\bb E[\hat \gamma(\argdot,Z)]$,
 \[\bb E[\hat \gamma(\argdot,Z)] \sim \mc {GP}(\hat m,\hat \kappa)\]
 where 
 \begin{align*}
     \hat m(z) & = \int_{\mc Z} \bb E[\gamma(w,z)|\mc X^n]\bb P_Z(dz) \\
     \hat \kappa(z,z') & = \int_{\mc Z} \int_{\mc Z} \mathbb E[\bb Cov[\gamma(w,z),\gamma(w,z')|\mc X^n]\bb P_Z(dz)\bb P_Z(dz')
 \end{align*}
\end{corollary}
\begin{proof}
Follows immediately from Proposition 3.2 in \citet{chau2021deconditional}.
\end{proof}

\begin{remark} \label{remark:ate}
    In practice, we compute the integrals in \cref{corr:ate} using averages respect to the empirical distribution $\hat {\bb P}_Z = \frac{1}{n}\sum_{i=1}^n \delta_{Z_i}$.
\end{remark}

\begin{remark}
    The result holds analogously for $\bb E[\hat \gamma(W,Z)]$ and $\bb E[\gamma(W,\argdot)]$.
\end{remark}

\paragraph{Examples}
The above result can be directly applied to obtain posteriors on population-level causal estimands such as the average treatment effect (ATE) in both back-door and front-door settings introduced earlier.
For instance, in the back-door example of \cref{fig:causal_graphs_appendix} (left), the posterior on the conditional effect 
$\text{CATE}(a,c)=\gamma((a,c),c)$
derived from \cref{thm:moment_derivation}
can be averaged over the empirical distribution of $C$
to yield a posterior GP for the ATE,
\[
\text{ATE}(a) = \int_{\mc C} \gamma((a,c),c)\,\bb P_C(dc).
\]
Similarly, in the front-door example  of \cref{fig:causal_graphs_appendix} (right),
the posterior on $\text{ATT}(a,a')=\gamma(a',a)$
can be integrated over $\bb P_A$
to obtain the posterior on the corresponding ATE.
\[
\text{ATE}(a) = \int_{\mc A} \gamma(a',a)\,\bb P_A(da').
\]
Thus, \cref{corr:ate} provides a unified way to propagate uncertainty
from posteriors on causal quantities of the form $\gamma(w,z)$ (often CATE, ATT) to averages of these causal quantities (such as the ATE).

\subsubsection{Consistency of Calibration Estimator}

Although bootstrap estimators are consistent under stability and smoothness conditions \cite{austern2020asymptotics, tang2024consistency}, the consistency of the estimator we use for the calibration error in \cref{sec:training} is unclear given the fact that we also use a non-parametric plug-in estimator to estimate $\hat \gamma$. However, below we verify that under appropriate smoothness conditions, this estimator is consistent whenever the empirical bootstrap estimator of $\bb P_{\mc X^n}$ is consistent. For simplicity, we prove the result for cumulative rather than highest-posterior-density regions, but the result extends under analogous conditions. In what follows, we denote by $\hat{\mc X}^n \mid \mc X^n$ the bootstrap resample of the observed dataset 
$\mc X^n = \{X_1,\dots,X_n\}$ (note $X_i = \{Y_i,V_i,W_i,Z_i\}$), obtained by drawing $n$ samples with replacement from its rows. 
Formally, each $\hat X_i$ is drawn i.i.d.\ from the empirical distribution
\[
\hat{\bb P}_{\mc X^n} = \frac{1}{n}\sum_{i=1}^n \delta_{X_i},
\]
and the resulting joint law of the bootstrap sample is
\[
\hat{\bb P}_{\hat{\mc X}^n\mid \mc X^n} 
= \hat{\bb P}_{\mc X^n}^{\otimes n}.
\]
This distribution defines the conditional bootstrap measure used in the calibration estimator 
$\hat{\bb P}_{\hat{\mc X}^n\mid \mc X^n}(\hat \gamma(w,z) \in C_{w,z,\alpha,\mu}(\mc X^n))$.

\begin{restatable}{theorem}{thmcalibraton}\label{thm:calibration}
Under the conditions of \cref{thm:moment_derivation}, let $\{\mc X^{n},\mc X^{m}\}$ be a partition of the dataset $\{(Y_i,V_i,W_i,Z_i)\})_{i=1}^{n+m} \sim \bb P_{Y,V,W,Z}^{\otimes n+m}$ for $n,m > 0$. Let  $\mu_{w,z} = \mathbb E[\gamma(w,z)|\mc X^n]$, $\sigma_{w,z} = \mathbb Var[\gamma(w,z)|\mc X^{n}]$, $F_\alpha(t) = \bb P_{\mc X^n}(\mu_{w,z} +t_{\alpha}\sigma_{w,z} \leq t)$ and $t_{\alpha}$ be the $(1-\alpha)$ quantile of $\mc N(0,1)$. Let $\hat \gamma_{m}$ be the estimator for $\gamma$ in \citet{singh2024kernel} using $\mc X^{m}$. Then, under the assumptions of Theorem 6.1 in \citet{singh2024kernel} and $(L, \|\argdot \|_1)$-Lipschitzness of $F_\alpha$
\begin{align}
 & \lVert\bb P_{\mc X^{n}}( \gamma(w,z) \leq \mu_{w,z} +t_{\alpha}\sigma_{w,z}) \nonumber \\
 & -\hat {\bb P}_{\hat {\mc X^n}|\mc X^n}( \hat \gamma_m(w,z) \leq \mu_{w,z} +t_{\alpha}\sigma_{w,z})\rVert_{L_1(\bb P_{\mc X^m})} \nonumber\\
&  \leq \lVert D(\bb P_{\mc X^n}, \bb P_{\hat {\mc X^n}|\mc X^n})\rVert_{L_1(\bb P_{\mc X^n})} + \mc O\left(m^{-\frac{1}{2}\frac{c-1}{c+\frac{1}{b}}}\right)
\end{align}
where $D$ is the Kolmogorov metric, $c \in (1,2]$, $b \in [1,\infty)$.
\end{restatable}

\begin{proof}
    To start note that by the triangle inequality we can split the deviation into $\mc D(\bb P_{\mc X^n},\hat {\bb P}_{\hat {\mc X^n}|\mc X^n})$ and $|F_{\alpha}(\gamma(w,z)) - F_{\alpha}(\hat \gamma_m(w,z))|$.
    \begin{align}
        & |\bb P_{\mc X^n}( \gamma(w,z) \leq \mu_{w,z} +t_{\alpha}\sigma_{w,z}) -\hat {\bb P}_{\hat {\mc X^n}|\mc X^n}( \hat \gamma_m(w,z) \leq \mu_{w,z} +t_{\alpha}\sigma_{w,z})| \\
        & \leq |\bb P_{\mc X^n}( \hat \gamma_m(w,z) \leq \mu_{w,z} +t_{\alpha}\sigma_{w,z}) -\hat {\bb P}_{\hat {\mc X^n}|\mc X^n}( \hat \gamma_m(w,z) \leq \mu_{w,z} +t_{\alpha}\sigma_{w,z})| \nonumber \\
        & +  |\bb P_{\mc X^n}( \gamma(w,z) \leq \mu_{w,z} +t_{\alpha}\sigma_{w,z}) -  {\bb P}_{\mc X^n}( \hat \gamma_m(w,z) \leq \mu_{w,z} +t_{\alpha}\sigma_{w,z})| \\
        & \leq \sup_{t \in \mathbb R}|\bb P_{\mc X^n}( t \leq \mu_{w,z} +t_{\alpha}\sigma_{w,z}) -\hat {\bb P}_{\hat {\mc X^n}|\mc X^n}( t\leq \mu_{w,z} +t_{\alpha}\sigma_{w,z})| \nonumber \\
        & +  |\bb P_{\mc X^n}( \gamma(w,z) \leq \mu_{w,z} +t_{\alpha}\sigma_{w,z}) -  {\bb P}_{\mc X^n}( \hat \gamma_m(w,z) \leq \mu_{w,z} +t_{\alpha}\sigma_{w,z})|  \\
        & =  D(\bb P_{\mc X^n}, \hat {\bb P}_{\hat {\mc X^n}|\mc X^n}) +  |\bb P_{\mc X^n}( \gamma(w,z) \leq \mu_{w,z} +t_{\alpha}\sigma_{w,z}) -  {\bb P}_{\mc X^n}( \hat \gamma_m(w,z) \leq \mu_{w,z} +t_{\alpha}\sigma_{w,z})| \\ 
        & =  D(\bb P_{\mc X^n}, \hat {\bb P}_{\hat {\mc X^n}|\mc X^n}) + |F_{\alpha}(\gamma(w,z)) - F_{\alpha}(\hat \gamma_m(w,z)) |
    \end{align}

Using the Lipschitz continuity of $F_\alpha$, we get
\begin{align}
        & |\bb P_{\mc X^n}( \gamma(w,z) \leq \mu_{w,z} +t_{\alpha}\sigma_{w,z}) -\hat {\bb P}_{\hat {\mc X^n}|\mc X^n}( \hat \gamma_m(w,z) \leq \mu_{w,z} +t_{\alpha}\sigma_{w,z})|\nonumber\\
        & \leq  D(\bb P_{\mc X^n}, \hat {\bb P}_{\hat {\mc X^n}|\mc X^n}) + L\hat \xi_m (w,z)  \label{eq:boot_bound}
    \end{align}

Where $\hat \xi_m(w,z) = |\gamma(w,z) - \hat \gamma_m(w,z)|$. Under the conditions of Theorem 6.1 in \citet{singh2024kernel}, we have
\begin{align}
    \bb P\left(\hat \xi_m(w,z) \geq C \ln(4/\delta)m^{-\frac{1}{2}\frac{c-1}{c+1/b}})\right) \leq 2\delta
\end{align}
where $(c,b) = \text{argmin}_{(\tilde c,b\tilde ) \in \{(b_1,c_1),(b_2,c_2)\}}(\frac{\tilde c-1}{\tilde c+1/\tilde b})$ and $c_1,b_1,c_2,b_2$ are defined as in \citet{singh2024kernel}. After-re-arranging, this gives:
\begin{align}
    \bb P(\hat \xi(w,z) \geq t) \leq 8 \exp(-\frac{t}{C}m^{\frac{1}{2}\frac{c-1}{c+1/b}}) \leq A\exp(-Btm^{\frac{1}{2}\frac{c-1}{c+1/b}})
\end{align}
for $A,B > 0$ appropriately chosen. Now, taking expectations of \cref{eq:boot_bound} we get
\begin{align}
    & \lVert \bb P_{\mc X^n}( \gamma(w,z) \leq \mu_{w,z} +t_{\alpha}\sigma_{w,z}) -\hat {\bb P}_{\hat {\mc X^n}|\mc X^n}( \hat \gamma_m(w,z) \leq \mu_{w,z} +t_{\alpha}\sigma_{w,z}) \rVert_{L_1(\bb P_{\mc X^m})} \\
            & \leq \lVert \mc D(\bb P_{\mc X^n}, \hat {\bb P}_{\hat {\mc X^n}|\mc X^n})\rVert_{L_1(\bb P_{\mc X^n})} + L\mathbb E\hat \xi_m (w,z)  \label{eq:boot_bound_2}
\end{align}
Since $\mathbb E\hat \xi_m (w,z)  = \int_0^\infty \bb P(\hat \xi_m(w,z) \geq t)dt$ we can write
\begin{align}
    \mathbb E\hat \xi_m (w,z) & \leq A\int_{0}^\infty\exp(-Btm^{\frac{1}{2}\frac{c-1}{c+1/b}})dt \\
    & = \frac{A}{B}m^{-\frac{1}{2}\frac{c-1}{c+1/b}}
\end{align}
Which implies the result.
\end{proof}

\begin{remark}
The constants $c$ and $b$ describe the smoothness and effective dimensionality of the underlying regression problems \citet{singh2024kernel}. Whilst sample splitting calibrates the posterior given only a subset of the dataset, we find sample splitting leads to better calibration performance  (see \cref{appendix:experiments}).
\end{remark}

\paragraph{Argmin consistency.}
\cref{thm:calibration} gives pointwise consistency of the calibration-loss estimator. In
the grid-search implementation used in our experiments, the candidate set
\(\mc B\) is finite, so pointwise consistency over \(\beta\in\mc B\) implies
uniform consistency of \(\widehat L(\beta)\) over \(\mc B\). Hence any minimizer
\(\hat\beta\in\arg\min_{\beta\in\mc B}\widehat L(\beta)\) satisfies
\[
L(\hat\beta) \to \min_{\beta\in\mc B} L(\beta),
\]
in probability. For compact infinite-dimensional candidate families, the same
conclusion follows under the usual stochastic-equicontinuity and continuity
conditions for M-estimation.

\subsubsection{Properties of Nuclear Dominant Kernels}

Here we present the proofs of \cref{lem:sup_r_tail_general} and \cref{lem:logconvex_decay} in \cref{app:bayesimp_nonstationarity}, which characterize the behaviour of the so-called `nuclear-dominant' kernel construction used in previous work \cite{flaxman2016bayesian, chau2021bayesimp} (and presented in \eqref{eq:nuc_dom_kern}).

\begin{proof}[Proof of \cref{lem:sup_r_tail_general}]
For nonnegative functions $a(t)$ and $b_{x'}(t)$,
$\sup_{x'}\int a(t)b_{x'}(t)d\nu(t) \le \int a(t)\sup_{x'}b_{x'}(t)d\nu(t)$ by monotonicity of integration. 
Applying this with $a(t)=k(x,t)\ge0$ and $b_{x'}(t)=k(t,x')$ yields
\[
\sup_{x'}r(x,x')\le \int k(x,t)\sup_{x'}k(t,x')\,d\nu(t).
\]
Since $k$ is stationary and bounded by $1$, $\sup_{x'}k(t,x')=k(0)=1$, giving
$\sup_{x'}r(x,x')\le \int k(x,t)\,d\nu(t)$. 
To show the limit, note that $\|x-t\|\to\infty$ for all fixed $t$ as $\|x-m\|\to\infty$, so 
$h(\|x-t\|)\to0$. 
Because $0\le h(\|x-t\|)\le1$ and $\nu$ is finite, by dominated convergence 
$\int k(x,t)\,d\nu(t)\to0$. 
This proves the result.
\end{proof}

\begin{proof}[Proof of \cref{lem:logconvex_decay}]
Given the form of $p$ and $k$, we have
\[r(x,x')  = C_p\int_{\bb R^d} h(\|x-t\|)h(\|x'-t\|)h(\|m-t\|)dt\]
Because $h$ is nonincreasing and log-convex, 
it satisfies the product inequality
\[
h(a)\,h(b)\le h\Big(\frac{a+b}{2}\Big)^2 \le h\Big(\frac{a+b}{2}\Big)
\quad\text{for all }a,b\ge0,
\]
where the last inequality holds since $h(r)\le1$.
By the triangle inequality, $\|x-m\|\le \|x-t\|+\|t-m\|$, hence
$\tfrac{1}{2}(\|x-t\|+\|t-m\|)\ge \tfrac{1}{2}\|x-m\|$,
and by monotonicity of $h$,
\[
h(\|x-t\|)\,h(\|t-m\|)\le h\Big(\tfrac{1}{2}\|x-m\|\Big).
\]
Therefore,
\[
C_p\int h(\|x-t\|)h(\|x'-t\|)h(\|t-m\|)\,dt
\;\le\;
h(\tfrac 12\|x-m\|)\int h(\|x'-t\|)\,dt
\;=\;
C_pC_p^{-1}\,h(\tfrac12\|x-m\|),
\]
where we note that $C_p  = \int h(\|x-m\|)> 0$ by assumption, and is fixed w.r.t. $m \in \bb R^d$. Since the bound is independent of $x'$, this yields the claimed bound \eqref{eq:sup-decay-logconvex}.
\end{proof}

\end{document}